\newtheorem{definition}{Definition} 
\newtheorem{theorem}{Theorem}
\newtheorem{proposition}{Proposition}
\newtheorem{remark}{Remark}
\newcommand{\proofpart}[2]{%
  \par
  \addvspace{\medskipamount}%
  \noindent\emph{Part #1: #2}\par\nobreak
  \addvspace{\smallskipamount}%
  \@afterheading
}
\DeclareMathOperator*{\argmin}{argmin}
\title{Subspace Detours Meet Gromov-Wasserstein}
\author{%
    Clément Bonet \\
    Univ. Bretagne Sud, LMBA \\ 
    F-56000 Vannes \\
    \texttt{clement.bonet@univ-ubs.fr} \\
    \And
    Nicolas Courty \\
    Univ. Bretagne Sud, IRISA \\ 
    F-56000 Vannes \\
    \texttt{nicolas.courty@irisa.fr} \\
    \And
    François Septier \\
    Univ. Bretagne Sud, LMBA \\ 
    F-56000 Vannes \\
    \texttt{francois.septier@univ-ubs.fr} \\
    \And 
    Lucas Drumetz \\
    IMT Atlantique, Lab-STICC \\ 
    F-29200 Brest \\
    \texttt{lucas.drumetz@imt-atlantique.fr}
}
\begin{document}

\maketitle

\begin{abstract}
In the context of optimal transport methods, the \emph{subspace detour} approach was recently presented by \citet{muzellec2019subspace}. It consists in building a nearly optimal transport plan in the measures space from an optimal transport plan in a wisely chosen subspace, onto which the original measures are projected. The contribution of this paper is to extend this category of methods to the Gromov-Wasserstein problem, which is a particular type of transport distance involving the inner geometry of the compared distributions. After deriving the associated formalism and properties, we also discuss a specific cost for which we can show connections with the Knothe-Rosenblatt rearrangement. We finally give an experimental illustration on a shape matching problem.
\end{abstract}

\section{Introduction}

Classical optimal transport (OT) has received lots of attention recently, in particular in Machine Learning for tasks such as generative networks \citep{arjovsky2017wasserstein} or domain adaptation \citep{courty2016optimal} to name a few. It generally relies on the Wasserstein distance, that builds an optimal coupling between distributions given their geometry. Yet, this metric lacks from potentially important properties, such as translation or rotation invariance, which can be useful when comparing shapes for instance \citep{memoli2011gromov, chowdhury2021quantized}, and cannot be used directly whenever the distributions lie in different metric spaces. In order to alleviate those problems, custom solutions have been proposed, such as  \citep{alvarez2019towards, cai2020distances}.

Apart from these works, another meaningful OT distance to tackle these problems is the Gromov-Wasserstein (GW) distance, originally proposed in \citet{memoli2007use, memoli2011gromov}. It is a distance between metric spaces and has several appealing properties such as geodesics or invariances \citep{sturm2012space}. Yet, the price to be paid lies in its computational complexity, which requires to solve a quadratic optimization problem with linear constraints. A recent line of work tends to compute approximations or relaxations of the original problem, in order to spread its use in more data intensive machine learning applications. For example, \citeauthor{peyre2016gromov} use an entropic regularization in order to iterate several Sinkhorn projections \citep{cuturi2013sinkhorn}.
A related recent method imposes coupling with low-rank constraints \citep{scetbon2021linear}.
\citeauthor{titouan2019sliced} proposed a sliced approach to approximate Gromov-Wasserstein.
\citeauthor{fatras2021minibatch} studied an estimator based on mini-batches.  In \citet{chowdhury2021quantized}, authors propose to partition the space and to solve the optimal transport problem between a subset of points, before finding a coupling between all the points.

In this work, we study the \emph{subspace detour} approach for Gromov-Wasserstein. This class of method was first proposed for the Wasserstein setting in \citet{muzellec2019subspace} and consists in choosing the optimal transport plan between projected measures on a subspace, before finding a coupling on the whole space between the original measures using disintegration.
Our main contribution is to derive the subspace detours between different subspaces and to apply it for GW costs. We derive some useful properties as well as closed-form solution between Gaussians. Interestingly enough, we also propose a separable quadratic cost for the GW problem that can be related with a triangular coupling, hence bridging the gap with Knothe-Rosenblatt (KR) rearrangements. 
Illustrations of the method are also given on a shape matching problem.

\section{Background}

In this section, we introduce all the necessary material to describe the subspace detour approach, from classical optimal transport and its connection to the Knothe-Rosenblatt rearrangement, before defining subspace optimal couplings via the gluing lemma and measure disintegration. Then, we introduce the Gromov-Wasserstein problem for which we will derive the subspace detour in the next sections.

\subsection{Classical optimal transport}

Let $\mu,\nu\in\mathcal{P}(\mathbb{R}^d)$ be two probability measures. The set of couplings between $\mu$ and $\nu$ is defined as 
\begin{equation*}
    \Pi(\mu,\nu) = \{\gamma\in\mathcal{P}(\mathbb{R}^d\times \mathbb{R}^d),\ \pi^1_\#\gamma=\mu,\ \pi^2_\#\gamma=\nu\}
\end{equation*}
where $\pi^1$ and $\pi^2$ are the projections on the first and second coordinate (\emph{i.e.} $\pi^1(x,y)=x$), and $\#$ is the push forward operator, defined such that
\begin{equation*}
    \forall A\in\mathcal{B}(\mathbb{R}^d),\ T_\#\mu(A)=\mu(T^{-1}(A)).
\end{equation*}

\paragraph{Kantorovitch problem}
There exists several types of coupling between probability measures and a non exhaustive list can be found in \citep{villani2008optimal}[Chapter 1]. Among them, the so called optimal coupling is the minimizer of the following Kantorovitch problem:
\begin{equation} \label{kantorovitch}
    \inf_{\gamma\in\Pi(\mu,\nu)}\ \int c(x,y)\mathrm{d}\gamma(x,y)
\end{equation}
with $c$ some cost function. When $c(x,y)=\|x-y\|_2^2$, then it defines the Wasserstein distance
\begin{equation} \label{wasserstein}
    W_2^2(\mu,\nu) = \inf_{\gamma\in\Pi(\mu,\nu)}\ \int \|x-y\|_2^2\ \mathrm{d}\gamma(x,y).
\end{equation}
The Kantorovitch problem \eqref{kantorovitch} is known to admit a solution when $c$ is nonnegative and lower semi-continuous \citep{santambrogio2015optimal}[Theorem 1.7]. When the optimal coupling is of the form $\gamma=(Id,T)_\#\mu$ with $T$ some deterministic map such that $T_\#\mu=\nu$, $T$ is called the Monge map.

In one dimension, with $\mu$ atomless, the solution to \eqref{wasserstein} is a deterministic coupling of the form \citep{santambrogio2015optimal}[Theorem 2.5]
\begin{equation} \label{increasing_rearragnement}
    T=F_\nu^{-1}\circ F_\mu,
\end{equation}
where $F_\mu$ is the cumulative distribution function of $\mu$ and $F_\nu^{-1}$ the quantile function of $\nu$.
This map is also known as the increasing rearrangement.

\paragraph{Knothe-Rosenblatt rearrangement} \label{knothe}

Another interesting coupling is the Knothe-Rosenblatt rearrangement, which takes advantage of the increasing rearrangement in one dimension by iterating over the dimension and disintegrating. Concatenating all the increasing rearrangements between the conditional probabilities, we obtain the KR rearrangement, which turns out to be a nondecreasing triangular map (\emph{i.e.} $T:\mathbb{R}^d\to\mathbb{R}^d$, for all $x\in\mathbb{R}^d$, $T(x)=(T_1(x_1),\dots,T_j(x_1,\dots,x_j),\dots,T_d(x))$ and for all $j$, $T_j$ is nondecreasing with respect to $x_j$), and a deterministic coupling (\emph{i.e.} $T_\#\mu=\nu$) \citep{villani2008optimal, santambrogio2015optimal, jaini2019sum}.

\citeauthor{carlier2010knothe} made a connection between this coupling and optimal transport by showing that it can be obtained as the limit of optimal transport plans for a degenerated cost 
\begin{equation*} \label{eq:degenerated_cost}
    c_t(x,y)=\sum_{i=1}^d\lambda_i(t)(x_i-y_i)^2,
\end{equation*}
where for all $i\in\{1,\dots,d\}$, $t>0$, $\lambda_i(t)>0$ and for all $i\ge 2$, $\frac{\lambda_i(t)}{\lambda_{i-1}(t)}\xrightarrow[t\to 0]{}0$. This cost can be recast as in \citep{bonnotte2013unidimensional} as $c_t(x,y)=(x-y)^T A_t (x-y)$ where $A_t = \mathrm{diag}(\lambda_1(t),\dots,\lambda_d(t))$.
This formalizes into the following Theorem:
\begin{theorem}[\cite{carlier2010knothe, santambrogio2015optimal}] \label{th_KnotheToBrenier}
    Let $\mu$ and $\nu$ be two absolutely continuous measures on $\mathbb{R}^d$, with compact supports. Let $\gamma_t$ be an optimal transport plan for the cost $c_t$, let $T_K$ be the Knothe-Rosenblatt map between $\mu$ and $\nu$, and $\gamma_K = (Id\times T_K)_\#\mu$ the associated transport plan. Then, we have $\gamma_t\xrightarrow[t\rightarrow 0]{\mathcal{D}}\gamma_K$. Moreover, if $\gamma_t$ are induced by transport maps $T_t$, then $T_t$ converges in $L^2(\mu)$ when t tends to zero to the Knothe-Rosenblatt rearrangement.
\end{theorem}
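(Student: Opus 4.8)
\noindent\emph{Proof sketch.}
The plan is to argue by compactness together with a uniqueness statement, combined with an induction on the dimension $d$ that mirrors the recursive construction of the Knothe--Rosenblatt map. Since $\mu$ and $\nu$ have compact supports, $\Pi(\mu,\nu)$ is tight, hence weakly sequentially compact; so along any sequence $t_n\to 0$ the optimal plans $\gamma_{t_n}$ admit a weak cluster point $\gamma_0\in\Pi(\mu,\nu)$. It therefore suffices to show that every such cluster point equals $\gamma_K$, which then forces the whole family $(\gamma_t)$ to converge weakly to $\gamma_K$.

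For the first coordinate, I would use optimality of $\gamma_t$ against the competitor $\gamma_K$, i.e. $\int c_t\,\mathrm{d}\gamma_t\le\int c_t\,\mathrm{d}\gamma_K$. Dividing by $\lambda_1(t)$ and using that all the quadratic integrands are uniformly bounded on the compact supports while $\lambda_i(t)/\lambda_1(t)\to 0$ for $i\ge 2$, passing to the limit along $t_n$ gives $\int(x_1-y_1)^2\,\mathrm{d}\gamma_0\le \int(x_1-y_1)^2\,\mathrm{d}\gamma_K = W_2^2(\mu_1,\nu_1)$, where $\mu_1,\nu_1$ are the first marginals and I used lower semicontinuity of $\gamma\mapsto\int(x_1-y_1)^2\,\mathrm{d}\gamma$ together with nonnegativity of the dropped terms. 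The first-coordinate projection of $\gamma_0$ is a coupling of $\mu_1$ and $\nu_1$, so the reverse inequality is automatic, whence this projection is optimal; as $\mu_1$ is atomless, the $1$-dimensional optimal coupling is unique and induced by the increasing rearrangement $T_1^K=F_{\nu_1}^{-1}\circ F_{\mu_1}$, the first component of $T_K$. Hence $\gamma_0$ is concentrated on $\{y_1=T_1^K(x_1)\}$, and the same computation (the correction term being $O(\max_{i\ge 2}\lambda_i(t)/\lambda_1(t))$) shows the full limit $\int(x_1-y_1)^2\,\mathrm{d}\gamma_t\to W_2^2(\mu_1,\nu_1)$.

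The induction step is where I expect the main difficulty. Disintegrating $\mu$ and $\nu$ with respect to their first coordinates and $\gamma_0$ with respect to $x_1$ (legitimate since $\gamma_0$ is carried by a graph over $x_1$), one writes $\gamma_0$ as a mixture over $x_1\sim\mu_1$ of couplings $\gamma_{0,x_1}\in\Pi(\mu_{x_1},\nu_{T_1^K(x_1)})$ of the conditionals, which are themselves absolutely continuous with compact support on $\mathbb{R}^{d-1}$ for $\mu_1$-a.e. $x_1$ by Fubini. The crux is to show that each $\gamma_{0,x_1}$ is a weak cluster point of optimal plans for the degenerate cost $\tilde c_t(x',y')=\sum_{i=2}^d\lambda_i(t)(x_i-y_i)^2$ between $\mu_{x_1}$ and $\nu_{T_1^K(x_1)}$: one compares $\gamma_t$ against competitors built by gluing the optimal first-coordinate coupling $(Id,T_1^K)_\#\mu_1$ with arbitrary conditional couplings, uses that $\int(x_1-y_1)^2\,\mathrm{d}\gamma_t$ is already minimal up to $o(1)$, divides by $\lambda_2(t)$, and lets $t\to 0$. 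The delicate point is precisely that $\gamma_t$ carries the correct first-coordinate structure only asymptotically, so one needs quantitative control of the first-coordinate discrepancy and a measurable-selection argument for the disintegrations of the cluster points. Granting this, since $\lambda_i(t)/\lambda_2(t)\to 0$ for $i\ge 3$, the induction hypothesis in dimension $d-1$ identifies $\gamma_{0,x_1}$ with the Knothe--Rosenblatt rearrangement between $\mu_{x_1}$ and $\nu_{T_1^K(x_1)}$ for a.e. $x_1$; reassembling these couplings through the disintegration is exactly the recursive definition of $\gamma_K$, so $\gamma_0=\gamma_K$. The base case $d=1$ is immediate, since there every transport plan for $c_t$ is the monotone one, which is $\gamma_K$.

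Finally, for the statement on maps: if $\gamma_t=(Id\times T_t)_\#\mu$, then $\gamma_t$ converges weakly to $\gamma_K=(Id\times T_K)_\#\mu$, a plan induced by a deterministic map; this is the standard criterion ensuring $T_t\to T_K$ in $\mu$-measure, and since every $T_t$ takes values in the compact support of $\nu$, dominated convergence upgrades this to $\|T_t-T_K\|_{L^2(\mu)}\to 0$. \hfill$\qed$
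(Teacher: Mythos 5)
Your overall skeleton --- tightness of $\Pi(\mu,\nu)$ and extraction of a weak cluster point $\gamma_0$, identification of the first-coordinate marginal via the comparison $\int c_t\,\mathrm{d}\gamma_t\le\int c_t\,\mathrm{d}\gamma_K$ and uniqueness of the monotone $1$D plan, then induction on the dimension through disintegration, and finally the upgrade from weak convergence of plans to $L^2(\mu)$ convergence of maps --- is exactly the strategy of the proof this theorem is cited from (and which the paper adapts, step for step, in its proof of Theorem~\ref{ThGWKR} in Appendix~\ref{ProofThGWKR}). Part~1 and Part~4 of your sketch are sound.

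The genuine gap is in your induction step, and you flag it yourself before ``granting'' it: you propose to show that for a.e.\ $x_1$ the conditional $\gamma_{0,x_1}$ is a weak cluster point of \emph{optimal} plans for the reduced cost $\tilde c_t$ between the conditionals $\mu_{x_1}$ and $\nu_{T_1^K(x_1)}$. This localization does not follow from what you have: optimality of $\gamma_t$ only controls the \emph{integrated} cost, and there is no straightforward way to distribute the $o(\lambda_2(t))$ error over a.e.\ fiber, nor to carry out the measurable selection you invoke. The standard proof (and the paper's analogue) avoids fiberwise asymptotic optimality entirely. At stage $\ell$ one builds an explicit competitor $\gamma_K^t=\eta_{t,\ell}\otimes\gamma_K^{\ell:d\,|\,1:\ell-1}$, where $\eta_{t,\ell}$ is optimal for the cost truncated to the first $\ell-1$ coordinates and the second factor is the KR conditional kernel; comparing $\gamma_t$ with $\gamma_K^t$, subtracting the first $\ell-1$ terms, dividing by $\prod_{i<\ell}\lambda_i(t)$ and letting $t\to0$ yields only the \emph{integrated} inequality $\iint(x_\ell-y_\ell)^2\,\mathrm{d}\gamma_0\le\iint(x_\ell-y_\ell)^2\,\mathrm{d}\gamma_K$. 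One then disintegrates both sides over $\gamma_0^{1:\ell-1}=\gamma_K^{1:\ell-1}$ (known from the induction hypothesis), observes that the KR conditional is optimal on each fiber so the reverse inequality holds pointwise, and concludes that a nonnegative integrand has zero integral, hence vanishes a.e.; fiberwise uniqueness of the monotone plan then identifies $\gamma_0^{\ell|1:\ell-1}$ with $\gamma_K^{\ell|1:\ell-1}$. A further step you omit, needed before invoking fiberwise uniqueness, is to check that the disintegration of $\gamma_0$ over $(x_{1:\ell-1},y_{1:\ell-1})$ has the correct marginals $\mu^{\ell|1:\ell-1}$ and $\nu^{\ell|1:\ell-1}$; this is done with product test functions $\xi(x_{1:\ell-1},y_{1:\ell-1})\phi(x_\ell)$ and the fact that $\gamma_0^{1:\ell-1}$ is carried by the graph of the partial KR map. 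With the induction step repaired along these lines your argument becomes the standard proof.
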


\subsection{Subspace detours and disintegration} \label{subspace_detour}

\citeauthor{muzellec2019subspace} proposed another OT problem by optimizing over the couplings which share a measure on a subspace. More precisely, they defined subspace optimal plans for which the shared measure is the OT plan between projected measures.
\begin{definition}[Subspace-Optimal Plans \citep{muzellec2019subspace} Definition 1]
    Let $\mu,\nu\in\mathcal{P}_2(\mathbb{R}^d)$ and let $E\subset \mathbb{R}^d$ be a $k$-dimensional subspace. Let $\gamma_{E}^*$ be an OT plan between $\mu_E=\pi^E_\#\mu$ and $\nu_E=\pi^E_\#\nu$ (with $\pi^E$ the orthogonal projection on $E$). Then the set of $E$-optimal plans between $\mu$ and $\nu$ is defined as $\Pi_E(\mu,\nu)=\{\gamma\in\Pi(\mu,\nu)|\ (\pi^E,\pi^E)_\#\gamma=\gamma_{E}^*\}$.
\end{definition}

By the Gluing lemma \citep{villani2008optimal}, it is possible to construct a coupling $\gamma\in\Pi(\mu,\nu)$ such that $(\pi^E,\pi^E)_\#\gamma=\gamma_{E}^*$. A way to do that is to rely on disintegration.

\paragraph{Disintegration}
Let $(Y,\mathcal{Y})$ and $(Z,\mathcal{Z})$ be measurable spaces, and $(X,\mathcal{X})=(Y\times Z,\mathcal{Y}\otimes\mathcal{Z})$ the product measurable space. Then, for $\mu\in\mathcal{P}(X)$, we denote $\mu_Y = \pi^Y_\#\mu$ and $\mu_Z=\pi^Z_\#\mu$ the marginals, where $\pi^Y$ (respectively $\pi^Z$) is the projection on $Y$ (respectively Z). Then, a family $(K(y,\cdot))_{y\in\mathcal{Y}}$ is a disintegration of $\mu$ if for all $y\in Y$, $K(y,\cdot)$ is a measure on $Z$, for all $A\in\mathcal{Z}$, $K(\cdot,A)$ is measurable and
\begin{equation*}
    \forall \phi\in C(X),\ \int_{Y\times Z} \phi(y,z)\mathrm{d}\mu(y,z) = \int_Y\int_Z \phi(y,z)K(y,\mathrm{d}z)\mathrm{d}\mu_Y(y),
\end{equation*}
where $C(X)$ is the set of continuous functions on $X$. We can note $\mu=\mu_Y\otimes K$. $K$ is a probability kernel if for all $y\in Y$, $K(y,Z)=1$. The disintegration of a measure actually corresponds to conditional laws in the context of probabilities. This concept will allow us to obtain measures on the whole space from marginals on subspaces.

In the case where $X=\mathbb{R}^d$, which is the main case of interest in the remainder of the paper, we have existence and uniqueness of the disintegration (see Box 2.2 of \citet{santambrogio2015optimal} or Chapter 5 of \citet{ambrosio2008gradient} for the more general case).

\paragraph{Coupling on the whole set}

Let's note $\mu_{E^\bot|E}$ and $\nu_{E^\bot|E}$ the disintegrated measures on the orthogonal spaces (\emph{i.e.} $\mu=\mu_E \otimes \mu_{E^\bot|E}$ and $\nu=\nu_E\otimes \nu_{E^\bot|E}$). Then, to get a transport plan between the two originals measures on the whole space, we can look for another coupling between disintegrated measures $\mu_{E^\bot|E}$ and $\nu_{E^\bot|E}$. In particular, two such couplings are proposed in \citet{muzellec2019subspace}, the Monge-Independent (MI) plan
\begin{equation*}
    \pi_{\mathrm{MI}} = \gamma_{E}^*\otimes (\mu_{E^\bot|E}\otimes \nu_{E^\bot |E})
\end{equation*}
where we take the independent coupling between $\mu_{E^\bot|E}(x_E,\cdot)$ and $\nu_{E^\bot|E}(y_E,\cdot)$ for $\gamma_{E}^*$ almost every $(x_E,y_E)$, and the Monge-Knothe (MK) plan
\begin{equation*}
    \pi_{\mathrm{MK}} = \gamma_{E}^*\otimes \gamma_{E^\bot|E}^*
\end{equation*}
where $\gamma_{E^\bot |E}^*((x_E,y_E),\cdot)$ is an optimal plan between $\mu_{E^\bot|E}(x_E,\cdot)$ and $\nu_{E^\bot|E}(y_E,\cdot)$ for $\gamma_{E}^*$ almost every $(x_E,y_E)$.
\citeauthor{muzellec2019subspace} observed that MI is more adapted to noisy environments since it only computes the OT plan on the subspace. MK is more suited for applications where we want to prioritize some subspace but where all the directions still contain relevant informations.

\subsection{Gromov-Wasserstein}
 Formally, the Gromov-Wasserstein distance allows to compare metric measure spaces (mm-space), triplets $(X,d_X,\mu_X)$ and $(Y,d_Y,\mu_Y)$ where $(X,d_X)$ and $(Y,d_Y)$ are complete separable metric spaces and $\mu_X$, $\mu_Y$ Borel probability measures on $X$ and $Y$ \citep{sturm2012space}, by computing
\begin{equation*}
    GW(X,Y) = \inf_{\gamma\in\Pi(\mu_X,\mu_Y)}\iint L(d_X(x,x'),d_Y(y,y')) \mathrm{d}\gamma(x,y)\mathrm{d}\gamma(x',y')
\end{equation*}
where $L$ is some loss on $\mathbb{R}$. It has actually been extended to other spaces by replacing the distances by cost functions $c_X$ and $c_Y$, as \emph{e.g.} in \citep{chowdhury2019gromov}. Furthermore, it has many appealing properties such as having invariances (which depend on the costs).

\citeauthor{vayer2020contribution} studied notably this problem in the setting where $X$ and $Y$ are Euclidean spaces, with $L(x,y)=(x-y)^2$ and $c(x,x')=\langle x,x'\rangle$ or $c(x,x')=\|x-x'\|_2^2$. In particular, let $\mu\in\mathcal{P}(\mathbb{R}^p)$, $\nu\in\mathcal{P}(\mathbb{R}^q)$, the inner-GW problem is defined as
\begin{equation} \label{gw_ip}
    \mathrm{InnerGW}(\mu,\nu) = \inf_{\gamma\in\Pi(\mu,\nu)} \iint(\langle x,x'\rangle_p-\langle y,y'\rangle_q)^2\ \mathrm{d}\gamma(x,y)\mathrm{d}\gamma(x',y').
\end{equation}
For this problem, a closed-form in one dimension can be found:
\begin{theorem}[\cite{vayer2020contribution} Theorem 4.2.4] \label{gw_ip_1d}
    Let $\mu,\nu\in\mathcal{P}(\mathbb{R})$, with $\mu$ absolutely continuous with respect to the Lebesgue measure. Let $F_\mu^\nearrow(x)\coloneqq F_\mu(x) =\mu(]-\infty,x])$ be the cumulative distribution function and $F_\mu^\searrow(x)=\mu(]-x,+\infty[)$ the anti-cumulative distribution function. Let $T_{asc}(x)=F_\nu^{-1}(F_\mu^\nearrow(x))$ and $T_{desc}(x)=F_\nu^{-1}(F_\mu^\searrow(-x))$. Then, an optimal solution of \eqref{gw_ip} is achieved either by $\gamma=(Id\times T_{asc})_\#\mu$ or by $\gamma=(Id\times T_{desc})_\#\mu$.
\end{theorem}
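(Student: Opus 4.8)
The plan is to expand the quadratic objective, kill the two terms that do not depend on the coupling using the marginal constraints, and thereby reduce the problem to extremizing a \emph{linear} functional of $\gamma$ over $\Pi(\mu,\nu)$.

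\textbf{Step 1: reduction to a correlation-type problem.} In dimension one, $\langle x,x'\rangle = xx'$, and $(xx'-yy')^2 = x^2x'^2 - 2xx'yy' + y^2y'^2$. Integrating against $\gamma\otimes\gamma$ and using $\pi^1_\#\gamma=\mu$, $\pi^2_\#\gamma=\nu$ (so that $(x,y)$ and $(x',y')$ behave as i.i.d.\ pairs), the first and third terms contribute $\left(\int x^2\,\mathrm{d}\mu\right)^2$ and $\left(\int y^2\,\mathrm{d}\nu\right)^2$, which do not depend on $\gamma$, while the cross term contributes $2\left(\int xy\,\mathrm{d}\gamma\right)^2$. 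If $\mu$ or $\nu$ has infinite second moment the integrand is nonnegative with infinite integral for every $\gamma$ and the statement is vacuous; otherwise
\begin{equation*}
    \mathrm{InnerGW}(\mu,\nu) = \Big(\textstyle\int x^2\,\mathrm{d}\mu\Big)^2 + \Big(\textstyle\int y^2\,\mathrm{d}\nu\Big)^2 - 2\sup_{\gamma\in\Pi(\mu,\nu)}\Big(\textstyle\int xy\,\mathrm{d}\gamma\Big)^2 ,
\end{equation*}
so minimizing \eqref{gw_ip} is the same as maximizing $\big|\int xy\,\mathrm{d}\gamma\big|$ over $\Pi(\mu,\nu)$.

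\textbf{Step 2: the extrema are monotone couplings.} The map $\gamma\mapsto\int xy\,\mathrm{d}\gamma$ is affine on the convex set $\Pi(\mu,\nu)$, hence its range is an interval $[m,M]$ and $\sup_\gamma\big|\int xy\,\mathrm{d}\gamma\big|=\max(|m|,|M|)$ is attained at one of the two endpoints. Since $\|x-y\|_2^2 = x^2-2xy+y^2$, maximizing $\int xy\,\mathrm{d}\gamma$ is equivalent to minimizing the Wasserstein cost \eqref{wasserstein}, so by the one-dimensional result the maximum $M$ is attained at $\gamma_{asc}=(Id\times T_{asc})_\#\mu$ with $T_{asc}=F_\nu^{-1}\circ F_\mu$ as in \eqref{increasing_rearragnement}. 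For the minimum, write $\inf_{\gamma\in\Pi(\mu,\nu)}\int xy\,\mathrm{d}\gamma = -\sup_{\tilde\gamma\in\Pi(\mu,\tilde\nu)}\int x\tilde y\,\mathrm{d}\tilde\gamma$, where $\tilde\nu$ is the pushforward of $\nu$ by $y\mapsto -y$; applying the same increasing-rearrangement fact between $\mu$ and $\tilde\nu$ and composing the resulting map back with the reflection, using $F_{\tilde\nu}^{-1}(p) = -F_\nu^{-1}(1-p)$ and $F_\mu^\searrow(-x)=\mu(]x,+\infty[)=1-F_\mu(x)$ (valid since $\mu$ is atomless), one gets that $m$ is attained at $\gamma_{desc}=(Id\times T_{desc})_\#\mu$ with $T_{desc}(x)=F_\nu^{-1}\big(F_\mu^\searrow(-x)\big)$.

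\textbf{Step 3: conclusion.} Combining, $\sup_{\gamma\in\Pi(\mu,\nu)}\big|\int xy\,\mathrm{d}\gamma\big| = \max\big(\big|\int xy\,\mathrm{d}\gamma_{asc}\big|,\ \big|\int xy\,\mathrm{d}\gamma_{desc}\big|\big)$, so by Step 1 the infimum in \eqref{gw_ip} is attained at whichever of $\gamma_{asc}$, $\gamma_{desc}$ gives the larger value of $\big|\int xy\,\mathrm{d}\gamma\big|$.

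The reduction in Step 1 and the affineness argument are routine; the part that needs care is Step 2, specifically the bookkeeping for $T_{desc}$: passing through the reflection of $\nu$, relating its quantile function to $F_\nu^{-1}$, and matching the outcome to the anti-cumulative $F_\mu^\searrow$ (attention to left/right continuity of the quantile maps and to the atomlessness of $\mu$, which is what guarantees $T_{asc}$ and $T_{desc}$ push $\mu$ exactly onto $\nu$). One should also note at the outset that finiteness of the objective already requires only second moments, which is what makes the expansion in Step 1 legitimate.
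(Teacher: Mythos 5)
The paper does not prove this statement: it is quoted verbatim as Theorem 4.2.4 of the cited thesis of Vayer, so there is no in-paper proof to compare against. Your argument is, as far as I can tell, the standard (and correct) one for this result, and it is essentially the route taken in the cited reference: expand $(xx'-yy')^2$, use the marginal constraints to see that only $-2\big(\int xy\,\mathrm{d}\gamma\big)^2$ depends on $\gamma$, observe that $\gamma\mapsto\int xy\,\mathrm{d}\gamma$ is linear so its extrema over $\Pi(\mu,\nu)$ are the Hoeffding--Fr\'echet extremes, identify the maximizer with the increasing rearrangement via the $W_2$ problem and the minimizer with the decreasing rearrangement via reflection of $\nu$, and take whichever gives the larger $\big|\int xy\,\mathrm{d}\gamma\big|$. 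The bookkeeping for $T_{desc}$ ($F_{\tilde\nu}^{-1}(p)=-F_\nu^{-1}(1-p)$ and $F_\mu^\searrow(-x)=1-F_\mu(x)$ for atomless $\mu$) is right. One inaccuracy worth fixing: your claim that when a second moment is infinite ``the integrand is nonnegative with infinite integral for every $\gamma$'' is false --- take $\mu=\nu$ a heavy-tailed law and $\gamma$ the diagonal coupling, for which the objective is $0$. The honest statement is that your Step 1 decomposition is an $\infty-\infty$ in that regime and the argument simply requires finite second moments (which is the implicit standing assumption in the source); under that assumption everything you wrote goes through.
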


\section{Subspace detours for $GW$}

In this section, we propose to extend subspace detours from \citet{muzellec2019subspace} with Gromov-Wasserstein costs. We show that we can even take subspaces of different dimensions, and still obtain a coupling on the whole space using the Independent or the Monge-Knothe coupling. Then, we derive some properties analogously to \citet{muzellec2019subspace}, as well as some closed-form solutions between Gaussians.

\subsection{Motivations}

First, we adapt the definition of subspace optimal plans for difference subspaces. Indeed, the Gromov-Wasserstein distance being able to compare data on spaces of different dimensions, we can argue that the main information would not be in the same subspace for both datasets. For example, by rotating a dataset, we would change the subspace of interest and most likely lose information as we can see on Figure \ref{fig:GW}. 
On this illustration, we use as a source one moon of the Two moons dataset, and obtain a target by rotating it by an angle of $\frac{\pi}{2}$. As GW with $c(x,x')=\|x-x'\|_2^2$ is invariant with respect to isometries, we are able to recover the exact correspondence between the points. However, when choosing a subspace to project both the source and target, we completely lose the optimal coupling between them. Nonetheless, by choosing more wisely one subspace by dataset (using here the first component of the principal component analysis (PCA) decomposition), we find the right coupling. This illustration underlines the idea that the choice of both subspaces is important. A way of choosing the subspaces could be to project on the subspace containing the more information for each dataset using \emph{e.g.} PCA independently on each distribution. \citeauthor{muzellec2019subspace} proposed to optimize the optimal transport cost with respect to an orthonormal matrix with a projected gradient descent, which could be extended to an optimization over two orthonormal matrices in our context.

\begin{figure}[t]
    \centering
    \includegraphics[scale=0.35]{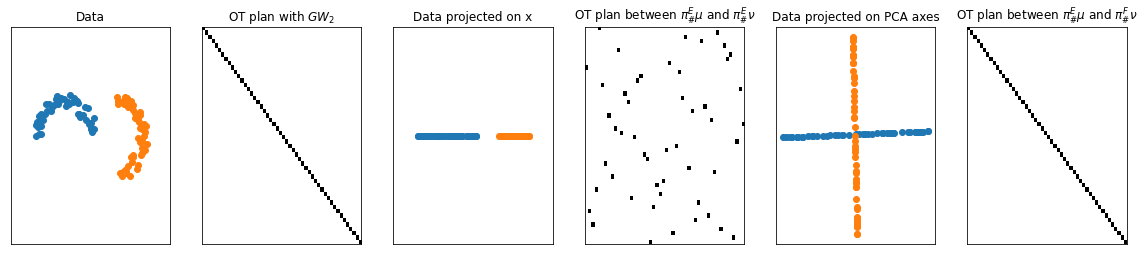}
    \caption{From left to right: Data (moons), OT plan obtained with GW for $c(x,x')=\|x-x'\|_2^2$, Data projected on the 1st axis, OT plan obtained between the projected measures, Data projected on their 1st PCA component, OT plan obtained between the the projected measures}
    \label{fig:GW}
\end{figure}

By allowing to have different subspaces, we get the following definition of subspace optimal plans.

\begin{definition}
    Let $\mu\in\mathcal{P}_2(\mathbb{R}^p)$, $\nu\in\mathcal{P}_2(\mathbb{R}^q)$, $E$ be a $k$-dimensional subspace of $\mathbb{R}^p$ and $F$ a $k'$-dimensional subspace of $\mathbb{R}^q$. Let $\gamma_{E,F}^*$ be an optimal transport plan for $GW$ between $\mu_E=\pi^E_\#\mu$ and $\nu_F=\pi^F_\#\nu$ (with $\pi^E$ (resp. $\pi^F$) the orthogonal projection on $E$ (resp. $F$)). Then the set of $(E,F)$-optimal plans between $\mu$ and $\nu$ is defined as $\Pi_{E,F}(\mu,\nu)=\{\gamma\in\Pi(\mu,\nu)|\ (\pi^E,\pi^F)_\#\gamma=\gamma_{E,F}^*\}$.
\end{definition}

Analogously to \citet{muzellec2019subspace} (Section \ref{subspace_detour}), we can obtain from $\gamma_{E,F}^*$ a coupling on the whole set by either defining the Monge-Independent plan $\pi_{\mathrm{MI}} = \gamma_{E,F}^*\otimes (\mu_{E^\bot|E}\otimes \nu_{F^\bot |F})$ or the Monge-Knothe plan $\pi_{\mathrm{MK}} = \gamma_{E,F}^*\otimes \gamma_{E^\bot\times F^\bot|E\times F}^*$ where OT plans are taken with some OT cost such as \emph{e.g.} $GW$.

\subsection{Properties}

Following \citet{muzellec2019subspace}, the Monge-Knothe coupling is the optimal measure among the subspace optimal plans for the corresponding cost. We show it for the Gromov-Wasserstein distance with cost $L$, which is a direct transposition of Proposition 1 in \citet{muzellec2019subspace}.

\begin{proposition} \label{prop1}
    Let $\mu\in\mathcal{P}(\mathbb{R}^p)$ and $\nu\in\mathcal{P}(\mathbb{R}^q)$, $E\subset \mathbb{R}^p$, $F\subset\mathbb{R}^q$, $\pi_{\mathrm{MK}}=\gamma_{E,F}^*\otimes \gamma_{E^\bot\times F^\bot|E\times F}^*$ where $\gamma^*$ are optimal for the Gromov-Wasserstein problem with cost $L$. Then we have:
    \begin{equation*}
        \pi_{\mathrm{MK}}\in\argmin_{\gamma\in\Pi_{E,F}(\mu,\nu)} \iint L(x,x',y,y') \mathrm{d}\gamma(x,y)\mathrm{d}\gamma(x',y').
    \end{equation*}
\end{proposition}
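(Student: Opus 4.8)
The plan is to mirror the proof of Proposition~1 in \citet{muzellec2019subspace}, adapted to the two-subspace setting and to the bilinear Gromov-Wasserstein objective. Fix an arbitrary $\gamma\in\Pi_{E,F}(\mu,\nu)$. Identifying $\mathbb{R}^p\times\mathbb{R}^q$ with $(E\times F)\times(E^\bot\times F^\bot)$ and invoking existence and uniqueness of disintegration on $\mathbb{R}^d$ (Box~2.2 of \citet{santambrogio2015optimal}), write $\gamma=\gamma_{E,F}^*\otimes K$ for a probability kernel $K$ from $E\times F$ to $E^\bot\times F^\bot$. The first step is to show that the constraints $\pi^1_\#\gamma=\mu$ and $\pi^2_\#\gamma=\nu$, combined with $(\pi^E,\pi^F)_\#\gamma=\gamma_{E,F}^*$ and the uniqueness of the disintegrations $\mu=\mu_E\otimes\mu_{E^\bot|E}$ and $\nu=\nu_F\otimes\nu_{F^\bot|F}$, force $K\big((x_E,y_F),\cdot\big)\in\Pi\big(\mu_{E^\bot|E}(x_E,\cdot),\nu_{F^\bot|F}(y_F,\cdot)\big)$ for $\gamma_{E,F}^*$-almost every $(x_E,y_F)$. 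By construction, $\pi_{\mathrm{MK}}$ is the particular element of $\Pi_{E,F}(\mu,\nu)$ obtained by taking for $K$ the kernel $\gamma_{E^\bot\times F^\bot|E\times F}^*$ of $GW$-optimal couplings between these conditionals.

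Next I would substitute $\gamma=\gamma_{E,F}^*\otimes K$ into $\iint L(x,x',y,y')\,\mathrm{d}\gamma(x,y)\,\mathrm{d}\gamma(x',y')$ and exhibit a decomposition of the objective into a term depending only on $\gamma_{E,F}^*$ — hence constant over $\Pi_{E,F}(\mu,\nu)$ — plus a term that is an average, against $\gamma_{E,F}^*\otimes\gamma_{E,F}^*$, of a $GW$-type functional of the conditional couplings $K\big((x_E,y_F),\cdot\big)$ and $K\big((x'_E,y'_F),\cdot\big)$. For the inner and squared-distance costs this rests on $c_X$ and $c_Y$ splitting into an $E\times F$-part and an $E^\bot\times F^\bot$-part. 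Granting such a decomposition, since $\gamma_{E^\bot\times F^\bot|E\times F}^*$ minimizes the conditional $GW$ functional pointwise over $\Pi\big(\mu_{E^\bot|E}(x_E,\cdot),\nu_{F^\bot|F}(y_F,\cdot)\big)$, integrating back against $\gamma_{E,F}^*\otimes\gamma_{E,F}^*$ shows no $\gamma\in\Pi_{E,F}(\mu,\nu)$ beats $\pi_{\mathrm{MK}}$, giving the claim.

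The two structural reductions above are where the work is. The first — that the disintegration of a generic $\gamma\in\Pi_{E,F}(\mu,\nu)$ is genuinely a family of couplings of the conditionals — is the exact analogue of the corresponding step in \citet{muzellec2019subspace} and is the one to handle carefully: it is transparent when $\gamma_{E,F}^*$ is induced by an essentially bijective map between $\mu_E$ and $\nu_F$ (\emph{e.g.} in the absolutely continuous case) and more delicate otherwise. The second is that, unlike the linear cost $\|x-y\|_2^2=\|x_E-y_E\|_2^2+\|x_{E^\bot}-y_{E^\bot}\|_2^2$ of \citet{muzellec2019subspace}, where separability and linearity of the integral make the splitting immediate, the $GW$ objective is a double integral in $\gamma$, so the decomposition must isolate, for $\gamma_{E,F}^*\otimes\gamma_{E,F}^*$-almost every pair of base points, a self-contained $GW$ problem between the conditional measures; obtaining this from the separable structure of the ground costs is the step with no counterpart in the linear optimal transport setting, and I expect it to be the main obstacle. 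Once both are in place, the remainder is bookkeeping.
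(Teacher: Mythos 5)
Your overall architecture---disintegrate an arbitrary $\gamma\in\Pi_{E,F}(\mu,\nu)$ over the common base measure $\gamma_{E,F}^*$, check that the resulting kernel couples the conditionals $\mu_{E^\bot|E}(x_E,\cdot)$ and $\nu_{F^\bot|F}(y_F,\cdot)$, invoke optimality of the Monge--Knothe kernel, and reintegrate---is the same as the paper's, and your first structural reduction is fine.

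The genuine gap is the second reduction, which you explicitly grant rather than prove, and which in the form you state it is not available. For the costs at issue, e.g.\ $L=(\langle x,x'\rangle_p-\langle y,y'\rangle_q)^2$ with $\langle x,x'\rangle_p=\langle x_E,x'_E\rangle+\langle x_{E^\bot},x'_{E^\bot}\rangle$, expanding the square produces cross terms between the $E\times F$ coordinates and the $E^\bot\times F^\bot$ coordinates, so the objective does \emph{not} split into a term depending only on $\gamma_{E,F}^*$ plus an average of self-contained conditional $GW$ functionals of the orthogonal variables alone; the conditional problems necessarily retain the frozen base coordinates inside $L$. The paper never attempts this decomposition: it writes the double integral as an iterated integral $\iint\bigl(\iint L\, K\otimes K\bigr)\,\mathrm{d}\gamma_{E,F}^*\,\mathrm{d}\gamma_{E,F}^*$ with $L$ left intact, and compares the inner integral---with the base points $(x_E,y_F)$ and $(x_E',y_F')$ treated as fixed parameters of the cost---against the same expression for the Monge--Knothe kernel, which is optimal for that conditional problem by construction. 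So the step you flag as ``the main obstacle'' is exactly where your proof stops, and the route you propose for overcoming it (separating the ground costs into an $E\times F$ part and an $E^\bot\times F^\bot$ part of the objective) would not deliver the needed splitting because of the cross terms; the resolution is to not decompose $L$ at all and to state the conditional optimality with the base coordinates frozen inside the cost, after which the reintegration step proceeds as you describe.
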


Key properties of $GW$ that we would like to keep are its invariances. We show in two particular cases that we conserve them on the orthogonal spaces (since the measure on $E\times F$ is fixed).

\begin{proposition} \label{prop2}
    Let $\mu\in\mathcal{P}(\mathbb{R}^p)$, $\nu\in\mathcal{P}(\mathbb{R}^q)$, and denote 
    \begin{equation*}
        GW_{E,F}(\mu,\nu) = \inf_{\gamma\in\Pi_{E,F}(\mu,\nu)}\ \iint L(x,x',y,y')\mathrm{d}\gamma(x,y)\mathrm{d}\gamma(x',y').
    \end{equation*}
    For $L(x,x',y,y')=\big(\|x-x'\|_2^2 - \|y-y'\|_2^2\big)^2$, $GW_{E,F}$ is invariant with respect to translations and isometries on $E^\bot$ and $F^\bot$.
    
    For $L(x,x',y,y') = \big(\langle x,x'\rangle_p - \langle y,y'\rangle_q\big)^2$, $GW_{E,F}$ is invariant with respect to isometries on $E^\bot$ and $F^\bot$.
\end{proposition}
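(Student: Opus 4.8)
The plan is to use the orthogonal splittings $\mathbb{R}^p = E\oplus E^\bot$ and $\mathbb{R}^q = F\oplus F^\bot$, writing $x=(x_E,x_{E^\bot})$ and $y=(y_F,y_{F^\bot})$, and to show that an isometry acting only on the $E^\bot$-block (respectively the $F^\bot$-block) induces a bijection of $\Pi_{E,F}(\mu,\nu)$ onto $\Pi_{E,F}(\tilde\mu,\tilde\nu)$ that leaves the Gromov-Wasserstein objective unchanged, where $\tilde\mu,\tilde\nu$ are the transformed measures.

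\emph{First step: the cost only sees the orthogonal blocks.} By Pythagoras, $\|x-x'\|_2^2 = \|x_E-x'_E\|_2^2 + \|x_{E^\bot}-x'_{E^\bot}\|_2^2$ and likewise for $y$; similarly $\langle x,x'\rangle_p = \langle x_E,x'_E\rangle + \langle x_{E^\bot},x'_{E^\bot}\rangle$. Let $T:\mathbb{R}^p\to\mathbb{R}^p$ act by $T(x_E,x_{E^\bot}) = (x_E, Ax_{E^\bot}+b)$ with $A$ orthogonal on $E^\bot$, and let $S$ be an analogous map on $\mathbb{R}^q$ (with its own orthogonal part and shift on $F^\bot$). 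For $L(x,x',y,y') = (\|x-x'\|_2^2-\|y-y'\|_2^2)^2$, both squared-norm differences above are unchanged by $T$ and $S$, translations included, so $L(Tx,Tx',Sy,Sy') = L(x,x',y,y')$. For $L(x,x',y,y') = (\langle x,x'\rangle_p - \langle y,y'\rangle_q)^2$, an orthogonal $A$ preserves $\langle x_{E^\bot},x'_{E^\bot}\rangle$ only when the shift $b$ vanishes; hence invariance holds for linear isometries but not for translations, which is exactly the dichotomy in the statement.

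\emph{Second step: transport of feasible plans.} Set $\tilde\mu = T_\#\mu$ and $\tilde\nu = S_\#\nu$. Since $T$ fixes the $E$-component we have $\pi^E\circ T = \pi^E$, so $\pi^E_\#\tilde\mu = \mu_E$, and similarly $\pi^F_\#\tilde\nu = \nu_F$; therefore the fixed subspace plan $\gamma^*_{E,F}$ between the projected measures is the \emph{same} for $(\mu,\nu)$ and for $(\tilde\mu,\tilde\nu)$. The map $\gamma\mapsto (T\times S)_\#\gamma$ sends $\Pi(\mu,\nu)$ bijectively onto $\Pi(\tilde\mu,\tilde\nu)$, with inverse $(T^{-1}\times S^{-1})_\#$, because $\pi^1\circ(T\times S) = T\circ\pi^1$ and $\pi^2\circ(T\times S) = S\circ\pi^2$. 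Moreover $(\pi^E,\pi^F)\circ(T\times S) = (\pi^E,\pi^F)$, so this map carries the constraint $(\pi^E,\pi^F)_\#\gamma = \gamma^*_{E,F}$ onto itself; hence it restricts to a bijection $\Pi_{E,F}(\mu,\nu)\to\Pi_{E,F}(\tilde\mu,\tilde\nu)$.

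\emph{Third step: conclude.} For $\gamma\in\Pi_{E,F}(\mu,\nu)$ with image $\tilde\gamma = (T\times S)_\#\gamma$, the change-of-variables formula gives $\iint L(x,x',y,y')\,\mathrm{d}\tilde\gamma(x,y)\,\mathrm{d}\tilde\gamma(x',y') = \iint L(Tx,Tx',Sy,Sy')\,\mathrm{d}\gamma(x,y)\,\mathrm{d}\gamma(x',y')$, which equals $\iint L(x,x',y,y')\,\mathrm{d}\gamma(x,y)\,\mathrm{d}\gamma(x',y')$ by the first step. Taking infima over the bijectively identified feasible sets yields $GW_{E,F}(\tilde\mu,\tilde\nu) = GW_{E,F}(\mu,\nu)$. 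The point requiring care is the second step: one must check that transforming only the orthogonal components genuinely leaves $\gamma^*_{E,F}$ untouched — so the two minimizations run over constraint sets of the same shape — and that the pushforward by $T\times S$ simultaneously respects the marginal constraints and the subspace-coupling constraint; everything else is a routine change of variables, the only genuine distinction between the two costs being whether the shift $b$ survives the computation.
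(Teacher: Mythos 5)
Your proof is correct and follows essentially the same route as the paper's: both arguments exploit that a map acting only on $E^\bot$ (resp.\ $F^\bot$) fixes the projections onto $E$ and $F$, hence pushes $\Pi_{E,F}(\mu,\nu)$ bijectively onto the constraint set for the transformed measures, and leaves the cost unchanged because the orthogonal splitting makes $\|x-x'\|_2^2$ (resp.\ $\langle x,x'\rangle$) decompose into an $E$-part and an $E^\bot$-part preserved by the isometry. The paper routes the cost-invariance step through an explicit disintegration with respect to $\gamma_{E,F}^*$, whereas you obtain it directly by a change of variables, but the substance is identical, including the observation that translations survive only for the squared-distance cost.
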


We refer to Appendix \ref{proof_props} for the proofs of the two previous propositions.

\subsection{Closed-form between Gaussians}

We can also derive explicit formulas between Gaussians in particular cases. Let $q\le p$, $\mu=\mathcal{N}(m_\mu,\Sigma)\in\mathcal{P}(\mathbb{R}^p)$, $\nu=\mathcal{N}(m_\nu,\Lambda)\in\mathcal{P}(\mathbb{R}^q)$ two Gaussian measures with $\Sigma=P_\mu D_\mu P_\mu^T$ and $\Lambda=P_\nu D_\nu P_\nu^T$. As previously, let $E\subset\mathbb{R}^p$ and $F\subset\mathbb{R}^q$ be respectively $k$ and $k'$ dimensional subspaces.
Following \citet{muzellec2019subspace}, we represent $\Sigma$ in an orthonormal basis of $E\oplus E^\bot$, and $\Lambda$ in an orthonormal basis of $F\oplus F^\bot$, \emph{i.e.} $\Sigma = \begin{pmatrix}\Sigma_E & \Sigma_{EE^\bot} \\ \Sigma_{E^\bot E} & \Sigma_{E^\bot}\end{pmatrix}$. Now, let's denote 
\begin{equation*}
    \Sigma/\Sigma_E = \Sigma_{E^\bot}-\Sigma_{EE^\bot}^T\Sigma_E^{-1}\Sigma_{EE^\bot}    
\end{equation*}
the Schur complement of $\Sigma$ with respect to $\Sigma_E$. We know that the conditionals of Gaussians are Gaussians, and of covariance the Schur complement (see \emph{e.g.} \citet{rasmussen2003gaussian,von2014mathematical}).

For $L(x,x',y,y')=\big(\|x-x'\|_2^2-\|y-y'\|_2^2\big)^2$, we have for now no certainty that the optimal transport plan is Gaussian. By restricting the minimization problem to Gaussian couplings, \citeauthor{salmona2021gromov} showed that there is a solution $\gamma^*=(Id,T)_\#\mu\in\Pi(\mu,\nu)$ with $\mu=\mathcal{N}(m_\mu,\Sigma)$, $\nu=\mathcal{N}(m_\nu,\Lambda)$ and 
\begin{equation} \label{monge_map_gw}
    \forall x\in\mathbb{R}^d,\ T(x)=m_\nu+P_\nu A P_\mu^T (x-m_\mu)
\end{equation}
where $A=\begin{pmatrix} \Tilde{I}_q D_\nu^{\frac12}(D_\mu^{(q)})^{-\frac12} & 0_{q,p-q} \end{pmatrix}\in\mathbb{R}^{q\times p}$ and $\Tilde{I}_q$ is of the form $\mathrm{diag}\big((\pm 1)_{i\le q}\big)$.

By combining the results of \citet{muzellec2019subspace} and \citet{salmona2021gromov}, we get the following closed-form for Monge-Knothe couplings.

\begin{proposition}
    Suppose $p\ge q$ and $k=k'$. For the Gaussian restricted GW problem, a Monge-Knothe transport map between $\mu=\mathcal{N}(m_\mu,\Sigma)\in\mathcal{P}(\mathbb{R}^p)$ and $\nu=\mathcal{N}(m_\nu,\Lambda)\in\mathcal{P}(\mathbb{R}^q)$ is, for all $x\in\mathbb{R}^p$, $T_{\mathrm{MK}}(x) = m_\nu + B(x-m_\mu)$ where
    \begin{equation*}
        B = \begin{pmatrix}
        T_{E,F} & 0 \\
        C & T_{E^\bot,F^\bot|E,F}
    \end{pmatrix}
    \end{equation*}
    with $T_{E,F}$ an optimal transport map between $\mathcal{N}(0_E,\Sigma_E)$ and $\mathcal{N}(0_F,\Lambda_F)$ (of the form \eqref{monge_map_gw}), $T_{E^\bot,F^\bot|E,F}$ an optimal transport map between $\mathcal{N}(0_{E^\bot},\Sigma/\Sigma_E)$ and $\mathcal{N}(0_{F^\bot},\Lambda/\Lambda_F)$ and $C$ satisfying
    \begin{equation*}
        C = (\Lambda_{F^\bot F} (T_{E,F}^T)^{-1} - T_{E^\bot,F^\bot|E,F}\Sigma_{E^\bot E})\Sigma_E^{-1}.
    \end{equation*}
\end{proposition}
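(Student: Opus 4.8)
The plan is to reduce to the centered case in coordinates adapted to the two orthogonal splittings, to identify the marginals and conditionals of a Gaussian as Gaussians with Schur-complement covariances, and then to glue the subspace optimal map to the family of conditional optimal maps, using translation invariance to make that family constant in the conditioning variable.

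\emph{Step 1 (reductions).} First I would invoke translation invariance to assume $m_\mu=0$ and $m_\nu=0$, restoring the affine term $m_\nu-Bm_\mu$ at the end. Then, following \citet{muzellec2019subspace}, I write $\mathbb{R}^p$ in an orthonormal basis adapted to $E\oplus E^\bot$ and $\mathbb{R}^q$ in one adapted to $F\oplus F^\bot$, so that $x=(x_E,x_{E^\bot})$, $y=(y_F,y_{F^\bot})$ and
\begin{equation*}
    \Sigma=\begin{pmatrix}\Sigma_E & \Sigma_{EE^\bot}\\ \Sigma_{E^\bot E} & \Sigma_{E^\bot}\end{pmatrix},\qquad \Lambda=\begin{pmatrix}\Lambda_F & \Lambda_{FF^\bot}\\ \Lambda_{F^\bot F} & \Lambda_{F^\bot}\end{pmatrix},
\end{equation*}
with $\Sigma_E,\Lambda_F$ invertible (assuming $\Sigma,\Lambda\succ 0$). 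The goal is then to exhibit a linear $B$ such that $(Id,B)_\#\mu=\pi_{\mathrm{MK}}=\gamma_{E,F}^*\otimes\gamma_{E^\bot\times F^\bot|E\times F}^*$.

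\emph{Step 2 (marginals and conditionals).} Here $\mu_E=\mathcal N(0,\Sigma_E)$ and $\nu_F=\mathcal N(0,\Lambda_F)$, and by \citet{salmona2021gromov} there is, for the Gaussian-restricted $GW$ problem, an optimal map $T_{E,F}$ of the form \eqref{monge_map_gw} between them, so $\gamma_{E,F}^*=(Id,T_{E,F})_\#\mu_E$; since $k=k'$ and $\Sigma_E,\Lambda_F\succ 0$, $T_{E,F}$ is an invertible $k\times k$ matrix with $T_{E,F}\Sigma_E T_{E,F}^T=\Lambda_F$. The conditional laws are Gaussian with Schur-complement covariances and means linear in the conditioning variable:
\begin{equation*}
    \mu_{E^\bot|E}(x_E,\cdot)=\mathcal N\big(\Sigma_{E^\bot E}\Sigma_E^{-1}x_E,\ \Sigma/\Sigma_E\big),\qquad \nu_{F^\bot|F}(y_F,\cdot)=\mathcal N\big(\Lambda_{F^\bot F}\Lambda_F^{-1}y_F,\ \Lambda/\Lambda_F\big).
\end{equation*}

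\emph{Step 3 (the conditional problem, the crux).} For $\gamma_{E,F}^*$-a.e.\ $(x_E,y_F)$ — where $y_F=T_{E,F}x_E$ — the plan $\gamma_{E^\bot\times F^\bot|E\times F}^*$ must be $GW$-optimal between the two conditionals above, which are translates of the fixed centered Gaussians $\mathcal N(0,\Sigma/\Sigma_E)$ and $\mathcal N(0,\Lambda/\Lambda_F)$. Using the translation invariance of the cost $\big(\|x-x'\|_2^2-\|y-y'\|_2^2\big)^2$ (cf.\ Proposition \ref{prop2}), an optimal map between the conditionals is obtained from a \emph{single} optimal map $T_{E^\bot,F^\bot|E,F}$ between $\mathcal N(0,\Sigma/\Sigma_E)$ and $\mathcal N(0,\Lambda/\Lambda_F)$ (again of the Salmona form) by conjugating with the relevant translations, namely $z\mapsto \Lambda_{F^\bot F}\Lambda_F^{-1}y_F+T_{E^\bot,F^\bot|E,F}\big(z-\Sigma_{E^\bot E}\Sigma_E^{-1}x_E\big)$. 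I expect this step — justifying that one conditional map serves for a.e.\ $(x_E,y_F)$, which rests on the conditional problems being mutual translates and on the equivariance of the optimal restricted plan under translations — to be the main obstacle.

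\emph{Step 4 (gluing and the formula for $C$).} Since both levels are deterministic, $\pi_{\mathrm{MK}}=(Id,T_{\mathrm{MK}})_\#\mu$ with $T_{\mathrm{MK}}(x_E,x_{E^\bot})=\big(T_{E,F}x_E,\ \Lambda_{F^\bot F}\Lambda_F^{-1}T_{E,F}x_E+T_{E^\bot,F^\bot|E,F}(x_{E^\bot}-\Sigma_{E^\bot E}\Sigma_E^{-1}x_E)\big)$, which is linear, so $B=\begin{pmatrix}T_{E,F}&0\\ C & T_{E^\bot,F^\bot|E,F}\end{pmatrix}$ with $C=\Lambda_{F^\bot F}\Lambda_F^{-1}T_{E,F}-T_{E^\bot,F^\bot|E,F}\Sigma_{E^\bot E}\Sigma_E^{-1}$. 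Finally, from $T_{E,F}\Sigma_E T_{E,F}^T=\Lambda_F$ one gets $\Lambda_F^{-1}T_{E,F}=(T_{E,F}^T)^{-1}\Sigma_E^{-1}$, whence $C=\big(\Lambda_{F^\bot F}(T_{E,F}^T)^{-1}-T_{E^\bot,F^\bot|E,F}\Sigma_{E^\bot E}\big)\Sigma_E^{-1}$. Restoring the means turns $T_{\mathrm{MK}}$ into $x\mapsto m_\nu+B(x-m_\mu)$, and checking the pushforwards (the first $k'$ coordinates yield $\nu_F$, the conditionals match by construction, hence $(\pi^E,\pi^F)_\#\pi_{\mathrm{MK}}=\gamma_{E,F}^*$) confirms $T_{\mathrm{MK}}\in\Pi_{E,F}(\mu,\nu)$ and realizes $\pi_{\mathrm{MK}}$.
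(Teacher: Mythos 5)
Your proof is correct and reaches the same formula for $C$, but the derivation is organized differently from the paper's. The paper posits the block-triangular linear ansatz $T_{\mathrm{MK}}(x)=Bx$ with the two prescribed diagonal blocks, imposes the marginal constraint $B\Sigma B^T=\Lambda$, expands it into four block equations, solves the $(F^\perp,F)$ block for $C$, and then verifies by a fairly long computation (using $(T_{E,F}^T)^{-1}\Sigma_E^{-1}T_{E,F}^{-1}=\Lambda_F^{-1}$ and $\Sigma_{E^\perp}-\Sigma_{E^\perp E}\Sigma_E^{-1}\Sigma_{E^\perp E}^T=\Sigma/\Sigma_E$) that the remaining block equations are also satisfied. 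You instead build the map directly from the definition of $\pi_{\mathrm{MK}}$: you compute the conditional means $\Sigma_{E^\perp E}\Sigma_E^{-1}x_E$ and $\Lambda_{F^\perp F}\Lambda_F^{-1}y_F$, observe via translation invariance of the quadratic GW cost that all the conditional problems are translates of the single centered problem between $\mathcal N(0,\Sigma/\Sigma_E)$ and $\mathcal N(0,\Lambda/\Lambda_F)$, and glue. This makes $B\Sigma B^T=\Lambda$ automatic (the pushforward condition holds by construction of the disintegration), so you avoid the paper's verification of the $(F,F^\perp)$ and $(F^\perp,F^\perp)$ blocks entirely, and you obtain $C=\Lambda_{F^\perp F}\Lambda_F^{-1}T_{E,F}-T_{E^\perp,F^\perp|E,F}\Sigma_{E^\perp E}\Sigma_E^{-1}$ in closed form before rewriting it with $\Lambda_F^{-1}T_{E,F}=(T_{E,F}^T)^{-1}\Sigma_E^{-1}$; your version also makes explicit why the resulting map realizes $\pi_{\mathrm{MK}}$ rather than merely transporting $\mu$ to $\nu$, a point the paper leaves implicit. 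The price is that you must justify Step 3 (one fixed conditional map, translated, is optimal for $\gamma_{E,F}^*$-a.e.\ $(x_E,y_F)$); your appeal to translation invariance of $\big(\|x-x'\|_2^2-\|y-y'\|_2^2\big)^2$ is the right argument and the measurable selection is immediate since the map is explicit. Only minor additions needed: note that $p-k\ge q-k'$ (so the Salmona map between the Schur complements exists), which the paper states explicitly.
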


\begin{proof}
    See Appendix \ref{quadraticGW_Gaussians}.
\end{proof}


Suppose that $k\ge k'$, $m_\mu=0$, $m_\nu=0$ and let $T_{E,F}$ be an optimal transport map between $\mu_E$ and $\nu_F$ (of the form \eqref{monge_map_gw}). We can derive a formula for the Monge-Independent coupling for the inner-GW problem and the Gaussian restricted GW problem.
\begin{proposition}
    $\pi_{\mathrm{MI}}=\mathcal{N}(0_{p+q},\Gamma)$ where $\Gamma = \begin{pmatrix} \Sigma & C \\ C^T & \Lambda \end{pmatrix}$ with
    \begin{equation*}
        C = (V_E\Sigma_E+V_{E^\bot}\Sigma_{E^\bot E})T_{E,F}^T(V_F^T+\Lambda_F^{-1}\Lambda_{F^\bot F}^T V_{F^\bot}^T)
    \end{equation*}
    where $T_{E,F}$ is an optimal transport map, either for the inner-GW problem or the Gaussian restricted problem.
\end{proposition}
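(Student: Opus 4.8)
The plan is to show that $\pi_{\mathrm{MI}}$ is a centered Gaussian on $\mathbb{R}^{p+q}$ and then to identify its covariance block by block. By hypothesis $T_{E,F}$ is linear (of the form \eqref{monge_map_gw} with zero means), so $\gamma_{E,F}^*=(Id\times T_{E,F})_\#\mu_E$ is a centered Gaussian on $E\times F$; moreover, conditionals of Gaussians are Gaussians whose mean is affine in the conditioning variable and whose covariance is the corresponding Schur complement, so $\mu_{E^\bot|E}(x_E,\cdot)=\mathcal{N}(\Sigma_{E^\bot E}\Sigma_E^{-1}x_E,\Sigma/\Sigma_E)$ and $\nu_{F^\bot|F}(y_F,\cdot)=\mathcal{N}(\Lambda_{F^\bot F}\Lambda_F^{-1}y_F,\Lambda/\Lambda_F)$ (this uses $\Sigma_E,\Lambda_F$ invertible, which is implicit in the Gaussian setting). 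Drawing from $\pi_{\mathrm{MI}}$ then amounts to sampling $(X_E,Y_F)\sim\gamma_{E,F}^*$, next $X_{E^\bot}\sim\mu_{E^\bot|E}(X_E,\cdot)$ and $Y_{F^\bot}\sim\nu_{F^\bot|F}(Y_F,\cdot)$ \emph{independently}, and returning $X=V_EX_E+V_{E^\bot}X_{E^\bot}$, $Y=V_FY_F+V_{F^\bot}Y_{F^\bot}$, where $V_E,V_{E^\bot}$ (resp. $V_F,V_{F^\bot}$) have columns forming orthonormal bases of $E,E^\bot$ (resp. $F,F^\bot$). Writing $X_{E^\bot}=\Sigma_{E^\bot E}\Sigma_E^{-1}X_E+\xi$ and $Y_{F^\bot}=\Lambda_{F^\bot F}\Lambda_F^{-1}Y_F+\eta$ with $\xi,\eta$ centered Gaussian, mutually independent and independent of $(X_E,Y_F)$, the pair $(X,Y)$ is a fixed linear image of the jointly Gaussian vector $(X_E,Y_F,\xi,\eta)$; hence $\pi_{\mathrm{MI}}=\mathcal{N}(0_{p+q},\Gamma)$ for some $\Gamma$.

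The next step is to identify $\Gamma$. Since $\pi_{\mathrm{MI}}\in\Pi(\mu,\nu)$ by construction, its marginals are $\mu$ and $\nu$, so the diagonal blocks of $\Gamma$ are $\Sigma$ and $\Lambda$ and $\Gamma=\begin{pmatrix}\Sigma & C\\ C^T & \Lambda\end{pmatrix}$ with $C=\mathbb{E}[XY^T]$. Expanding $X$ and $Y$ in the two orthonormal bases,
\begin{equation*}
C=V_E\,\mathbb{E}[X_EY_F^T]\,V_F^T+V_E\,\mathbb{E}[X_EY_{F^\bot}^T]\,V_{F^\bot}^T+V_{E^\bot}\,\mathbb{E}[X_{E^\bot}Y_F^T]\,V_F^T+V_{E^\bot}\,\mathbb{E}[X_{E^\bot}Y_{F^\bot}^T]\,V_{F^\bot}^T,
\end{equation*}
so everything reduces to the four cross-moments on the right.

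To evaluate these, I would use that $Y_F=T_{E,F}X_E$, hence $\mathbb{E}[X_EY_F^T]=\Sigma_ET_{E,F}^T$ (since $\mathbb{E}[X_EX_E^T]=\Sigma_E$); that $\mathbb{E}[X_{E^\bot}\mid X_E]=\Sigma_{E^\bot E}\Sigma_E^{-1}X_E$ and $\mathbb{E}[Y_{F^\bot}\mid Y_F]=\Lambda_{F^\bot F}\Lambda_F^{-1}Y_F$; and that $X_{E^\bot}$ and $Y_{F^\bot}$ are conditionally independent given $(X_E,Y_F)$. The tower property then yields $\mathbb{E}[X_{E^\bot}Y_F^T]=\Sigma_{E^\bot E}T_{E,F}^T$, $\mathbb{E}[X_EY_{F^\bot}^T]=\Sigma_ET_{E,F}^T\Lambda_F^{-1}\Lambda_{F^\bot F}^T$ and $\mathbb{E}[X_{E^\bot}Y_{F^\bot}^T]=\Sigma_{E^\bot E}T_{E,F}^T\Lambda_F^{-1}\Lambda_{F^\bot F}^T$, the extra $\Sigma_E^{-1}$ from the conditional means cancelling $\Sigma_E$. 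Substituting into the display and factoring the common left factor $V_E\Sigma_E+V_{E^\bot}\Sigma_{E^\bot E}$ and right factor $V_F^T+\Lambda_F^{-1}\Lambda_{F^\bot F}^TV_{F^\bot}^T$ gives exactly $C=(V_E\Sigma_E+V_{E^\bot}\Sigma_{E^\bot E})T_{E,F}^T(V_F^T+\Lambda_F^{-1}\Lambda_{F^\bot F}^TV_{F^\bot}^T)$, which is the claim; note that $T_{E,F}^{-1}$ never appears, and that the hypotheses $k\ge k'$ and $m_\mu=m_\nu=0$ are used only to guarantee, respectively, that $T_{E,F}$ has the linear form \eqref{monge_map_gw} and that $\pi_{\mathrm{MI}}$ is centered.

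The arithmetic is light once the structure is in place; the one genuinely delicate point is the first step, namely making rigorous that gluing the two Gaussian conditional kernels to $\gamma_{E,F}^*$ through a conditionally independent coupling produces a \emph{jointly} (not merely conditionally) Gaussian law, together with keeping the invertibility hypotheses on $\Sigma_E$ and $\Lambda_F$ explicit and identifying the map $T_{E,F}$ on the $k$-dimensional space $E$ with its matrix in the chosen orthonormal basis. The remainder is just the bookkeeping of the four blocks and a dimension check.
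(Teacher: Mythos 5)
Your argument is correct and follows essentially the same route as the paper's proof: identify $\pi_{\mathrm{MI}}$ as a centered (degenerate) Gaussian, compute the four cross-covariance blocks $\mathbb{E}[X_EY_F^T]$, $\mathbb{E}[X_EY_{F^\bot}^T]$, $\mathbb{E}[X_{E^\bot}Y_F^T]$, $\mathbb{E}[X_{E^\bot}Y_{F^\bot}^T]$ via $Y_F=T_{E,F}X_E$, the Gaussian conditional-expectation formulas and conditional independence, then factor into the stated compact form. Your explicit justification of joint Gaussianity via the representation $X_{E^\bot}=\Sigma_{E^\bot E}\Sigma_E^{-1}X_E+\xi$, $Y_{F^\bot}=\Lambda_{F^\bot F}\Lambda_F^{-1}Y_F+\eta$ is a welcome extra detail that the paper only asserts.
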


\begin{proof}
    See Appendix \ref{mi_gaussians}.
\end{proof}

\subsection{Limit of optimal transport plans?}

Another interesting property derived in \citet{muzellec2019subspace} of the Monge-Knothe coupling is that it can be obtained as the limit of classic optimal transport plans, similar to Theorem \ref{th_KnotheToBrenier}, using a separable cost of the form
\begin{equation*}
    c_t(x,y) = (x-y)^T P_t (x-y)
\end{equation*}
with $P_t = V_EV_E^T + tV_{E^\bot}V_{E^\bot}^T$ and $(V_E,V_{E^\bot})$ an orthonormal basis of $\mathbb{R}^p$. 

However, this property is not valid for the classical Gromov-Wasserstein cost (\emph{e.g.} $L(x,x',y,y')=(d_X(x,x')^2-d_Y(y,y')^2)^2$ or $L(x,x',y,y')=(\langle x,x'\rangle_p-\langle y,y'\rangle_q)^2$) as the cost is not separable. Motivated by this question, we ask ourselves in the following if we can derive a quadratic optimal transport cost for which we would have this property.

\paragraph{Construction and properties of the Hadamard-Wasserstein problem}

The main idea of the proof of Theorem \ref{th_KnotheToBrenier} in \cite{carlier2010knothe} is to decompose the objective function as
\begin{equation*}
    \int c_t(x,y)\mathrm{d}\gamma(x,y) =\lambda_1(t)\Big( \int (x_1-y_1)^2 \mathrm{d}\gamma(x,y) + \int \sum_{k=2}^{d} \frac{\lambda_k(t)}{\lambda_{1}(t)} (x_k-y_k)^2 \mathrm{d}\gamma(x,y)\Big),
\end{equation*}
before taking the limit $t\to 0$ which makes the right-hand term vanish and allows to conclude on the limit of the first marginal of the optimal map. Reasoning by induction on the dimension, \citeauthor{carlier2010knothe} are able to deal with one term at a time, and finally show that the limit of the optimal map is the Knothe-Rosenblatt transport (\ref{knothe}). Another key ingredient is to have access to a unique transport map between measures in $\mathbb{R}$, as it is the case for the Wasserstein distance with cost $c(x,y)=\frac12 (x-y)^2$, the Monge map being the increasing rearrangement \eqref{increasing_rearragnement} (it can actually be extended to smoothly strictly convex costs, see \cite{santambrogio2015optimal}[Theorem 2.9]). 

For now, the only cost for which we have an optimal transport map in 1D is for the inner product \citep{vayer2020contribution}. Hence, we need a cost which reduces to inner-GW \eqref{gw_ip} in 1D. A natural choice is therefore to use the following cost:
\begin{equation} \label{loss_HW}
    \forall x,x',y,y' \in\mathbb{R}^d,\ L(x,x',y,y') = \sum_{k=1}^d (x_kx_k'-y_ky_k')^2 = \|x\odot x' - y\odot y'\|_2^2
\end{equation}
as a loss function, where $\odot$ is the Hadamard product (element wise product). We define the following ``Hadamard  Wasserstein'' problem
\begin{equation} \label{HadamardWasserstein}
    \mathcal{HW}(\mu,\nu) = \inf_{\gamma\in\Pi(\mu,\nu)} \iint \|x\odot x'-y\odot y'\|_2^2\ \mathrm{d}\gamma(x,y)\mathrm{d}\gamma(x',y').
\end{equation}

\paragraph{Properties}
The loss $L$ \eqref{loss_HW} satisfies well the separability condition and reduces to the inner-GW loss in 1D. We can therefore define a degenerated version of it,
\begin{equation} \label{degenerated_cost}
    \begin{aligned}
        \forall x,x',y,y',\ L_t(x,x',y,y') &= \sum_{k=1}^d \Big( \prod_{i=1}^{k-1} \lambda_t^{(i)}\Big) (x_kx_k'-y_ky_k')^2 \\
        &= (x\odot x'-y\odot y')A_t(x\odot x'-y\odot y')
    \end{aligned}
\end{equation}
with $A_t=\mathrm{diag}(1,\lambda_t^{(1)},\lambda_t^{(1)}\lambda_t^{(2)},...,\prod_{i=1}^{d-1}\lambda_t^{(i)})$, and such as for all $t>0$, and for all $i\in\{1,\dots,d-1\}$, $\lambda_t^{(i)}>0$ and $\lambda_t^{(i)}\xrightarrow[t\to 0]{}0$. We denote $\mathcal{HW}_t$ the problem \eqref{HadamardWasserstein} with the degenerate cost \eqref{degenerated_cost}. We will derive some useful properties which are usual for the regular Gromov-Wasserstein cost.

\begin{proposition} \label{prop_hadamard}
    Let $\mu,\nu\in\mathcal{P}(\mathbb{R}^d)$.
    
    \begin{enumerate}
        \item The problem \eqref{HadamardWasserstein} always admits a minimizer.
        \item $\mathcal{HW}$ is a pseudometric (\emph{i.e.} it is symmetric, nonnegative, $\mathcal{HW}(\mu,\mu)=0$ and it satisfies the triangle inequality).
        \item $\mathcal{HW}$ is invariant to reflexion with respect to axes.
    \end{enumerate}
\end{proposition}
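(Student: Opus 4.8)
\emph{Plan.} I would prove the three items in turn, using the standard machinery for Gromov--Wasserstein-type functionals. \emph{Item 1 (existence).} Run the direct method on $\Pi(\mu,\nu)$. This set is nonempty (it contains $\mu\otimes\nu$), tight since its marginals are fixed, hence weakly sequentially compact by Prokhorov, and weakly closed since the marginal constraints pass to the limit. For a minimizing sequence $(\gamma_n)_n$, extract $\gamma_n\rightharpoonup\gamma^\star\in\Pi(\mu,\nu)$; weak convergence is stable under tensorization, so $\gamma_n\otimes\gamma_n\rightharpoonup\gamma^\star\otimes\gamma^\star$ on $(\mathbb{R}^d\times\mathbb{R}^d)^2$, and since the integrand $L(x,x',y,y')=\|x\odot x'-y\odot y'\|_2^2$ is continuous and nonnegative (hence lower semicontinuous), the portmanteau theorem yields $\int L\,\mathrm{d}(\gamma^\star\otimes\gamma^\star)\le\liminf_n\int L\,\mathrm{d}(\gamma_n\otimes\gamma_n)$, so $\gamma^\star$ is optimal. (The cost is finite, e.g.\ as soon as $\mu,\nu\in\mathcal{P}_2(\mathbb{R}^d)$, since the bilinear integral splits into products of second moments; absent a moment assumption the infimum may be $+\infty$ and every coupling is trivially optimal.)

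\emph{Item 2 (pseudometric).} Nonnegativity is clear from $L\ge0$; symmetry follows from $L(x,x',y,y')=L(y,y',x,x')$ together with the fact that $\gamma\mapsto s_\#\gamma$ with $s(x,y)=(y,x)$ is a cost-preserving bijection of $\Pi(\mu,\nu)$ onto $\Pi(\nu,\mu)$; and $\mathcal{HW}(\mu,\mu)=0$ is witnessed by the diagonal coupling $(\mathrm{Id},\mathrm{Id})_\#\mu$, on which $L\equiv0$. For the triangle inequality --- which, as is customary for $GW$, I would state for $\sqrt{\mathcal{HW}}$ --- take $\mu_1,\mu_2,\mu_3$ with optimal (by Item 1) couplings $\gamma_{12}\in\Pi(\mu_1,\mu_2)$ and $\gamma_{23}\in\Pi(\mu_2,\mu_3)$, glue them along $\mu_2$ via the Gluing Lemma \citep{villani2008optimal} into $\pi\in\mathcal{P}(\mathbb{R}^d\times\mathbb{R}^d\times\mathbb{R}^d)$ with the prescribed $(1,2)$- and $(2,3)$-marginals, and let $\gamma_{13}$ be the $(1,3)$-marginal of $\pi$, so $\gamma_{13}\in\Pi(\mu_1,\mu_3)$. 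Then the pointwise Euclidean triangle inequality applied to $x_1\odot x_1'-x_3\odot x_3'=(x_1\odot x_1'-x_2\odot x_2')+(x_2\odot x_2'-x_3\odot x_3')$, followed by Minkowski's inequality in $L^2(\pi\otimes\pi)$, gives $\sqrt{\mathcal{HW}(\mu_1,\mu_3)}\le\sqrt{\mathcal{HW}(\mu_1,\mu_2)}+\sqrt{\mathcal{HW}(\mu_2,\mu_3)}$, after identifying the three double integrals against $\pi\otimes\pi$ with those against $\gamma_{13}\otimes\gamma_{13}$, $\gamma_{12}\otimes\gamma_{12}$, and $\gamma_{23}\otimes\gamma_{23}$ respectively.

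\emph{Item 3 (reflexion invariance).} The key point is the identity: for a coordinate-axis reflexion $S=\mathrm{diag}(\varepsilon_1,\dots,\varepsilon_d)$ with $\varepsilon_k\in\{-1,+1\}$ one has $(Sx)\odot(Sx')=x\odot x'$, since $\varepsilon_k^2=1$ for every $k$; hence $L(Sx,Sx',y,y')=L(x,x',y,y')$, and symmetrically in the $y$-variables. Therefore $\gamma\mapsto(S,\mathrm{Id})_\#\gamma$ is a cost-preserving bijection from $\Pi(\mu,\nu)$ onto $\Pi(S_\#\mu,\nu)$, whence $\mathcal{HW}(S_\#\mu,\nu)=\mathcal{HW}(\mu,\nu)$; the same argument applies to a reflexion acting on $\nu$.

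\emph{Main obstacle.} The only genuinely delicate step is the triangle inequality in Item 2: one must set up the threefold gluing correctly and transfer the pointwise Euclidean triangle inequality to $L^2(\pi\otimes\pi)$ through Minkowski, and one must be careful that (exactly as for the classical Gromov--Wasserstein functional) the inequality is obtained for $\sqrt{\mathcal{HW}}$ rather than for $\mathcal{HW}$ itself. Items 1 and 3 are routine once weak compactness of $\Pi(\mu,\nu)$ together with lower semicontinuity of the bilinear functional, and the invariance of the Hadamard product under coordinatewise sign changes, are in hand.
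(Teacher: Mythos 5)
Your proposal is correct, and for items 1 and 3 it follows essentially the same route as the paper: existence via weak compactness of $\Pi(\mu,\nu)$ plus lower semicontinuity of $\gamma\mapsto\iint L\,\mathrm{d}\gamma\,\mathrm{d}\gamma$ (the paper invokes Lemma 2.2.1 and Memo 2.2.1 of \citet{vayer2020contribution} where you invoke tensorization and portmanteau directly), and reflexion invariance via the identity $(Sx)\odot(Sx')=x\odot x'$ for a sign-diagonal $S$ (the paper additionally derives the necessary condition $|[T(x)]_i|=|x_i|$ to argue that axis reflexions are essentially the only such invariances, a converse you do not address but which the statement does not require). The genuine difference is item 2: the paper simply cites Theorem 16 of \citet{chowdhury2019gromov}, whereas you reproduce the underlying argument --- gluing the two optimal couplings along the middle marginal and applying the pointwise triangle inequality for $\|\cdot\|_2$ followed by Minkowski in $L^2(\pi\otimes\pi)$. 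This is the standard Sturm/Chowdhury--M\'emoli mechanism and it does apply here because the cost is the squared norm of a difference of ``feature maps'' $(x,x')\mapsto x\odot x'$; your self-contained version is more informative than the citation. You are also right, and more careful than the paper, that the triangle inequality is obtained for the square root of the infimum in \eqref{HadamardWasserstein}; the paper's notation is inconsistent on this point (it defines $\mathcal{HW}$ as the infimum itself but later writes $\mathcal{HW}_t^2$ for the same quantity), so your explicit flag resolves a real imprecision rather than introducing one. Your remark that finiteness of the cost holds under second moments is likewise correct, since $\|x\odot x'\|_2^2\le\|x\|_2^2\|x'\|_2^2$ makes the double integral factor into products of second moments.
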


\begin{proof}
    See Appendix \ref{proof_Hadamard}.
\end{proof}

$\mathcal{HW}$ loses somes properties compared to $GW$. Indeed, it is only invariant with respect to axes and it can compare only measures lying in the same Euclidean space in order for the distance to be well defined. Nonetheless, we show in the following that we can derive some links with triangular couplings in the same way as the Wasserstein distance and KR.

We first define a triangular coupling different from the Knothe-Rosenblatt rearrangement in the sense that each map will not be nondecreasing. Indeed, following Theorem \ref{gw_ip_1d}, the solution of each 1D problem 
\begin{equation*}
    \argmin_{\gamma\in\Pi(\mu,\nu)}\ \iint (xx'-yy')^2\ \mathrm{d}\gamma(x,y)\mathrm{d}\gamma(x',y')
\end{equation*}
is either $(Id\times T_{\mathrm{asc}})_\#\mu$ or $(Id\times T_{\mathrm{desc}})_\#\mu$. Hence, at each step $k\ge 1$, if we disintegrate the joint law of the $k$ first variables as $\mu^{1:k}=\mu^{1:k-1}\otimes \mu^{k|1:k-1}$, the optimal transport map $T(\cdot|x_1,\dots,x_k)$ will be the solution of
\begin{equation*}
    \argmin_{T\in\{T_{\mathrm{asc}}, T_{\mathrm{desc}}\}}\ \iint \big(x_kx_k'-T(x_k)T(x_k')\big)^2\ \mu^{k\mid1:k-1}(\mathrm{d}x_k\mid x_{1:k-1})\mu^{k\mid1:k-1}(\mathrm{d}x_k'\mid x_{1:k-1}').
\end{equation*}

We now state the main theorem where we show that the limit of the OT plans obtained with the degenerated cost will be the triangular coupling we just defined.

\begin{theorem} \label{ThGWKR}
    Let $\mu$ and $\nu$ be two absolutely continuous measures on $\mathbb{R}^d$ such that $\int\|x\|_2^4\ \mu(\mathrm{d}x) < +\infty$, $\int \|y\|_2^4\ \nu(\mathrm{d}y)<+\infty$ and with compact support. Let $\gamma_t$ be an optimal transport plan for $\mathcal{HW}_t$, let $T_K$ be the alternate Knothe-Rosenblatt map between $\mu$ and $\nu$ as defined in the last paragraph, and let $\gamma_K=(Id\times T_K)_\#\mu$ be the associated transport plan. Then, we have $\gamma_t\xrightarrow[t\to 0]{\mathcal{D}}\gamma_K$. Moreover, if $\gamma_t$ are induced by transport maps $T_t$, then $T_t\xrightarrow[t\to 0]{L^2(\mu)} T_K$.
\end{theorem}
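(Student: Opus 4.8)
The plan is to transpose the argument of \citet{carlier2010knothe} behind Theorem~\ref{th_KnotheToBrenier} to the cost \eqref{loss_HW}--\eqref{degenerated_cost}, the two structural changes being that the one-dimensional building block is now the inner-GW problem of Theorem~\ref{gw_ip_1d}, whose solution is the two-valued map $T_{\mathrm{asc}}$ or $T_{\mathrm{desc}}$ rather than the monotone rearrangement, and that each ``layer'' of the cost is quadratic, not linear, in the plan. A convenient preliminary identity, obtained by expanding the square, is that for every $k$
\[
\iint (x_kx_k'-y_ky_k')^2\,\mathrm{d}\gamma\,\mathrm{d}\gamma = \Big(\int x_k^2\,\mathrm{d}\mu\Big)^2+\Big(\int y_k^2\,\mathrm{d}\nu\Big)^2-2\Big(\int x_ky_k\,\mathrm{d}\gamma\Big)^2,
\]
so that minimizing $\iint L_t\,\mathrm{d}\gamma\,\mathrm{d}\gamma$ over $\Pi(\mu,\nu)$ is the same as maximizing $\sum_{k=1}^d\big(\prod_{i<k}\lambda_t^{(i)}\big)\big(\int x_ky_k\,\mathrm{d}\gamma\big)^2$; this explains both the $T_{\mathrm{asc}}/T_{\mathrm{desc}}$ dichotomy and the fact that the weights are in the same nested, fast-decaying regime that drives the proof of Theorem~\ref{th_KnotheToBrenier}.

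First I would invoke compactness: as $\mu,\nu$ have compact support, $\Pi(\mu,\nu)$ is weakly compact and each map $\gamma\mapsto\iint(x_kx_k'-y_ky_k')^2\,\mathrm{d}\gamma\,\mathrm{d}\gamma$ is weakly continuous (bounded continuous integrand, and $\gamma_n\rightharpoonup\gamma$ implies $\gamma_n\otimes\gamma_n\rightharpoonup\gamma\otimes\gamma$). So any $t_n\to0$ has a subsequence along which $\gamma_{t_n}\rightharpoonup\gamma_0$ for some $\gamma_0\in\Pi(\mu,\nu)$, and it suffices to show that every such limit point equals $\gamma_K$; the convergence of the whole net then follows. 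To identify $\gamma_0$, compare optimality of $\gamma_{t_n}$ with $\gamma_K$ and divide by the leading weight $1$: all $k\ge2$ terms carry a factor $\prod_{i<k}\lambda_{t_n}^{(i)}\to0$ while their integrals stay bounded, so letting $n\to\infty$ gives $\big(\int x_1y_1\,\mathrm{d}\gamma_0\big)^2\ge\big(\int x_1y_1\,\mathrm{d}\gamma_K\big)^2$. As $\mu$ is absolutely continuous so is its first marginal $\mu^1$; Theorem~\ref{gw_ip_1d}, together with uniqueness of the monotone rearrangement, then shows that the one-dimensional problem has a unique optimizer once the ascending or descending branch is fixed, and since the alternate-KR construction picks exactly the branch maximizing $|\int x_1y_1|$, this forces the law of $(x_1,y_1)$ under $\gamma_0$ to equal $(Id,T_1)_\#\mu^1$, i.e.\ that under $\gamma_K$.

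The core is then an induction on the dimension. Knowing the joint law of the first coordinate is carried by the graph of $T_1$, I would disintegrate $\mu=\mu^1\otimes\mu^{2:d\mid1}$, $\nu=\nu^1\otimes\nu^{2:d\mid1}$ and $\gamma_0=\mu^1\otimes\gamma_0^{x_1}$, with $\gamma_0^{x_1}\in\Pi\big(\mu^{2:d\mid1}(\cdot\mid x_1),\nu^{2:d\mid1}(\cdot\mid T_1(x_1))\big)$ for $\mu^1$-a.e.\ $x_1$, and show that $\gamma_0^{x_1}$ is, for a.e.\ $x_1$, the $(d-1)$-dimensional alternate-KR plan between these conditionals. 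This I would get by a competitor/regluing argument: if $\gamma_0^{x_1}$ failed to be conditionally optimal for the appropriate next-order functional on a set of positive $\mu^1$-measure, then substituting better conditional plans, chosen through a measurable selection of optimal one-dimensional inner-GW couplings, would produce $\tilde\gamma\in\Pi(\mu,\nu)$ with strictly smaller $\iint L_{t_n}\,\mathrm{d}\tilde\gamma\,\mathrm{d}\tilde\gamma$ for $n$ large, contradicting the optimality of $\gamma_{t_n}$; since the restriction of \eqref{degenerated_cost} to the last $d-1$ coordinates is again a degenerate cost of the same type, the induction hypothesis identifies $\gamma_0^{x_1}$, and concatenating over coordinates yields $\gamma_0=(Id\times T_K)_\#\mu=\gamma_K$.

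Finally, for the $L^2$ statement, suppose $\gamma_t=(Id\times T_t)_\#\mu$. Since $T_{t\#}\mu=\nu$ has compact support, $(T_t)_t$ is bounded in $L^\infty(\mu)$; testing $\gamma_t\rightharpoonup\gamma_K$ against the functions $(x,y)\mapsto\psi(x)y_i$ ($\psi$ scalar, $i=1,\dots,d$), which are bounded on $\mathrm{supp}(\mu)\times\mathrm{supp}(\nu)$ and then extended to $\psi\in L^2(\mu)$ by density and the uniform bound, gives $T_t\rightharpoonup T_K$ weakly in $L^2(\mu;\mathbb{R}^d)$, while $\|T_t\|_{L^2(\mu)}^2=\int\|y\|_2^2\,\mathrm{d}\nu=\|T_K\|_{L^2(\mu)}^2$; weak convergence together with convergence of norms in the Hilbert space $L^2(\mu;\mathbb{R}^d)$ upgrades this to $T_t\to T_K$ in $L^2(\mu)$. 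The step I expect to be the main obstacle is the induction: because the cost is quadratic in the plan, the ``disintegrated optimality'' principle and the admissibility of the reglued competitor require more care than in the linear Wasserstein setting; the ascending/descending branch must moreover be chosen measurably, and ties in the one-dimensional inner-GW problem (equal $|\int x_ky_k|$ for the two branches) are the one point where an extra argument, or a mild genericity assumption, seems to be needed.
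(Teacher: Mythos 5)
Your overall strategy coincides with the paper's: extract a weak limit $\gamma_0$ of $\gamma_t$ by compactness, identify its first-coordinate marginal via the uniqueness of the one-dimensional inner-GW optimizer of Theorem~\ref{gw_ip_1d}, proceed coordinate by coordinate, then upgrade to $L^2(\mu)$ convergence of the maps. Two of your ingredients are genuinely different and arguably cleaner. The moment identity $\iint(x_kx_k'-y_ky_k')^2\,\mathrm{d}\gamma\,\mathrm{d}\gamma=(\int x_k^2\,\mathrm{d}\mu)^2+(\int y_k^2\,\mathrm{d}\nu)^2-2(\int x_ky_k\,\mathrm{d}\gamma)^2$ is correct and reduces each layer of the cost to a function of the single linear statistic $\int x_ky_k\,\mathrm{d}\gamma$, whereas the paper manipulates the double integrals directly. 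Your $L^2$ upgrade (weak $L^2$ convergence of $T_t$ from testing against $\psi(x)y_i$, plus $\|T_t\|_{L^2(\mu)}^2=\int\|y\|_2^2\,\mathrm{d}\nu=\|T_K\|_{L^2(\mu)}^2$) also avoids the paper's step $\int\|y-T_K(x)\|_2^2\,\mathrm{d}\gamma_t\to\int\|y-T_K(x)\|_2^2\,\mathrm{d}\gamma_K$, which is not justified by weak convergence alone unless $T_K$ is regular enough. You are likewise right to flag possible ties between the ascending and descending branches, a point the paper passes over when invoking ``the unique optimal map''.

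The genuine gap is the induction step, which you correctly identify as the main obstacle but do not close, and your proposed route differs from the paper's in a way that matters. You condition on $x_1$ and want to apply the $(d-1)$-dimensional induction hypothesis to the conditional plans; but the disintegrations of $\gamma_t$ (or of $\gamma_0$) are not optimal plans of any conditional $(d-1)$-dimensional degenerate problem, and because the cost is quadratic in $\gamma$ the level-$k$ functional couples pairs of conditioning points $(x_1,y_1)$ and $(x_1',y_1')$: a fiberwise regluing that increases $|\int x_2y_2\,\mathrm{d}\gamma^{2|1}|$ separately on each fiber need not increase $|\int x_2y_2\,\mathrm{d}\gamma|$, since the fibers must be sign-coherent, so conditional optimality does not decouple across $x_1$ as it does for the linear Wasserstein functional behind Theorem~\ref{th_KnotheToBrenier}. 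The paper instead inducts on the marginal of the joint plan on the first $\ell$ coordinates, and for $\ell\ge 3$ the key device is the competitor $\gamma_K^t=\eta_{t,\ell}\otimes\gamma_K^{\ell:d|1:\ell-1}$, where $\eta_{t,\ell}$ is the optimizer of the truncated degenerate cost \emph{at the same parameter $t$} rather than the limiting plan: this is what legitimizes subtracting the first $\ell-1$ layers before dividing by $\prod_{i<\ell}\lambda_t^{(i)}$, exactly as in \citep{santambrogio2015optimal}[Theorem 2.23]. Your comparison with a $t$-independent competitor only works at $\ell=2$, where the truncated optimizer is $\gamma_K^1$ itself; without the $\eta_{t,\ell}$ device (or an equivalent) the contradiction argument does not go through for $\ell\ge 3$.
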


\begin{proof}
    See appendix \ref{ProofThGWKR}.
\end{proof}

We report in Appendix \ref{appendix:HW} how to compute $\mathcal{HW}$ \eqref{HadamardWasserstein} in the discrete setting.

\section{Illustrations}

We use the Python Optimal Transport (POT) library \citep{flamary2021pot} to compute the different optimal transport problems involved in this illustration. We are interested here in solving a 3D mesh registration problem, which is a natural application of Gromov-Wasserstein~\citep{memoli2011gromov} since it enjoys invariances with respect to isometries such as permutations, and can also naturally exploit the topology of the meshes. For this purpose, we selected two base meshes  from the {\sc Faust} dataset~\citep{bogo2014faust}, which provides ground
truth correspondences between shapes. The information available from those meshes are geometrical ($6890$ vertices positions) and topological (mesh connectivity). These two meshes are represented, along with the visual results of the registration, in Figure~\ref{fig:mesh}. In order to visually depict the quality of the assignment induced by the transport map, we propagate through it a color code of the source vertices toward their associated counterpart vertices in the target mesh. Both original color coded source and associated target ground truth are available on the first line of the illustration. To compute our method, we simply use as a natural subspace for both meshes the algebraic connectivity of the mesh topological information, also known as the Fiedler vector (eigenvector associated to the second smallest eigenvalue of the un-normalized Laplacian matrix). Reduced to a 1D optimal transport problem, following Eq.~\ref{gw_ip}
, the computation time is very low ($\sim5$secs. on a standard laptop), and the associated matching is very good with more than $98\%$ of correct assignments. We qualitatively compare this result to Gromov-Wasserstein mappings induced by different cost functions, in the second line of Figure~\ref{fig:mesh}: adjacency \citep{xu2019scalable}, weighted adjacency (weights are given by distances between vertices), heat kernel (derived from the un-normalized Laplacian) \citep{chowdhury2021generalized} and finally geodesic distances over the meshes. In average, computing the Gromov-Wasserstein mapping using POT took around $10$ minutes of time. Both methods based on adjacency fail to recover a meaningful mapping. Heat kernel allows to map continuous areas of the source mesh, but fails in recovering a global structure. Finally, the geodesic distance gives a much more coherent mapping, but has inverted left and right of the human figure. Notably, a significant extra computation time was induced by the computation of the geodesic distances ($\sim1$h/mesh using the NetworkX~\citep{hagberg2008exploring} shortest path procedure). As a conclusion, and despite the simplification of the original problem, our method performs best, with a speed-up of two-orders of magnitude. 

\begin{figure}[h] 
    \centering
    \includegraphics[width=0.92\linewidth]{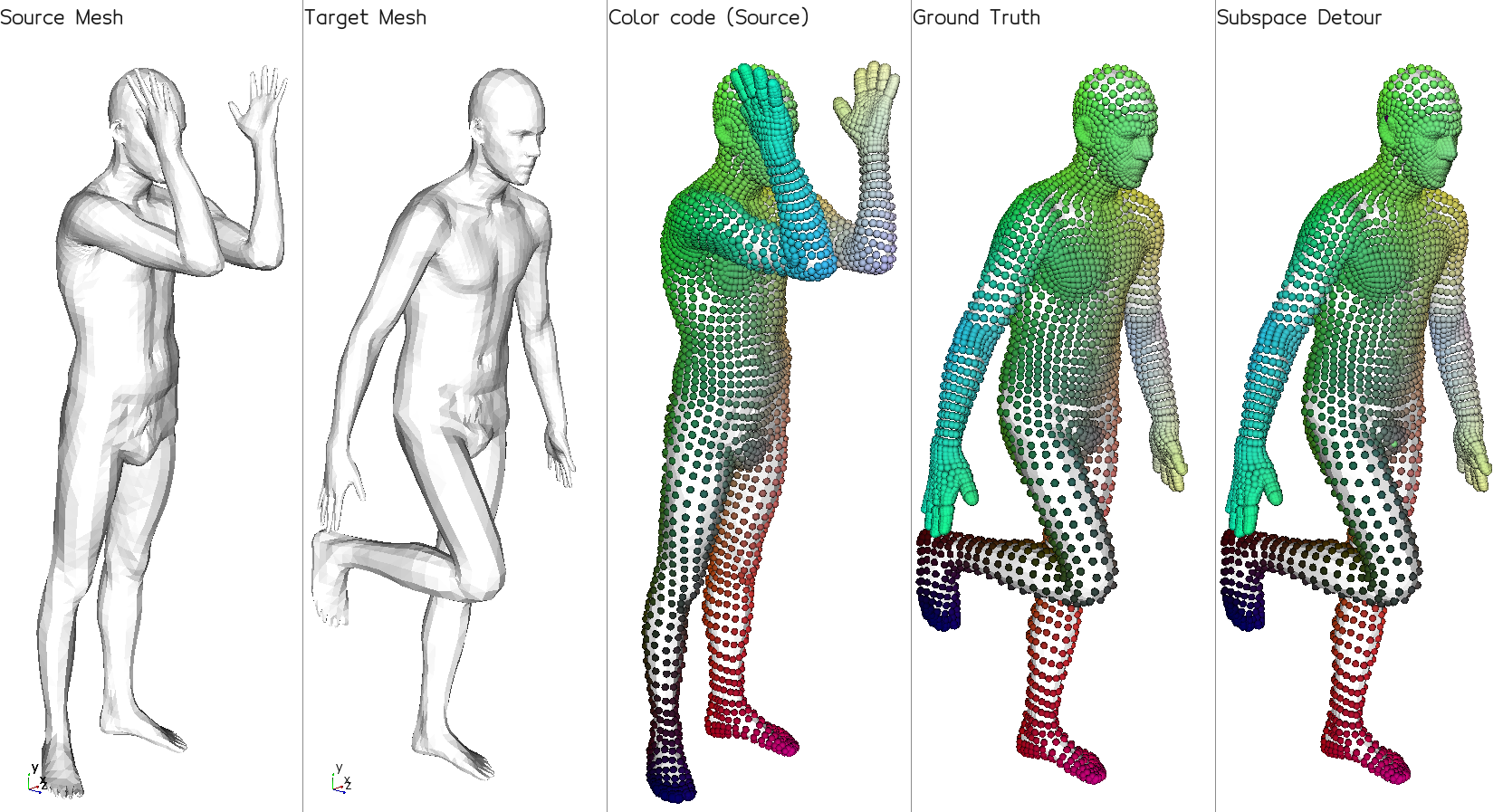}
    \includegraphics[width=0.92\linewidth]{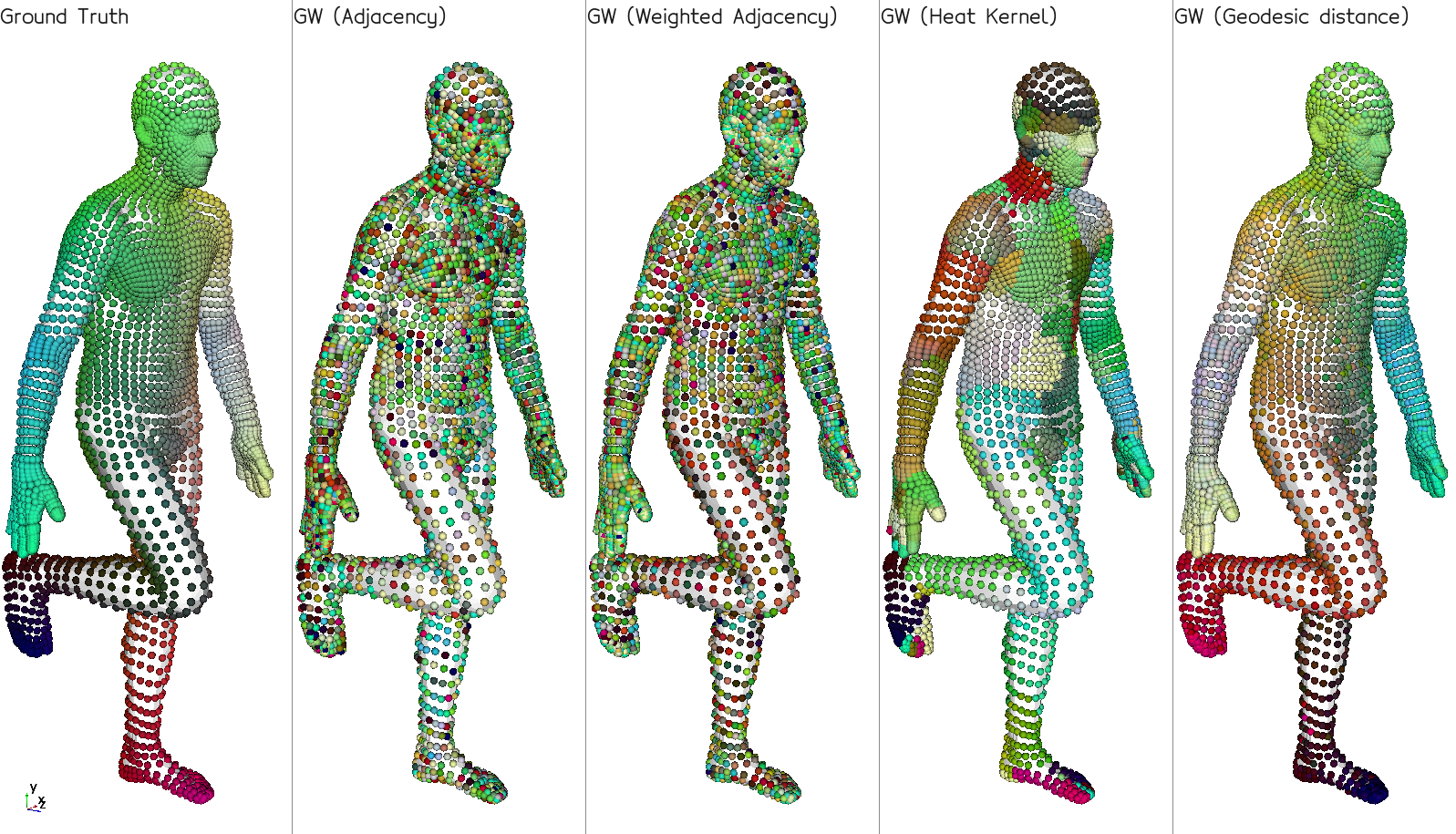}
    \caption{3D Mesh registration. (First row) source and target meshes, color code of the source, ground truth color code on the target, result of subspace detour using Fiedler vectors as subspace. (Second row) After recalling the expected ground truth for ease of comparison, we present results of different Gromov-Wasserstein mappings obtained with metrics based on adjacency, heat kernel and geodesic distances.}
    \label{fig:mesh}
\end{figure}

\section{Discussion}

We proposed in this work to extend the subspace detour approach to different subspaces, and to other optimal transport costs such as Gromov-Wasserstein. Being able to project on different subspaces can be useful when the data are not aligned and do not share the same axes of interest, as well as when we are working between different metric spaces as it is the case for example with graphs. However, a question arising is how to choose these subspaces. Since the method is mostly interesting when we choose one dimensional subspaces, we proposed to use a PCA and to project on the first directions for data embedded in euclidean spaces. For more complicated data such as graphs, we projected onto the Fiedler vector and obtained good results in an efficient way on a 3D mesh registration problem. More generally, \citeauthor{muzellec2019subspace} proposed to perform a gradient descent on the loss with respect to orthonormal matrices. This approach is non-convex and only guaranteed to converge to a local minimum. Designing such an algorithm, which would minimize alternatively between two transformations in the Stiefel manifold, is left for future works.

The subspace detour approach for transport problem is meaningful whenever one can identify subspaces that gather most of the information from the original distributions, while making the estimate more robust and with a better sample complexity as far as dimensions are lower. On the computational complexity side, and when we have only access to discrete data, the subspace detour approach brings better computational complexity solely when the subspaces are chosen as one dimensional. Indeed, otherwise, we have the same complexity for solving the subspace detour and solving directly the OT problem (since the complexity only depends on the number of samples). 
In this case, the 1D projection often gives distinct values for all the samples (for continuous valued data) and hence the Monge-Knothe coupling is exactly the coupling in 1D. As such, information is lost on the orthogonal spaces. It can be artificially recovered by quantizing the 1D values (as experimented in practice in \citet{muzellec2019subspace}), but the added value is not clear and deserves broader studies. If given absolutely continuous distributions \emph{wrt.} the Lebesgue measure however, this limit does not exist, but comes with the extra cost of being able to compute efficiently the projected measure onto the subspace, which might require discretization of the space and is therefore not practical in high dimensions.

We also proposed a new quadratic cost $\mathcal{HW}$ that we call Hadamard-Wasserstein, which allows to define a degenerated cost for which the optimal transport plan converges to a triangular coupling. However, this cost loses many properties compared to $W_2$ or $GW$, for which we are inclined to use these problems. Indeed, while $\mathcal{HW}$ is a quadratic cost, it uses an euclidean norm between the Hadamard product of vectors and requires the two spaces to be the same (in order to have the distance well defined). A work around in the case $X=\mathbb{R}^p$ and $Y=\mathbb{R}^q$ with $p\le q$ would be to ``lift'' the vectors in $\mathbb{R}^p$ into vectors in $\mathbb{R}^q$ with padding as it is proposed in \citet{titouan2019sliced}, or to project the vectors in $\mathbb{R}^q$ on $\mathbb{R}^p$ as in \citet{cai2020distances}. Yet for some applications where only the distance/similarity matrices are available, a different strategy still needs to be found.
Another concern is the limited invariance properties (only with respect to axial symmetry symmetry in our case). 
Nevertheless, we expect that such a cost can be of interest in cases where invariance to symmetry is a desired property, such as in \citep{nagar2019detecting}.

\bibliographystyle{plainnat}
\bibliography{references}

\begin{thebibliography}{36}
\providecommand{\natexlab}[1]{#1}
\providecommand{\url}[1]{\texttt{#1}}
\expandafter\ifx\csname urlstyle\endcsname\relax
  \providecommand{\doi}[1]{doi: #1}\else
  \providecommand{\doi}{doi: \begingroup \urlstyle{rm}\Url}\fi

\bibitem[Alvarez-Melis et~al.(2019)Alvarez-Melis, Jegelka, and
  Jaakkola]{alvarez2019towards}
David Alvarez-Melis, Stefanie Jegelka, and Tommi~S Jaakkola.
\newblock Towards optimal transport with global invariances.
\newblock In \emph{The 22nd International Conference on Artificial Intelligence
  and Statistics}, pages 1870--1879. PMLR, 2019.

\bibitem[Ambrosio et~al.(2008)Ambrosio, Gigli, and
  Savar{\'e}]{ambrosio2008gradient}
Luigi Ambrosio, Nicola Gigli, and Giuseppe Savar{\'e}.
\newblock \emph{Gradient flows: in metric spaces and in the space of
  probability measures}.
\newblock Springer Science \& Business Media, 2008.

\bibitem[Arjovsky et~al.(2017)Arjovsky, Chintala, and
  Bottou]{arjovsky2017wasserstein}
Martin Arjovsky, Soumith Chintala, and L{\'e}on Bottou.
\newblock Wasserstein generative adversarial networks.
\newblock In \emph{International conference on machine learning}, pages
  214--223. PMLR, 2017.

\bibitem[Billingsley(2013)]{billingsley2013convergence}
Patrick Billingsley.
\newblock \emph{Convergence of probability measures}.
\newblock John Wiley \& Sons, 2013.

\bibitem[Bogo et~al.(2014)Bogo, Romero, Loper, and Black]{bogo2014faust}
Federica Bogo, Javier Romero, Matthew Loper, and Michael~J Black.
\newblock Faust: Dataset and evaluation for 3d mesh registration.
\newblock In \emph{Proceedings of the IEEE Conference on Computer Vision and
  Pattern Recognition}, pages 3794--3801, 2014.

\bibitem[Bonnotte(2013)]{bonnotte2013unidimensional}
Nicolas Bonnotte.
\newblock \emph{Unidimensional and evolution methods for optimal
  transportation}.
\newblock PhD thesis, Paris 11, 2013.

\bibitem[Cai and Lim(2020)]{cai2020distances}
Yuhang Cai and Lek-Heng Lim.
\newblock Distances between probability distributions of different dimensions.
\newblock \emph{arXiv preprint arXiv:2011.00629}, 2020.

\bibitem[Carlier et~al.(2010)Carlier, Galichon, and
  Santambrogio]{carlier2010knothe}
Guillaume Carlier, Alfred Galichon, and Filippo Santambrogio.
\newblock From knothe's transport to brenier's map and a continuation method
  for optimal transport.
\newblock \emph{SIAM Journal on Mathematical Analysis}, 41\penalty0
  (6):\penalty0 2554--2576, 2010.

\bibitem[Chowdhury and M{\'e}moli(2019)]{chowdhury2019gromov}
Samir Chowdhury and Facundo M{\'e}moli.
\newblock The gromov--wasserstein distance between networks and stable network
  invariants.
\newblock \emph{Information and Inference: A Journal of the IMA}, 8\penalty0
  (4):\penalty0 757--787, 2019.

\bibitem[Chowdhury and Needham(2021)]{chowdhury2021generalized}
Samir Chowdhury and Tom Needham.
\newblock Generalized spectral clustering via gromov-wasserstein learning.
\newblock In \emph{International Conference on Artificial Intelligence and
  Statistics}, pages 712--720. PMLR, 2021.

\bibitem[Chowdhury et~al.(2021)Chowdhury, Miller, and
  Needham]{chowdhury2021quantized}
Samir Chowdhury, David Miller, and Tom Needham.
\newblock Quantized gromov-wasserstein, 2021.

\bibitem[Courty et~al.(2016)Courty, Flamary, Tuia, and
  Rakotomamonjy]{courty2016optimal}
Nicolas Courty, R{\'e}mi Flamary, Devis Tuia, and Alain Rakotomamonjy.
\newblock Optimal transport for domain adaptation.
\newblock \emph{IEEE transactions on pattern analysis and machine
  intelligence}, 39\penalty0 (9):\penalty0 1853--1865, 2016.

\bibitem[Cuturi(2013)]{cuturi2013sinkhorn}
Marco Cuturi.
\newblock Sinkhorn distances: Lightspeed computation of optimal transport.
\newblock \emph{Advances in neural information processing systems},
  26:\penalty0 2292--2300, 2013.

\bibitem[Fatras et~al.(2021)Fatras, Zine, Majewski, Flamary, Gribonval, and
  Courty]{fatras2021minibatch}
Kilian Fatras, Younes Zine, Szymon Majewski, R{\'e}mi Flamary, R{\'e}mi
  Gribonval, and Nicolas Courty.
\newblock Minibatch optimal transport distances; analysis and applications.
\newblock \emph{arXiv preprint arXiv:2101.01792}, 2021.

\bibitem[Flamary et~al.(2021)Flamary, Courty, Gramfort, Alaya, Boisbunon,
  Chambon, Chapel, Corenflos, Fatras, Fournier, Gautheron, Gayraud, Janati,
  Rakotomamonjy, Redko, Rolet, Schutz, Seguy, Sutherland, Tavenard, Tong, and
  Vayer]{flamary2021pot}
R{\'e}mi Flamary, Nicolas Courty, Alexandre Gramfort, Mokhtar~Z. Alaya,
  Aur{\'e}lie Boisbunon, Stanislas Chambon, Laetitia Chapel, Adrien Corenflos,
  Kilian Fatras, Nemo Fournier, L{\'e}o Gautheron, Nathalie~T.H. Gayraud,
  Hicham Janati, Alain Rakotomamonjy, Ievgen Redko, Antoine Rolet, Antony
  Schutz, Vivien Seguy, Danica~J. Sutherland, Romain Tavenard, Alexander Tong,
  and Titouan Vayer.
\newblock Pot: Python optimal transport.
\newblock \emph{Journal of Machine Learning Research}, 22\penalty0
  (78):\penalty0 1--8, 2021.
\newblock URL \url{http://jmlr.org/papers/v22/20-451.html}.

\bibitem[Hagberg et~al.(2008)Hagberg, Swart, and S~Chult]{hagberg2008exploring}
Aric Hagberg, Pieter Swart, and Daniel S~Chult.
\newblock Exploring network structure, dynamics, and function using networkx.
\newblock Technical report, Los Alamos National Lab.(LANL), Los Alamos, NM
  (United States), 2008.

\bibitem[Jaini et~al.(2019)Jaini, Selby, and Yu]{jaini2019sum}
Priyank Jaini, Kira~A Selby, and Yaoliang Yu.
\newblock Sum-of-squares polynomial flow.
\newblock In \emph{International Conference on Machine Learning}, pages
  3009--3018. PMLR, 2019.

\bibitem[Lacoste-Julien(2016)]{lacoste2016convergence}
Simon Lacoste-Julien.
\newblock Convergence rate of frank-wolfe for non-convex objectives.
\newblock \emph{arXiv preprint arXiv:1607.00345}, 2016.

\bibitem[M{\'e}moli(2007)]{memoli2007use}
Facundo M{\'e}moli.
\newblock On the use of gromov-hausdorff distances for shape comparison.
\newblock 2007.

\bibitem[M{\'e}moli(2011)]{memoli2011gromov}
Facundo M{\'e}moli.
\newblock Gromov--wasserstein distances and the metric approach to object
  matching.
\newblock \emph{Foundations of computational mathematics}, 11\penalty0
  (4):\penalty0 417--487, 2011.

\bibitem[Muzellec and Cuturi(2019)]{muzellec2019subspace}
Boris Muzellec and Marco Cuturi.
\newblock Subspace detours: Building transport plans that are optimal on
  subspace projections.
\newblock \emph{arXiv preprint arXiv:1905.10099}, 2019.

\bibitem[Nagar and Raman(2019)]{nagar2019detecting}
Rajendra Nagar and Shanmuganathan Raman.
\newblock Detecting approximate reflection symmetry in a point set using
  optimization on manifold.
\newblock \emph{IEEE Transactions on Signal Processing}, 67\penalty0
  (6):\penalty0 1582--1595, 2019.

\bibitem[Paty and Cuturi(2019)]{paty2019subspace}
Fran{\c{c}}ois-Pierre Paty and Marco Cuturi.
\newblock Subspace robust wasserstein distances.
\newblock In \emph{International Conference on Machine Learning}, pages
  5072--5081. PMLR, 2019.

\bibitem[Peyr{\'e} et~al.(2016)Peyr{\'e}, Cuturi, and Solomon]{peyre2016gromov}
Gabriel Peyr{\'e}, Marco Cuturi, and Justin Solomon.
\newblock Gromov-wasserstein averaging of kernel and distance matrices.
\newblock In \emph{International Conference on Machine Learning}, pages
  2664--2672. PMLR, 2016.

\bibitem[Peyr{\'e} et~al.(2019)Peyr{\'e}, Cuturi,
  et~al.]{peyre2019computational}
Gabriel Peyr{\'e}, Marco Cuturi, et~al.
\newblock Computational optimal transport: With applications to data science.
\newblock \emph{Foundations and Trends{\textregistered} in Machine Learning},
  11\penalty0 (5-6):\penalty0 355--607, 2019.

\bibitem[Rasmussen(2003)]{rasmussen2003gaussian}
Carl~Edward Rasmussen.
\newblock Gaussian processes in machine learning.
\newblock In \emph{Summer school on machine learning}, pages 63--71. Springer,
  2003.

\bibitem[Salmona et~al.(2021)Salmona, Delon, and Desolneux]{salmona2021gromov}
Antoine Salmona, Julie Delon, and Agn{\`e}s Desolneux.
\newblock Gromov-wasserstein distances between gaussian distributions.
\newblock 2021.

\bibitem[Santambrogio(2015)]{santambrogio2015optimal}
Filippo Santambrogio.
\newblock Optimal transport for applied mathematicians.
\newblock \emph{Birk{\"a}user, NY}, 55\penalty0 (58-63):\penalty0 94, 2015.

\bibitem[Scetbon et~al.(2021)Scetbon, Peyr{\'e}, and Cuturi]{scetbon2021linear}
Meyer Scetbon, Gabriel Peyr{\'e}, and Marco Cuturi.
\newblock Linear-time gromov wasserstein distances using low rank couplings and
  costs.
\newblock \emph{arXiv preprint arXiv:2106.01128}, 2021.

\bibitem[Sturm(2012)]{sturm2012space}
Karl-Theodor Sturm.
\newblock The space of spaces: curvature bounds and gradient flows on the space
  of metric measure spaces.
\newblock \emph{arXiv preprint arXiv:1208.0434}, 2012.

\bibitem[Vayer(2020)]{vayer2020contribution}
Titouan Vayer.
\newblock \emph{A contribution to Optimal Transport on incomparable spaces}.
\newblock PhD thesis, 2020.

\bibitem[Vayer et~al.(2019{\natexlab{a}})Vayer, Courty, Tavenard, and
  Flamary]{vayer2019optimal}
Titouan Vayer, Nicolas Courty, Romain Tavenard, and R{\'e}mi Flamary.
\newblock Optimal transport for structured data with application on graphs.
\newblock In \emph{International Conference on Machine Learning}, pages
  6275--6284. PMLR, 2019{\natexlab{a}}.

\bibitem[Vayer et~al.(2019{\natexlab{b}})Vayer, Flamary, Courty, Tavenard, and
  Chapel]{titouan2019sliced}
Titouan Vayer, R{\'e}mi Flamary, Nicolas Courty, Romain Tavenard, and Laetitia
  Chapel.
\newblock Sliced gromov-wasserstein.
\newblock \emph{Advances in Neural Information Processing Systems},
  32:\penalty0 14753--14763, 2019{\natexlab{b}}.

\bibitem[Villani(2008)]{villani2008optimal}
C{\'e}dric Villani.
\newblock \emph{Optimal transport: old and new}, volume 338.
\newblock Springer Science \& Business Media, 2008.

\bibitem[Von~Mises(1964)]{von2014mathematical}
Richard Von~Mises.
\newblock \emph{Mathematical theory of probability and statistics}.
\newblock Academic Press, 1964.

\bibitem[Xu et~al.(2019)Xu, Luo, and Carin]{xu2019scalable}
Hongteng Xu, Dixin Luo, and Lawrence Carin.
\newblock Scalable gromov-wasserstein learning for graph partitioning and
  matching.
\newblock \emph{Advances in neural information processing systems},
  32:\penalty0 3052--3062, 2019.

\end{thebibliography}


\appendix

\section{Subspace detours} \label{closed_forms}

\subsection{Proofs} \label{proof_props}

\begin{proof}[Proof of Proposition \ref{prop1}] \label{proof_prop1}
    $\forall \gamma\in\Pi_{E,F}(\mu,\nu)$, 
    \begin{equation*}
        \begin{aligned}
             &\iint L(x,x',y,y') \mathrm{d}\gamma(x,y)\mathrm{d}\gamma(x',y') \\ 
             &= \iint \Big(\iint L(x,x',y,y') \gamma_{E^\bot\times F^\bot|E\times F}((x_E,y_F),(\mathrm{d}x_{E^\bot},\mathrm{d}y_{F^\bot})) \gamma_{E^\bot\times F^\bot|E\times F}((x_E',y_F'),(\mathrm{d}x_{E^\bot}',\mathrm{d}y_{F^\bot}'))\\ &\Big)\mathrm{d}\gamma_{E\times F}^*(x_E,y_F) \mathrm{d}\gamma_{E\times F}^*(x_E',y_F')
        \end{aligned}
    \end{equation*}
    However, for $\gamma_{E\times F}^*$ a.e. $(x_E,y_F),(x_E',y_F')$,
    \begin{equation*}
        \begin{aligned}
            &\iint L(x,x',y,y') \gamma_{E^\bot\times F^\bot|E\times F}((x_E,y_F),(\mathrm{d}x_{E^\bot},\mathrm{d}y_{F^\bot})) \gamma_{E^\bot\times F^\bot|E\times F}((x_E',y_F'),(\mathrm{d}x_{E^\bot}',\mathrm{d}y_{F^\bot}')) \\
            &\geq \iint L(x,x',y,y') \gamma_{E^\bot\times F^\bot|E\times F}^*((x_E,y_F),(\mathrm{d}x_{E^\bot},\mathrm{d}y_{F^\bot})) \gamma_{E^\bot\times F^\bot|E\times F}^*((x_E',y_F'),(\mathrm{d}x_{E^\bot}',\mathrm{d}y_{F^\bot}'))
        \end{aligned}
    \end{equation*}
    by definition of the Monge-Knothe coupling. This is well optimal for subspace optimal plans.
\end{proof}

\begin{proof}[Proof of Proposition \ref{prop2}] \label{proof_prop2}
    Let $f:\mathbb{R}^p\to\mathbb{R}^p$ be an invariance of $GW$ on $E^\bot$, \emph{i.e.} $\forall x\in\mathbb{R}^p$, $f(x)=(x_E,f_{E^\bot}(x_{E^\bot}))$. We first deal with $L(x,x',y,y')=\big(\|x-x'\|_2^2 - \|y-y'\|_2^2\big)^2$, and therefore $f_{E^\bot}$ is either an isometry or a translation.
    
    From lemma 6 of \citet{paty2019subspace}, we know that $\Pi(f_\#\mu,\nu)=\{(f,Id)_\#\gamma|\ \gamma\in\Pi(\mu,\nu)\}$. We can rewrite
    \begin{equation*}
        \begin{aligned}
            \Pi_{E,F}(f_\#\mu,\nu)&=\{\gamma\in\Pi(f_\#\mu,\nu)|(\pi^E,\pi^F)_\#\gamma=\gamma_{E,F}^*\} \\
            &= \{(f,Id)_\#\gamma | \gamma\in\Pi(\mu,\nu), (\pi^E,\pi^F)_\#(f,Id)_\#\gamma=\gamma_{E,F}^*\} \\
            &=  \{(f,Id)_\#\gamma | \gamma\in\Pi(\mu,\nu), (\pi^E,\pi^F)_\#\gamma=\gamma_{E,F}^*\} \\
            &= \{(f,Id)_\#\gamma|\gamma\in\Pi_{E,F}(\mu,\nu)\}
        \end{aligned}
    \end{equation*}
    using $f=(Id_E, f_{E^\bot})$, $\pi^E\circ f = Id_E$ and $(\pi^E,\pi^F)_\#(f,Id)_\#\gamma=(\pi^E,\pi^F)_\#\gamma$.
    
    Now, for all $\gamma\in\Pi_{E,F}(f_\#\mu,\nu)$, there exists $\Tilde{\gamma}\in\Pi_{E,F}(\mu,\nu)$ such that $\gamma=(f,Id)_\#\Tilde{\gamma}$ and we can disintegrate $\Tilde{\gamma}$ with respect to $\gamma_{E,F}^*$
    \begin{equation*}
        \Tilde{\gamma} = \gamma_{E,F}^* \otimes K
    \end{equation*}
    with $K$ a probability kernel on $(E\times F, \mathcal{B}(E)\otimes \mathcal{B}(F))$.
    
    For $\gamma_{E,F}^*$ almost every $(x_E,y_F),\ (x_E',y_F')$, we have
    \begin{equation*}
        \begin{aligned}
            &\iint \big(\|x_E-x_E'\|_2^2+\|x_{E^\bot}-x_{E^\bot}'\|_2^2-\|y_F-y_F'\|_2^2-\|y_{F^\bot}-y_{F^\bot}'\|_2^2\big)^2 \\
            &\ (f_{E^\bot},Id)_\# K((x_E,y_F),(\mathrm{d}x_{E^\bot},\mathrm{d}y_{F^{\bot}})) (f_{E^\bot},Id)_\# K((x_E',y_F'),(\mathrm{d}x_{E^\bot}',\mathrm{d}y_{F^{\bot}}')) \\
            &= \iint \big(\|x_E-x_E'\|_2^2+\|f_{E^\bot}(x_{E^\bot})-f_{E^\bot}(x_{E^\bot}')\|_2^2-\|y_F-y_F'\|_2^2-\|y_{F^\bot}-y_{F^\bot}'\|_2^2\big)^2 \\
            &\ K((x_E,y_F),(\mathrm{d}x_{E^\bot},\mathrm{d}y_{F^{\bot}})) K((x_E',y_F'),(\mathrm{d}x_{E^\bot}',\mathrm{d}y_{F^{\bot}}')) \\
            &= \iint \big(\|x_E-x_E'\|_2^2+\|x_{E^\bot}-x_{E^\bot}'\|_2^2-\|y_F-y_F'\|_2^2-\|y_{F^\bot}-y_{F^\bot}'\|_2^2\big)^2 \\
            &\ K((x_E,y_F),(\mathrm{d}x_{E^\bot},\mathrm{d}y_{F^{\bot}})) K((x_E',y_F'),(\mathrm{d}x_{E^\bot}',\mathrm{d}y_{F^{\bot}}'))
        \end{aligned}
    \end{equation*}
    using in the last line that $\|f_{E^\bot}(x_{E^\bot})-f_{E^\bot}(x_{E^\bot}')\|_2 = \|x_{E^\bot}-x_{E^\bot}'\|_2$ since $d(x,y)=\|x-y\|_2$ is translation and rotation invariant ($d(Ox,Oy)=d(x,y)$ and $d(Tx,Ty)=d(x,y)$).
    
    By integrating with respect to $\gamma_{E,F}^*$, we obtain
    \begin{equation} \label{eq:equality_K}
        \begin{aligned}
        &\iint\Big(\iint \big(\|x-x'\|_2^2 - \|y-y'\|_2^2\big)^2 \\
        &\ (f_{E^\bot},Id)_\# K((x_E,y_F),(\mathrm{d}x_{E^\bot},\mathrm{d}y_{F^{\bot}})) (f_{E^\bot},Id)_\# K((x_E',y_F'),(\mathrm{d}x_{E^\bot}',\mathrm{d}y_{F^{\bot}}')) \Big) \mathrm{d}\gamma_{E,F}^*(x_E,y_F) \mathrm{d}\gamma_{E,F}^*(x_E',y_F') \\
        &= \iint \big(\|x-x'\|_2^2 - \|y-y'\|_2^2\big)^2\ \mathrm{d}\Tilde{\gamma}(x,y)\mathrm{d}\Tilde{\gamma}(x',y').
        \end{aligned}
    \end{equation}
    
    Now, we show that $\gamma=(f,Id)_\#\Tilde{\gamma} = \gamma_{E,F}^*\otimes (f_{E^\bot},Id)_\# K$. Let $\phi$ some bounded measurable function on $\mathbb{R}^p\times\mathbb{R}^q$,
    \begin{equation*}
        \begin{aligned}
            \int \phi(x,y)\mathrm{d}\gamma(x,y) &= \int \phi(x,y) \mathrm{d}((f,Id)_\#\Tilde{\gamma}(x,y)) \\
            &= \int \phi(f(x),y) \mathrm{d}\Tilde{\gamma}(x,y) \\
            &= \iint \phi(f(x),y) K\big((x_E,y_F),(\mathrm{d}x_{E^\bot}, \mathrm{d}y_{F^\bot})\big)\ \mathrm{d}\gamma_{E,F}^*(x_E,y_F) \\
            &= \iint \phi((x_E,f_{E^\bot}(x_{E^\bot})),y) K\big((x_E,y_F),(\mathrm{d}x_{E^\bot}, \mathrm{d}y_{F^\bot})\big)\ \mathrm{d}\gamma_{E,F}^*(x_E,y_F) \\
            &= \iint \phi(x,y) (f_{E^\bot},Id)_\# K\big((x_E,y_F),(\mathrm{d}x_{E^\bot}, \mathrm{d}y_{F^\bot})\big)\ \mathrm{d}\gamma_{E,F}^*(x_E,y_F).
        \end{aligned}
    \end{equation*}
    
    Hence, we can rewrite \eqref{eq:equality_K} as
    \begin{equation*}
        \iint \big(\|x-x'\|_2^2-\|y-y'\|_2^2\big)^2\ \mathrm{d}(f,Id)_\#\Tilde{\gamma}(x,y) \mathrm{d}(f,Id)_\#\Tilde{\gamma}(x',y') = \iint \big(\|x-x'\|_2^2-\|y-y'\|_2^2\big)^2\ \mathrm{d}\Tilde{\gamma}(x,y)\mathrm{d}\Tilde{\gamma}(x',y').
    \end{equation*}
    
    Now, by taking the infimum with respect to $\Tilde{\gamma}\in\Pi_{E,F}(\mu,\nu)$, we find
    \begin{equation*}
        GW_{E,F}^2(f_\#\mu,\nu) = GW_{E,F}^2(\mu,\nu).
    \end{equation*}
    
    For the inner product case, we can do the same proof for isometries.
\end{proof}

\subsection{Closed-form between Gaussians}

Let $q\le p$, $\mu=\mathcal{N}(m_\mu,\Sigma)\in\mathcal{P}(\mathbb{R}^p)$, $\nu=\mathcal{N}(m_\nu,\Lambda)\in\mathcal{P}(\mathbb{R}^q)$ two Gaussian measures with $\Sigma=P_\mu D_\mu P_\mu^T$ and $\Lambda=P_\nu D_\nu P_\nu^T$.

Let $E\subset \mathbb{R}^p$ be a subspace of dimension $k$ and $F\subset\mathbb{R}^q$ a subspace of dimension $k'$.

We represent $\Sigma$ in an orthonormal basis of $E\oplus E^\bot$, and $\Lambda$ in an orthonormal basis of $F\oplus F^\bot$, \emph{i.e.} $\Sigma = \begin{pmatrix}\Sigma_E & \Sigma_{EE^\bot} \\ \Sigma_{E^\bot E} & \Sigma_{E^\bot}\end{pmatrix}$. We denote $\Sigma/\Sigma_E = \Sigma_{E^\bot}-\Sigma_{EE^\bot}^T\Sigma_E^{-1}\Sigma_{EE^\bot}$ the Schur complement of $\Sigma$ with respect to $\Sigma_E$. We know that the conditionals of Gaussians are Gaussians, and of covariance the Schur complement (see \emph{e.g.} \citet{rasmussen2003gaussian} or \citet{von2014mathematical}). 


\subsubsection{Quadratic GW problem} \label{quadraticGW_Gaussians}

For $GW$ with $c(x,x')=\|x-x'\|_2^2$, we have for now no guarantee that there exists an optimal coupling which is a transport map. \citeauthor{salmona2021gromov} proposed to restrict the problem to the set of Gaussian couplings $\pi(\mu,\nu)\cap \mathcal{N}_{p+q}$ where $\mathcal{N}_{p+q}$ denotes the set of Gaussians in $\mathbb{R}^{p+q}$. In that case, the problem becomes
\begin{equation} \label{ggw}
    GGW^2_2(\mu,\nu)=\inf_{\gamma\in\Pi(\mu,\nu)\cap \mathcal{N}_{p+q}}\ \iint \big(\|x-x'\|_2^2 - \|y-y'\|_2^2\big)^2 \mathrm{d}\gamma(x,y)\mathrm{d}\gamma(x',y').
\end{equation}

In that case, they showed that an optimal solution is of the form $T(x)=m_\nu+P_\nu A P_\mu^T(x-m_\mu)$ with $A=\begin{pmatrix} \Tilde{I}_q D_\nu^{\frac12}(D_\mu^{(q)})^{-\frac12} & 0_{q,p-q} \end{pmatrix}$ and $\Tilde{I}_q$ of the form $\mathrm{diag}\big((\pm 1)_{i\le q}\big)$.

Since the problem is translation invariant, we can always solve the problem between the centered measures.

In the following, we suppose that $k=k'$. Let's denote $T_{E,F}$ the optimal transport map for \eqref{ggw} between $\mathcal{N}(0,\Sigma_E)$ and $\mathcal{N}(0,\Lambda_F)$. According to Theorem 4.1 in \citet{salmona2021gromov}, such a solution exists and is of the form \eqref{monge_map_gw}. We also denote $T_{E^\bot,F^\bot}$ the optimal transport map between $\mathcal{N}(0,\Sigma/\Sigma_E)$ and $\mathcal{N}(0,\Lambda/\Lambda_F)$ (which is well defined since we assumed $p\ge q$ and hence $p-k\ge q-k'$ since $k=k'$).

We know that the Monge-Knothe transport map will be a linear map $T_{\mathrm{MK}}(x)=Bx$ with $B$ a block triangular matrix of the form
\begin{equation*}
    B = \begin{pmatrix}
        T_{E,F} & 0_{k',p-k} \\
        C & T_{E^\bot,F^\bot}
    \end{pmatrix} \in \mathbb{R}^{q\times p},
\end{equation*}
with $C\in\mathbb{R}^{(q-k')\times k}$, and such that $B\Sigma B^T = \Lambda$ (to have well a transport map between $\mu$ and $\nu$).

Actually, 
\begin{equation*}
    B\Sigma B^T = \begin{pmatrix}
        T_{E,F}\Sigma_E T_{E,F}^T & T_{E,F}\Sigma_E C^T + T_{E,F}\Sigma_{EE^\bot} T_{E^\bot,F^\bot}^T \\
        (C\Sigma_E + T_{E^\bot,F^\bot}\Sigma_{E^\bot E})T_{E,F}^T & (C\Sigma_E+T_{E^\bot,F^\bot}\Sigma_{E^\bot E})C^T + (C\Sigma_{EE^\bot}+T_{E^\bot,F^\bot}\Sigma_{E^\bot})T_{E^\bot,F^\bot}^T.
    \end{pmatrix}
\end{equation*}

First, we have well $T_{E,F}\Sigma_E T_{E,F}^T = \Lambda_F$ as $T_{E,F}$ is a transport map between $\mu_E$ and $\nu_F$. Then,
\begin{equation*}
    B\Sigma B^T = \Lambda \iff \begin{cases}
        T_{E,F}\Sigma_E T_{E,F}^T = \Lambda_F \\
        T_{E,F}\Sigma_E C^T + T_{E,F}\Sigma_{EE^\bot} T_{E^\bot,F^\bot}^T = \Lambda_{FF^\bot} \\
        (C\Sigma_E + T_{E^\bot,F^\bot}\Sigma_{E^\bot E})T_{E,F}^T = \Lambda_{F^\bot F} \\
        (C\Sigma_E+T_{E^\bot,F^\bot}\Sigma_{E^\bot E})C^T + (C\Sigma_{EE^\bot}+T_{E^\bot,F^\bot}\Sigma_{E^\bot})T_{E^\bot,F^\bot}^T = \Lambda_{F^\bot}.
    \end{cases}
\end{equation*}

We have 
\begin{equation*}
    (C\Sigma_E + T_{E^\bot,F^\bot}\Sigma_{E^\bot E})T_{E,F}^T = \Lambda_{F^\bot F} \iff C\Sigma_E T_{E,F}^T = \Lambda_{F^\bot F} - T_{E^\bot, F^\bot}\Sigma_{E^\bot E} T_{E,F}^T.
\end{equation*}

As $k=k'$, $\Sigma_E T_{E,F}^T\in\mathbb{R}^{k\times k}$ and is invertible (as $\Sigma_E$ and $\Lambda_F$ are positive definite and $T_{E,F}=P_{\mu_E}A_{E,F}P_{\nu_F}$ with $A_{E,F}=\begin{pmatrix} \Tilde{I}_k D_{\nu_F}^{\frac11} D_{\mu_E}^{-\frac12} \end{pmatrix}$ with positive values on the diagonals. Hence, we have 
\begin{equation*}
    C = (\Lambda_{F^\bot F} (T_{E,F}^T)^{-1} - T_{E^\bot,F^\bot}\Sigma_{E^\bot E})\Sigma_E^{-1}.
\end{equation*}

Now, we still have to check the last two equations. First, 
\begin{equation*}
    \begin{aligned}
        T_{E,F}\Sigma_E C^T + T_{E,F}\Sigma_{EE^\bot} T_{E^\bot,F^\bot}^T &= T_{E,F}\Sigma_E \Sigma_E^{-1} T_{E,F}^{-1}\Lambda_{F^\bot F}^T  - T_{E,F}\Sigma_E \Sigma_E^{-1} \Sigma_{E^\bot E}^T T_{E^\bot, F^\bot}^T+ T_{E,F}\Sigma_{EE^\bot} T_{E^\bot,F^\bot}^T \\
        &= \Lambda_{FF^\bot}.
    \end{aligned}
\end{equation*}

And for the last equation,

\begin{equation*}
    \begin{aligned}
        &(C\Sigma_E+T_{E^\bot,F^\bot}\Sigma_{E^\bot E})C^T + (C\Sigma_{EE^\bot}+T_{E^\bot,F^\bot}\Sigma_{E^\bot})T_{E^\bot,F^\bot}^T \\
        &= (\Lambda_{F^\bot F} (T_{E,F}^T)^{-1} - T_{E^\bot,F^\bot}\Sigma_{E^\bot E}+T_{E^\bot,F^\bot}\Sigma_{E^\bot E}) \Sigma_E^{-1} (T_{E,F}^{-1}\Lambda_{F^\bot F}^T-\Sigma_{E^\bot E}^T T_{E^\bot,F^\bot}^T) \\
        &\ +\Lambda_{F^\bot F} (T_{E,F}^T)^{-1} \Sigma_E^{-1}\Sigma_{EE^\bot} T_{E^\bot, F^\bot}^T - T_{E^\bot,F^\bot}\Sigma_{E^\bot E}\Sigma_E^{-1} \Sigma_{EE^\bot} T_{E^\bot, F^\bot}^T + T_{E^\bot, F^\bot}\Sigma_{E^\bot} T_{E^\bot, F^\bot}^T \\
        &= \Lambda_{F^\bot F} (T_{E,F}^T )^{-1} \Sigma_E^{-1} T_{E,F}^{-1} \Lambda_{F^\bot F}^T - \Lambda_{F^\bot F} (T_{E,F}^T)^{-1} \Sigma_E^{-1}\Sigma_{E^\bot E}^T T_{E^\bot, F^\bot}^T - T_{E^\bot,F^\bot}\Sigma_{E^\bot E}\Sigma_E^{-1}T_{E,F}^{-1}\Lambda_{F^\bot F}^T \\
        &\ + T_{E^\bot,F^\bot}\Sigma_{E^\bot E} \Sigma_E^{-1} \Sigma_{E^\bot E}^T T_{E^\bot,F^\bot}^T + T_{E^\bot,F^\bot}\Sigma_{E^\bot E} \Sigma_E^{-1} T_{E,F}^{-1} \Lambda_{F^\bot F}^T - T_{E^\bot,F^\bot} \Sigma_{E^\bot E} \Sigma_E^{-1} \Sigma_{E^\bot E}^T T_{E^\bot F^\bot}^T \\
        &\ + \Lambda_{F^\bot F} (T_{E,F}^T)^{-1}\Sigma_E^{-1}\Sigma_{EE^\bot} T_{E^\bot,F^\bot}^T - T_{E^\bot,F^\bot}\Sigma_{E^\bot E}\Sigma_E^{-1}\Sigma_{E^\bot E}^T T_{E^\bot,F^\bot}^T + T_{E^\bot,F^\bot}\Sigma_{ E^\bot}T_{E^\bot,F^\bot}^T \\
        &= \Lambda_{F^\bot F} (T_{E,F}^T)^{-1}\Sigma_E^{-1}T_{E,F}^{-1}\Lambda_{F^\bot F}^T - T_{E^\bot,F^\bot} \Sigma_{E^\bot E}\Sigma_E^{-1} \Sigma_{E^\bot E}^T T_{E^\bot,F^\bot}^T + T_{E^\bot,F^\bot}\Sigma_{E^\bot} T_{E^\bot,F^\bot}^T
    \end{aligned}
\end{equation*}
Now, using that $(T_{E,F}^T)^{-1}\Sigma_E^{-1}T_{E,F}^{-1} = (T_{E,F}\Sigma_E T_{E,F}^T)^{-1} = \Lambda_F^{-1}$ and $\Sigma_{E^\bot}-\Sigma_{E^\bot E}\Sigma_E^{-1}\Sigma_{E^\bot E}^T = \Sigma/\Sigma_E$, we have
\begin{equation*}
    \begin{aligned}
        &(C\Sigma_E+T_{E^\bot,F^\bot}\Sigma_{E^\bot E})C^T + (C\Sigma_{EE^\bot}+T_{E^\bot,F^\bot}\Sigma_{E^\bot})T_{E^\bot,F^\bot}^T \\
        &= \Lambda_{F^\bot F}\Lambda_F^{-1}\Lambda_{F^\bot F}^T + T_{E^\bot,F^\bot} (\Sigma_{E^\bot}-\Sigma_{E^\bot E}\Sigma_E^{-1}\Sigma_{E^\bot E}^T) T_{E^\bot, F^\bot}^T \\
        &= \Lambda_{F^\bot F}\Lambda_F^{-1}\Lambda_{F^\bot F}^T + \Lambda/\Lambda_F \\
        &= \Lambda_{F^\bot}
    \end{aligned}
\end{equation*}

Then $\pi_{\mathrm{MK}}$ is of the form $(Id,T_{\mathrm{MK}})_\#\mu$ with
\begin{equation*}
    T_{\mathrm{MK}}(x) = m_\nu+B(x-m_\mu).
\end{equation*}

\subsubsection{Closed-form between Gaussians for Monge-Independent} \label{mi_gaussians}

Suppose $k\ge k'$ in order to be able to define the OT map between $\mu_E$ and $\nu_F$.

For the Monge-Independent plan $\pi_{\mathrm{MI}}=\gamma_{E,F}^*\otimes(\mu_{E^\bot|E}\otimes \nu_{F^\bot|F})$, let $(X,Y)\sim \pi_{\mathrm{MI}}$. We know that $\pi_{\mathrm{MI}}$ is a degenerate Gaussian with a covariance of the form
\begin{equation*}
    \mathrm{Cov}(X,Y) = \begin{pmatrix}
        \mathrm{Cov}(X) & C \\
        C^T & \mathrm{Cov}(Y)
    \end{pmatrix}
\end{equation*}
where $\mathrm{Cov}(X)=\Sigma$ and $\mathrm{Cov}(Y)=\Lambda$. Moreover, we know that $C$ is of the form
\begin{equation*}
    \begin{pmatrix}
        \mathrm{Cov}(X_E,Y_F) & \mathrm{Cov}(X_E,Y_{F^\bot}) \\
        \mathrm{Cov}(X_{E^\bot},Y_F) & \mathrm{Cov}(X_{E^\bot},Y_{F^\bot})
    \end{pmatrix}.
\end{equation*}
Let's assume that $m_\mu=m_\nu=0$, then 
\begin{equation*}
    \begin{aligned}
        \mathrm{Cov}(X_E,Y_F) &= \mathrm{Cov}(X_E,T_{E,F}X_E) = \mathbb{E}[X_EX_E^T]T_{E,F}^T = \Sigma_E T_{E,F}^T,
    \end{aligned}
\end{equation*}
\begin{equation*}
    \begin{aligned}
        \mathrm{Cov}(X_E,Y_{F^\bot}) &= \mathbb{E}[X_E Y_{F^\bot}^T] \\
        &= \mathbb{E}[\mathbb{E}[X_E Y_{F^\bot}^T|X_E,Y_F]] \\
        &= \mathbb{E}[X_E\mathbb{E}[Y_{F^\bot}^T|Y_F]]
    \end{aligned}
\end{equation*}
since $Y_F=T_{E,F}X_E$, $X_E$ is $\sigma(Y_F)$-measurable. Now, using the equation (A.6) from \citet{rasmussen2003gaussian}, we have
\begin{equation*}
    \begin{aligned}
        \mathbb{E}[Y_{F^\bot}|Y_F] &= \Lambda_{F^\bot F}\Lambda_F^{-1}Y_F \\
        &= \Lambda_{F^\bot F}\Lambda_F^{-1}T_{E,F} X_E
    \end{aligned}
\end{equation*}
and 
\begin{equation*}
    \mathbb{E}[X_{E^\bot}|X_E] = \Sigma_{E^\bot E} \Sigma_E^{-1} X_E.
\end{equation*}

Hence,
\begin{equation*}
    \begin{aligned}
        \mathrm{Cov}(X_E,Y_{F^\bot}) &=\mathbb{E}[X_E\mathbb{E}[Y_{F^\bot}^T|Y_F]] \\
        &= \mathbb{E}[X_E X_E^T] T_{E,F}^T \Lambda_F^{-1} \Lambda_{F^\bot F}^T \\
        &= \Sigma_E T_{E,F}^T \Lambda_F^{-1} \Lambda_{F^\bot F}^T.
    \end{aligned}
\end{equation*}

We also have
\begin{equation*}
    \mathrm{Cov}(X_{E^\bot},Y_F) = \mathbb{E}[X_{E^\bot} X_E^T T_{E,F}^T] = \Sigma_{E^\bot E}T_{E,F}^T,
\end{equation*}
and
\begin{equation*}
    \begin{aligned}
        \mathrm{Cov}(X_{E^\bot}, Y_{F^\bot}) &= \mathbb{E}[X_{E^\bot} Y_{F^\bot}^T] \\
        &= \mathbb{E}[\mathbb{E}[X_{E^\bot}Y_{F^\bot}^T|X_E,Y_F]] \\
        &= \mathbb{E}[\mathbb{E}[X_{E^\bot}|X_E]\mathbb{E}[Y_{F^\bot}^T|Y_F]]\ \text{ by independence} \\
        &= \mathbb{E}[\Sigma_{E^\bot E}\Sigma_E^{-1} X_E X_E^T T_{E,F}^T \Lambda_F^{-1}\Lambda_{F^\bot F}^T] \\
        &= \Sigma_{E^\bot E}T_{E,F}^T \Lambda_F^{-1}\Lambda_{F^\bot F}^T.
    \end{aligned}
\end{equation*}

Finally, we find 
\begin{equation*}
    C = \begin{pmatrix}
        \Sigma_E T_{E,F}^T & \Sigma_E T_{E,F}^T \Lambda_F^{-1} \Lambda_{F^\bot F}^T \\
        \Sigma_{E^\bot E}T_{E,F}^T & \Sigma_{E^\bot E}T_{E,F}^T \Lambda_F^{-1}\Lambda_{F^\bot F}^T
    \end{pmatrix}.
\end{equation*}

By taking orthogonal bases $(V_E,V_{E^\bot})$ and $(V_F,V_{F^\bot})$, we can put it in a more compact way such as in Proposition 4 in \citet{muzellec2019subspace}:
\begin{equation*}
    C = (V_E\Sigma_E+V_{E^\bot}\Sigma_{E^\bot E})T_{E,F}^T(V_F^T+\Lambda_F^{-1}\Lambda_{F^\bot F}^T V_{F^\bot}^T).
\end{equation*}

To check it, just expand the terms and see that $C_{E,F} = V_E C V_F^T$.

\section{Knothe-Rosenblatt} 

\subsection{Properties of \eqref{HadamardWasserstein}}

\begin{proof}[Proof of Proposition \ref{prop_hadamard}] \label{proof_Hadamard}
    Let $\mu,\nu\in\mathcal{P}(\mathbb{R}^d)$,
    \begin{enumerate}
        \item $(x,x')\mapsto x\odot x'$ is a continuous map, therefore $L$ is lower semi-continuous. Hence, by applying lemma 2.2.1 of \citep{vayer2020contribution}, we have that $\gamma\mapsto \iint L(x,x',y,y')\mathrm{d}\gamma(x,y)\mathrm{d}\gamma(x',y')$ is lower semi-continuous for the weak convergence of measures.
        
        Now, as $\Pi(\mu,\nu)$ is a compact set (see the proof of Theorem 1.7 in \cite{santambrogio2015optimal} for the Polish space case, and of Theorem 1.4 for the compact metric space), and $\gamma\mapsto \iint L\mathrm{d}\gamma\mathrm{d}\gamma$ is lower semi-continuous for the weak convergence, we can apply the Weierstrass theorem (Memo 2.2.1 in \cite{vayer2020contribution}) which states that \eqref{HadamardWasserstein} always admits a minimizer.
        \item See Theorem 16 in \cite{chowdhury2019gromov}.
        \item For invariances, we first look at the properties that must be satisfied by $T$ in order to have: $\forall x,x',\ f(x,x')=f(T(x),T(x'))$ where $f:(x,x')\mapsto x\odot x'$.
        
        We find that $\forall x\in\mathbb{R}^d,\ \forall 1\le i\le d,\ |[T(x)]_i|=|x_i|$ because, denoting $(e_i)_{i=1}^d$ as the canonical basis, we have
        \begin{equation*}
            f(e_i,x)=x e_i = f(T(e_i),T(x)) \Longrightarrow [T(e_i)]_i[T(x)]_i = x_i \Longrightarrow |[T(x)]_i|=|x_i|
        \end{equation*}
         as $f(e_i,e_i)=f(T(e_i),T(e_i)) \Longrightarrow [T(e_i)]_i^2 = 1$.
         
        If we take for $T$ the reflection with respect to axis, then it satisfies well $f(x,x')=f(T(x),T(x'))$.
        Moreover, it is well an equivalent relation, and therefore we have a distance on the quotient space.
    \end{enumerate}
\end{proof}

\begin{proposition}
    In a slightly more general setting, let $X_0=X_1=\mathbb{R}^d$, functions $f_0,\ f_1$ from $\mathbb{R}^d\times \mathbb{R}^d$ to $\mathbb{R}^d$ and measures $\mu_0\in\mathcal{P}(X_0)$, $\mu_1\in\mathcal{P}(X_1)$. Then the family $\mathcal{X}_t=(X_0\times X_1, f_t, \gamma^*)$ defines a geodesic between $\mathcal{X}_0$ and $\mathcal{X}_1$, where $\gamma^*$ is the optimal coupling of $\mathcal{HW}$ between $\mu_0$ and $\mu_1$, and 
    \begin{equation*}
        f_t((x_0,x_0'),(x_1,x_1'))=(1-t)f_0(x_0,x_0')+tf_1(x_1,x_1').
    \end{equation*}
\end{proposition}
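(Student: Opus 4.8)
The plan is to transpose the classical Sturm-style argument for constant-speed geodesics in Gromov--Wasserstein spaces to the Hadamard--Wasserstein functional with $\mathbb{R}^d$-valued structure functions. Throughout I write, for mm-spaces $(X,f,\mu)$ and $(Y,g,\nu)$ with $f\colon X\times X\to\mathbb{R}^d$ and $g\colon Y\times Y\to\mathbb{R}^d$, $\mathcal{HW}^2\big((X,f,\mu),(Y,g,\nu)\big)=\inf_{\gamma\in\Pi(\mu,\nu)}\iint\|f(x,x')-g(y,y')\|_2^2\,\mathrm{d}\gamma(x,y)\,\mathrm{d}\gamma(x',y')$, which reduces to \eqref{HadamardWasserstein} when $f=g$ is the Hadamard product; I assume the mild integrability $\iint\|f_i(x,x')\|_2^2\,\mathrm{d}\mu_i(x)\,\mathrm{d}\mu_i(x')<\infty$ for $i=0,1$ so that all quantities below are finite, and I use that the proof of Proposition~\ref{prop_hadamard}(2), following \citet{chowdhury2019gromov}, carries over verbatim to $\mathbb{R}^d$-valued weights, so $\mathcal{HW}$ remains a pseudometric on this class; each $\mathcal{X}_t=(\mathbb{R}^{2d},f_t,\gamma^*)$ is a legitimate object of it. The goal is the constant-speed identity $\mathcal{HW}(\mathcal{X}_s,\mathcal{X}_t)=|s-t|\,\mathcal{HW}(\mathcal{X}_0,\mathcal{X}_1)$ for all $s,t\in[0,1]$.

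First I would record two observations. (i) The two ends of the family agree with the original spaces up to $\mathcal{HW}$-isomorphism: writing $\pi^1$ for the projection $\mathbb{R}^d\times\mathbb{R}^d\to\mathbb{R}^d$ onto the first factor, the coupling $(\mathrm{Id},\pi^1)_\#\gamma^*\in\Pi(\gamma^*,\mu_0)$ has zero cost, because under it the $X_0$-point attached to $u\in\mathbb{R}^{2d}$ is almost surely $\pi^1 u$, so $\mathcal{HW}(\mathcal{X}_0,(X_0,f_0,\mu_0))=0$, and symmetrically $\mathcal{HW}(\mathcal{X}_1,(X_1,f_1,\mu_1))=0$. (ii) The ``diagonal'' coupling $\bar\gamma:=(\mathrm{Id},\mathrm{Id})_\#\gamma^*\in\Pi(\gamma^*,\gamma^*)$ is optimal for $\mathcal{HW}(\mathcal{X}_0,\mathcal{X}_1)$ and realises the value $\iint\|f_0(x_0,x_0')-f_1(x_1,x_1')\|_2^2\,\mathrm{d}\gamma^*(x_0,x_1)\,\mathrm{d}\gamma^*(x_0',x_1')=\mathcal{HW}^2(\mu_0,\mu_1)$. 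Indeed, feeding $\bar\gamma$ into the definition of $\mathcal{HW}^2(\mathcal{X}_0,\mathcal{X}_1)$ produces exactly this integral, which equals $\mathcal{HW}^2(\mu_0,\mu_1)$ by optimality of $\gamma^*$; conversely, pushing an arbitrary $\rho\in\Pi(\gamma^*,\gamma^*)$ through $(u,v)\mapsto(\pi^1 u,\pi^2 v)$ (with $\pi^2$ the projection onto the second factor) yields a coupling in $\Pi(\mu_0,\mu_1)$, so the cost of $\rho$ equals $\iint\|f_0-f_1\|_2^2$ against that coupling and is therefore $\ge\mathcal{HW}^2(\mu_0,\mu_1)$; hence the infimum defining $\mathcal{HW}^2(\mathcal{X}_0,\mathcal{X}_1)$ cannot drop below $\mathcal{HW}^2(\mu_0,\mu_1)$.

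Next I would establish the interpolation inequality and then close the gap. From $f_t((x_0,x_0'),(x_1,x_1'))=(1-t)f_0(x_0,x_0')+tf_1(x_1,x_1')$ one reads off the pointwise identity $f_s-f_t=(t-s)(f_0-f_1)$ on $(\mathbb{R}^{2d})^2$; using $\bar\gamma$ (not necessarily optimal) as a coupling between the common measure $\gamma^*$ of $\mathcal{X}_s$ and of $\mathcal{X}_t$ gives $\mathcal{HW}^2(\mathcal{X}_s,\mathcal{X}_t)\le(t-s)^2\iint\|f_0-f_1\|_2^2\,\mathrm{d}\gamma^*\,\mathrm{d}\gamma^*=(t-s)^2\,\mathcal{HW}^2(\mathcal{X}_0,\mathcal{X}_1)$ by (ii), i.e.\ $\mathcal{HW}(\mathcal{X}_s,\mathcal{X}_t)\le|t-s|\,\mathcal{HW}(\mathcal{X}_0,\mathcal{X}_1)$ for all $s,t\in[0,1]$. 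Setting $D=\mathcal{HW}(\mathcal{X}_0,\mathcal{X}_1)<\infty$ and taking $0\le s\le t\le1$, the triangle inequality then gives $D\le\mathcal{HW}(\mathcal{X}_0,\mathcal{X}_s)+\mathcal{HW}(\mathcal{X}_s,\mathcal{X}_t)+\mathcal{HW}(\mathcal{X}_t,\mathcal{X}_1)\le\big(s+(t-s)+(1-t)\big)D=D$, so every term must meet its own upper bound; in particular $\mathcal{HW}(\mathcal{X}_s,\mathcal{X}_t)=(t-s)D=|t-s|\,\mathcal{HW}(\mathcal{X}_0,\mathcal{X}_1)$, which is the constant-speed geodesic property, and continuity of $t\mapsto\mathcal{X}_t$ follows at once.

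The two genuinely routine steps are the identity $f_s-f_t=(t-s)(f_0-f_1)$ and the triangle-inequality squeeze. The step that needs care, and what I expect to be the main obstacle, is observation (ii): one has to check both that $\bar\gamma$ is feasible and optimal for the \emph{product-space} problem $\mathcal{HW}(\mathcal{X}_0,\mathcal{X}_1)$ and that its value is exactly $\mathcal{HW}^2(\mu_0,\mu_1)$, since the squeeze crucially relies on $\iint\|f_0-f_1\|_2^2\,\mathrm{d}\gamma^*\,\mathrm{d}\gamma^*$ being \emph{equal} to $D^2$ and not merely an upper bound for it; this is exactly where optimality of $\gamma^*$ is used. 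A secondary, purely bookkeeping point is confirming that the pseudometric axioms (hence the triangle inequality) survive the passage to $\mathbb{R}^d$-valued structure functions, which only requires rereading the gluing-lemma argument behind Proposition~\ref{prop_hadamard}(2).
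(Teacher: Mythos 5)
Your argument is correct and is essentially the same route as the paper's, which simply defers to Theorem 3.1 of \citet{sturm2012space}: interpolate the structure functions along the optimal coupling $\gamma^*$, check that the diagonal coupling gives $\mathcal{HW}(\mathcal{X}_s,\mathcal{X}_t)\le|t-s|\,\mathcal{HW}(\mathcal{X}_0,\mathcal{X}_1)$, and upgrade to equality via the triangle inequality squeeze. Your writeup is in fact more self-contained than the paper's one-line citation, and you correctly identify the two points that need checking, namely that the value of the diagonal coupling is exactly $\mathcal{HW}^2(\mu_0,\mu_1)$ (via the pushforward $(u,v)\mapsto(\pi^1u,\pi^2v)$ for the lower bound) and that the pseudometric property of Proposition~\ref{prop_hadamard} survives the passage to general $\mathbb{R}^d$-valued structure functions.
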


\begin{proof}
    See Theorem 3.1 in \cite{sturm2012space}.
\end{proof}

\subsection{Proof of Theorem \ref{ThGWKR}} \label{ProofThGWKR}

We first recall a useful theorem.

\begin{theorem}[Theorem 2.8 in \citet{billingsley2013convergence}] \label{billingsley}
    Let $\Omega=X\times Y$ be a separable space, and let $P,P_n\in\mathcal{P}(\Omega)$ with marginals $P_X$ (respectively $P_{n,X}$) and $P_Y$ (respectively $P_{n,Y}$). Then, $P_{n,X}\otimes P_{n,Y} \xrightarrow[]{\mathcal{D}}P$ if and only if $P_{n,X}\xrightarrow[]{\mathcal{D}}P_X$, $P_{n,Y}\xrightarrow[]{\mathcal{D}}P_Y$ and $P=P_X\otimes P_Y$. 
\end{theorem}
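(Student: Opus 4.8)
The plan is to split the biconditional into its two implications and to factor out the single analytic fact on which both directions rest, namely that weak convergence is preserved under taking products: if $P_{n,X}\xrightarrow[]{\mathcal{D}}P_X$ and $P_{n,Y}\xrightarrow[]{\mathcal{D}}P_Y$, then $P_{n,X}\otimes P_{n,Y}\xrightarrow[]{\mathcal{D}}P_X\otimes P_Y$. Call this the \emph{product lemma}. Once it is available, both directions of the theorem follow quickly, so the substance of the argument is concentrated there.

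To prove the product lemma I would test against the multiplicative class of functions $h(x,y)=f(x)g(y)$ with $f\in C_b(X)$, $g\in C_b(Y)$, for which the integral factorizes as $\int h\,\mathrm{d}(P_{n,X}\otimes P_{n,Y})=\big(\int f\,\mathrm{d}P_{n,X}\big)\big(\int g\,\mathrm{d}P_{n,Y}\big)$; by the two marginal convergences this tends to $\big(\int f\,\mathrm{d}P_X\big)\big(\int g\,\mathrm{d}P_Y\big)=\int h\,\mathrm{d}(P_X\otimes P_Y)$. To upgrade convergence on this subclass to genuine weak convergence I would first establish tightness of $\{P_{n,X}\otimes P_{n,Y}\}$: each convergent marginal sequence is tight (automatic in the application to Theorem~\ref{ThGWKR}, where the measures have compact support, and valid in general on Polish spaces), so for $\epsilon>0$ one picks compacts $K_X\subset X$, $K_Y\subset Y$ with $P_{n,X}(K_X)\ge 1-\epsilon$ and $P_{n,Y}(K_Y)\ge 1-\epsilon$ for all $n$; then $K_X\times K_Y$ is compact and $(P_{n,X}\otimes P_{n,Y})(K_X\times K_Y)\ge 1-2\epsilon$. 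By Prokhorov's theorem the product sequence is relatively compact, and every subsequential weak limit $Q$ satisfies $\int f\otimes g\,\mathrm{d}Q=\int f\otimes g\,\mathrm{d}(P_X\otimes P_Y)$ for all such $f,g$. Since, by separability of $X$ and $Y$, finite linear combinations of products $f\otimes g$ form an algebra that separates points and hence determines a Borel measure on $X\times Y$, every subsequential limit equals $P_X\otimes P_Y$, so the full sequence converges. When $X$ and $Y$ are Euclidean, as in the application, I would instead simply factorize characteristic functions, $\widehat{P_{n,X}\otimes P_{n,Y}}(s,t)=\widehat{P_{n,X}}(s)\,\widehat{P_{n,Y}}(t)$, and invoke L\'evy's continuity theorem, bypassing the tightness bookkeeping.

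With the lemma in hand the two implications are immediate. For the $(\Leftarrow)$ direction, the hypotheses $P_{n,X}\xrightarrow[]{\mathcal{D}}P_X$, $P_{n,Y}\xrightarrow[]{\mathcal{D}}P_Y$ and $P=P_X\otimes P_Y$ give, via the lemma, $P_{n,X}\otimes P_{n,Y}\xrightarrow[]{\mathcal{D}}P_X\otimes P_Y=P$. For the $(\Rightarrow)$ direction, assume $P_{n,X}\otimes P_{n,Y}\xrightarrow[]{\mathcal{D}}P$; the coordinate projections $\pi^X,\pi^Y$ are continuous, so the continuous mapping theorem sends the $X$-marginal of $P_{n,X}\otimes P_{n,Y}$ to the $X$-marginal of $P$, and since the $X$-marginal of a product is its first factor we obtain $P_{n,X}\xrightarrow[]{\mathcal{D}}P_X$, and symmetrically $P_{n,Y}\xrightarrow[]{\mathcal{D}}P_Y$. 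Feeding these back into the lemma yields $P_{n,X}\otimes P_{n,Y}\xrightarrow[]{\mathcal{D}}P_X\otimes P_Y$, and comparing with the assumed limit $P$ and invoking uniqueness of weak limits forces $P=P_X\otimes P_Y$, establishing all three asserted conclusions. The main obstacle is precisely the product lemma, and within it the passage from convergence on the multiplicative class $\{f\otimes g\}$ to full weak convergence; the cleanest route is the tightness-plus-convergence-determining-class argument above, with separability ensuring that products of bounded continuous functions determine measures on $X\times Y$.
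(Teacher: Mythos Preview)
Your proof is correct and self-contained, but note that the paper does not actually prove this statement: it is merely \emph{recalled} as Theorem~2.8 of \citet{billingsley2013convergence} and then invoked as a black box in the proof of Theorem~\ref{ThGWKR}. There is therefore no ``paper's own proof'' to compare against.

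That said, your argument is the standard one and matches what one finds in Billingsley. The reduction to a \emph{product lemma} plus continuous-mapping for the marginals is exactly the right decomposition; the tightness-plus-convergence-determining-class route for the lemma is the textbook approach, and your remark that in the Euclidean, compactly supported setting of Theorem~\ref{ThGWKR} one can shortcut via characteristic functions and L\'evy's continuity theorem is well taken. One minor caveat: your tightness step (``each convergent marginal sequence is tight'') uses the direct half of Prokhorov, which in full generality needs a Polish space rather than a merely separable one; since the paper's application is to absolutely continuous measures with compact support in $\mathbb{R}^d$, this is harmless here, but it is worth flagging if you intend the argument at the stated level of generality.
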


\begin{proof}[Proof of Theorem~\ref{ThGWKR}] 
    The following proof is mainly inspired by the proof of Theorem ~\ref{th_KnotheToBrenier} in \citep{carlier2010knothe}[Theorem 2.1], \citep{bonnotte2013unidimensional}[Theorem 3.1.6] and \citep{santambrogio2015optimal}[Theorem 2.23].
    
    Let $\mu,\nu\in\mathcal{P}(\mathbb{R}^d)$, absolutely continuous, with finite fourth moments and compact supports. We recall the problem $\mathcal{HW}_t$,
    \begin{equation*}
        \mathcal{HW}_t^2(\mu,\nu) = \inf_{\gamma\in\Pi(\mu,\nu)}\ \iint \Big( \prod_{i=1}^{k-1} \lambda_t^{(i)}\Big)\ (x_kx_k'-y_ky_k')^2\ \mathrm{d}\gamma_t(x,y)\mathrm{d}\gamma_t(x',y'),
    \end{equation*}
    with $\forall t>0,\ \forall i\in\{1,\dots,d-1\}$, $\lambda_t^{(i)}>0$ and $\lambda_t^{(i)}\xrightarrow[t\to 0]{}0$.
    
    First, let's denote $\gamma_t$ the optimal coupling for $\mathcal{HW}_t$ for all $t>0$. We want to show that $\gamma_t\xrightarrow[t\rightarrow 0]{\mathcal{D}}\gamma_K$ with $\gamma_K=(Id\times T_K)_\#\mu$ and $T_K$ our alternate Knothe-Rosenblatt rearrangement. Let $\gamma\in\Pi(\mu,\nu)$ such that $\gamma_t\xrightarrow[t\rightarrow 0]{\mathcal{D}}\gamma$ (true up to subsequence as $\{\mu\}$ and $\{\nu\}$ are tight in $\mathcal{P}(X)$ and $\mathcal{P}(Y)$ if $X$ and $Y$ are polish space, therefore, by \citep{villani2008optimal}[Lemma 4.4], $\Pi(\mu,\nu)$ is a tight set, and we can apply the Prokhorov theorem \citep{santambrogio2015optimal}[Box 1.4] on $(\gamma_t)_t$ and extract a subsequence)).
    
    \proofpart{1}{}
    
    First, let's notice that:
    \begin{align*}
        \mathcal{HW}_t^2(\mu,\nu) &= \iint \sum_{k=1}^d\Big( \prod_{i=1}^{k-1} \lambda_t^{(i)}\Big)\ (x_kx_k'-y_ky_k')^2\ \mathrm{d}\gamma_t(x,y)\mathrm{d}\gamma_t(x',y') \\
        &= \iint (x_1x_1'-y_1y_1')^2\ \mathrm{d}\gamma_t(x,y)\mathrm{d}\gamma_t(x',y') + \iint \sum_{k=2}^d\Big( \prod_{i=1}^{k-1} \lambda_t^{(i)}\Big)\ (x_kx_k'-y_ky_k')^2\ \mathrm{d}\gamma_t(x,y)\mathrm{d}\gamma_t(x',y'). \\
    \end{align*}
    Moreover, as $\gamma_t$ is the optimal coupling between $\mu$ and $\nu$, and $\gamma_K\in\Pi(\mu,\nu)$,
    \begin{align*}
        \mathcal{HW}_t^2(\mu,\nu) &\le \iint \sum_{k=1}^d\Big( \prod_{i=1}^{k-1} \lambda_t^{(i)}\Big)\ (x_kx_k'-y_ky_k')^2\ \mathrm{d}\gamma_K(x,y)\mathrm{d}\gamma_K(x',y') \\
        &= \iint (x_1x_1'-y_1y_1')^2\ \mathrm{d}\gamma_K(x,y)\mathrm{d}\gamma_K(x',y') + \iint \sum_{k=2}^d\Big( \prod_{i=1}^{k-1} \lambda_t^{(i)}\Big)\ (x_kx_k'-y_ky_k')^2\ \mathrm{d}\gamma_K(x,y)\mathrm{d}\gamma_K(x',y'). \\
    \end{align*}
    
    In our case, we have $\gamma_t\xrightarrow[t\rightarrow 0]{\mathcal{D}}\gamma$, thus, by Theorem \ref{billingsley}, we have $\gamma_t\otimes\gamma_t\xrightarrow[t\rightarrow0]{\mathcal{D}}\gamma\otimes\gamma$. Using the fact that $\forall i,\ \lambda_t^{(i)}\xrightarrow[t\rightarrow0]{}0$ (and lemma 1.8 of \citet{santambrogio2015optimal}, since we are on compact support, we can bound the cost (which is continuous) by its max), we obtain the following inequality
    \begin{equation*}
        \iint (x_1x_1'-y_1y_1')^2\ \mathrm{d}\gamma(x,y)\mathrm{d}\gamma(x',y') \le \iint (x_1x_1'-y_1y_1')^2\ \mathrm{d}\gamma_K(x,y)\mathrm{d}\gamma_K(x',y').
    \end{equation*}
    By denoting $\gamma^1$ and $\gamma_K^1$ the marginals on the first variables, we can use the projection $\pi^1(x,y)=(x_1,y_1)$, such as $\gamma^1=\pi^1_\#\gamma$ and $\gamma_K^1 = \pi^1_\#\gamma_K$. Hence, we get
    \begin{equation*}
        \iint (x_1x_1'-y_1y_1')^2\ \mathrm{d}\gamma^1(x_1,y_1)\mathrm{d}\gamma^1(x_1',y_1') \le \iint (x_1x_1'-y_1y_1')^2\ \mathrm{d}\gamma^1_K(x_1,y_1)\mathrm{d}\gamma^1_K(x_1',y_1').
    \end{equation*}
    However, $\gamma_K^1$ was constructed in order to be the unique optimal map for this cost (either $T_{asc}$ or $T_{desc}$ according to theorem \citep{vayer2020contribution}[Theorem 4.2.4]). Thus, we can deduce that $\gamma^1=(Id\times T_K^1)_\#\mu^1 = \gamma_K^1$.
    
    \proofpart{2}{}
    
    We know that for any $t>0$, $\gamma_t$ and $\gamma_K$ share the same marginals. Thus, as previously, $\pi^1_\#\gamma_t$ should have a cost worse than $\pi^1_\#\gamma_K$, which translates to
    \begin{align*}
        \iint (x_1x_1'-y_1y_1')^2\ \mathrm{d}\gamma^1_K(x_1,y_1)\mathrm{d}\gamma^1_K(x_1',y_1') &= \iint (x_1x_1'-y_1y_1')^2\ \mathrm{d}\gamma^1(x_1,y_1)\mathrm{d}\gamma^1(x_1',y_1')\\
        &\le \iint (x_1x_1'-y_1y_1')^2\ \mathrm{d}\gamma^1_t(x_1,y_1)\mathrm{d}\gamma^1_t(x_1',y_1').
    \end{align*}
    Therefore, we have the following inequality,
    \begin{align*}
        &\iint (x_1x_1'-y_1y_1')^2\ \mathrm{d}\gamma^1(x,y)\mathrm{d}\gamma^1(x',y') + \iint \sum_{k=2}^d\Big( \prod_{i=1}^{k-1} \lambda_t^{(i)}\Big)\ (x_kx_k'-y_ky_k')^2\ \mathrm{d}\gamma_t(x,y)\mathrm{d}\gamma_t(x',y') \\
        &\le \mathcal{HW}_t^2(\mu,\nu) \\
        &\le \iint (x_1x_1'-y_1y_1')^2\ \mathrm{d}\gamma^1(x,y)\mathrm{d}\gamma^1(x',y') + \iint \sum_{k=2}^d\Big( \prod_{i=1}^{k-1} \lambda_t^{(i)}\Big)\ (x_kx_k'-y_ky_k')^2\ \mathrm{d}\gamma_K(x,y)\mathrm{d}\gamma_K(x',y').
    \end{align*}
    We can substract the first term and factorize by $\lambda_t^{(1)}>0$,
    \begin{align*}
        &\iint \sum_{k=2}^d\Big( \prod_{i=1}^{k-1} \lambda_t^{(i)}\Big)\ (x_kx_k'-y_ky_k')^2\ \mathrm{d}\gamma_t(x,y)\mathrm{d}\gamma_t(x',y') \\
        &= \lambda_t^{(1)}\Big( \iint (x_2x_2'-y_2y_2')^2\ \mathrm{d}\gamma_t(x,y)\mathrm{d}\gamma_t(x',y')+\iint \sum_{k=3}^d\Big( \prod_{i=2}^{k-1} \lambda_t^{(i)}\Big)\ (x_kx_k'-y_ky_k')^2\ \mathrm{d}\gamma_t(x,y)\mathrm{d}\gamma_t(x',y') \Big) \\
        &\le \lambda_t^{(1)}\Big( \iint (x_2x_2'-y_2y_2')^2\ \mathrm{d}\gamma_K(x,y)\mathrm{d}\gamma_K(x',y')+\iint \sum_{k=3}^d\Big( \prod_{i=2}^{k-1} \lambda_t^{(i)}\Big)\ (x_kx_k'-y_ky_k')^2\ \mathrm{d}\gamma_K(x,y)\mathrm{d}\gamma_K(x',y') \Big).
    \end{align*}
    By dividing by $\lambda_t^{(1)}$ and by taking the limit $t\rightarrow 0$ as in the first part, we get
    \begin{equation} \label{inequalityBeforeDisintegration}
        \iint (x_2x_2'-y_2y_2')^2\ \mathrm{d}\gamma(x,y)\mathrm{d}\gamma(x',y') \le \iint (x_2x_2'-y_2y_2')^2\ \mathrm{d}\gamma_K(x,y)\mathrm{d}\gamma_K(x',y').
    \end{equation}
    Now, the 2 terms depend only on $(x_2,y_2)$ and $(x_2',y_2')$. We will project on the two first coordinates, \emph{i.e.} let $\pi^{1,2}(x,y)=((x_1,x_2),(y_1,y_2))$ and $\gamma^{1,2}=\pi^{1,2}_\#\gamma$, $\gamma^{1,2}_K=\pi^{1,2}_\#\gamma_K$. Using the disintegration of measures, we know that there exist kernels $\gamma^{2|1}$ and $\gamma^{2|1}_K$ such that $\gamma^{1,2}=\gamma^1 \otimes \gamma^{2|1}$ and $\gamma^{1,2}_K = \gamma_K^1 \otimes \gamma^{2|1}_K$, where
    \begin{equation*}
        \forall A\in\mathcal{B}(X\times Y),\ \mu\otimes K(A) = \iint \mathbb{1}_A(x,y) K(x,\mathrm{d}y)\mu(\mathrm{d}x).
    \end{equation*}
    We can rewrite the previous equation (\ref{inequalityBeforeDisintegration}) as
    \begin{equation} \label{inequality_1}
        \begin{aligned}
            &\iint (x_2x_2'-y_2y_2')^2\ \mathrm{d}\gamma(x,y)\mathrm{d}\gamma(x',y')  \\
            &= \iiiint (x_2x_2'-y_2y_2')^2\ \gamma^{2|1}((x_1,y_1),(\mathrm{d}x_2,\mathrm{d}y_2))\gamma^{2|1}((x'_1,y'_1),(\mathrm{d}x'_2,\mathrm{d}y'_2))\mathrm{d}\gamma^1(x_1,y_1)\mathrm{d}\gamma^1(x_1',y_1') \\
            &\le \iiiint (x_2x_2'-y_2y_2')^2\ \gamma_K^{2|1}((x_1,y_1),(\mathrm{d}x_2,\mathrm{d}y_2))\gamma_K^{2|1}((x'_1,y'_1),(\mathrm{d}x'_2,\mathrm{d}y'_2))\mathrm{d}\gamma_K^1(x_1,y_1)\mathrm{d}\gamma_K^1(x_1',y_1').
        \end{aligned}
    \end{equation}
    
    Now, we will assume at first that the marginals of $\gamma^{2|1}((x_1,y_1),\cdot)$ are well $\mu^{2|1}(x_1,\cdot)$ and $\nu^{2|1}(y_1,\cdot)$. Then, by definition of $\gamma_K^{2|1}$, as it is optimal for the $GW$ cost with inner products, we have for all $(x_1,y_1), (x_1',y_1')$,
    \begin{equation} \label{inequality_KR}
        \begin{aligned}
            &\iint (x_2x_2'-y_2y_2')^2\ \gamma_K^{2|1}((x_1,y_1),(\mathrm{d}x_2,\mathrm{d}y_2))\gamma_K^{2|1}((x'_1,y'_1),(\mathrm{d}x'_2,\mathrm{d}y'_2)) \\ 
            &\le \iint (x_2x_2'-y_2y_2')^2\ \gamma^{2|1}((x_1,y_1),(\mathrm{d}x_2,\mathrm{d}y_2))\gamma^{2|1}((x'_1,y'_1),(\mathrm{d}x'_2,\mathrm{d}y'_2)).
        \end{aligned}
    \end{equation}
    Moreover, we know from the first part that $\gamma^1=\gamma_K^1$, then by integrating with respect to $(x_1,y_1)$ and $(x_1',y_1')$, we have
    \begin{equation} \label{inequality_2}
        \begin{aligned}
            &\iiiint (x_2x_2'-y_2y_2')^2\ \gamma_K^{2|1}((x_1,y_1),(\mathrm{d}x_2,\mathrm{d}y_2))\gamma_K^{2|1}((x'_1,y'_1),(\mathrm{d}x'_2,\mathrm{d}y'_2))\mathrm{d}\gamma^1(x_1,y_1)\mathrm{d}\gamma^1(x_1',y_1')  \\ 
            &\le \iiiint (x_2x_2'-y_2y_2')^2\ \gamma^{2|1}((x_1,y_1),(\mathrm{d}x_2,\mathrm{d}y_2))\gamma^{2|1}((x'_1,y'_1),(\mathrm{d}x'_2,\mathrm{d}y'_2))\mathrm{d}\gamma^1(x_1,y_1)\mathrm{d}\gamma^1(x_1',y_1'). 
        \end{aligned}
    \end{equation}
    By \eqref{inequality_1} and \eqref{inequality_2}, we deduce that we have an equality and we get
    \begin{equation} \label{equality}
        \begin{aligned}
            &\iint\Big(\iint (x_2x_2'-y_2y_2')^2\ \gamma^{2|1}((x_1,y_1),(\mathrm{d}x_2,\mathrm{d}y_2))\gamma^{2|1}((x'_1,y'_1),(\mathrm{d}x'_2,\mathrm{d}y'_2)) \\
            &- \iint(x_2x_2'-y_2y_2')^2\ \gamma^{2|1}_K((x_1,y_1),(\mathrm{d}x_2,\mathrm{d}y_2))\gamma^{2|1}_K((x'_1,y'_1),(\mathrm{d}x'_2,\mathrm{d}y'_2)) \Big)\mathrm{d}\gamma^1(x_1,y_1)\mathrm{d}\gamma^1(x_1',y_1') = 0.
        \end{aligned}
    \end{equation}
    However, we know by \eqref{inequality_KR} that the middle part of \eqref{equality} is nonnegative, thus we have for $\gamma^1$-a.e. $(x_1,y_1),(x_1',y_1')$, 
    \begin{align*}
        &\iint(x_2x_2'-y_2y_2')^2\ \gamma^{2|1}_K((x_1,y_1),(\mathrm{d}x_2,\mathrm{d}y_2))\gamma^{2|1}_K((x'_1,y'_1),(\mathrm{d}x'_2,\mathrm{d}y'_2)) \\ 
        &= \iint(x_2x_2'-y_2y_2')^2\ \gamma^{2|1}((x_1,y_1),(\mathrm{d}x_2,\mathrm{d}y_2))\gamma^{2|1}((x'_1,y'_1),(\mathrm{d}x'_2,\mathrm{d}y'_2)).
    \end{align*}
    From that, we can conclude as in the first part that $\gamma^{2|1}=\gamma_K^{2|1}$ (by unicity of the optimal map). And thus $\gamma^{1,2}=\gamma_K^{1,2}$.
    
    Now, we still have to show that the marginals of $\gamma^{2|1}((x_1,y_1),\cdot)$ and $\gamma_K^{2,1}((x_1,y_1),\cdot)$ are well the same, \emph{i.e.} $\mu^{2|1}(x_1,\cdot)$ and $\nu^{2|1}(y_1,\cdot)$. Let $\phi$ and $\psi$ be continuous functions, then we have to show that for $\gamma^1$-a.e. $(x_1,y_1)$, we have
    \begin{equation*}
        \begin{cases}
            \int \phi(x_2)\gamma^{2|1}((x_1,y_1),(\mathrm{d}x_2,\mathrm{d}y_2)) = \int \phi(x_2)\mu^{2|1}(x_1,\mathrm{d}x_2) \\
            \int \psi(y_2)\gamma^{2|1}((x_1,y_1),(\mathrm{d}x_2,\mathrm{d}y_2)) = \int \psi(y_2)\nu^{2|1}(y_1,\mathrm{d}y_2).
        \end{cases}
    \end{equation*}
    As we want to prove it for $\gamma^1$-a.e. $(x_1,y_1)$, it is sufficient to prove that for all continuous function $\xi$,
    \begin{equation*}
        \begin{cases}
            \iint \xi(x_1,y_1)\phi(x_2)\gamma^{2|1}((x_1,y_1),(\mathrm{d}x_2,\mathrm{d}y_2)) \mathrm{d}\gamma^1(x_1,y_1) = \iint \xi(x_1,y_1) \phi(x_2)\mu^{2|1}(x_1,\mathrm{d}x_2)\mathrm{d}\gamma^1(x_1,y_1)  \\
            \iint \xi(x_1,y_1) \psi(y_2)\gamma^{2|1}((x_1,y_1),(\mathrm{d}x_2,\mathrm{d}y_2)) \mathrm{d}\gamma^1(x_1,y_1) = \iint \xi(x_1,y_1) \psi(y_2)\nu^{2|1}(y_1,\mathrm{d}y_2) \mathrm{d}\gamma^1(x_1,y_1).
        \end{cases}
    \end{equation*}
    First, we can use the projections $\pi_x(x,y)=x$ and $\pi_y(x,y)=y$. Moreover, we know that $\gamma^1 = (Id\times T_K^1)_\#\mu^1$. The alternate Knothe-Rosenblatt rearrangement is, as the usual one, bijective (because $\mu$ and $\nu$ are absolutely continuous), and thus, as we suppose that $\nu$ satisfies the same hypothesis than $\mu$, we also have $\gamma^1 = ((T_K^1)^{-1},Id)_\#\nu^1$. Let's note $\Tilde{T}_K^1=(T_K^1)^{-1}$. Then, the equalities that we want to show are
    \begin{equation*}
        \begin{cases}
            \iint \xi(x_1,T_K^1(x_1))\phi(x_2)\gamma^{2|1}_x((x_1,T_K^1(x_1)),\mathrm{d}x_2) \mathrm{d}\mu^1(x_1) = \iint \xi(x_1,T_K^1(x_1)) \phi(x_2)\mu^{2|1}(x_1,\mathrm{d}x_2)\mathrm{d}\mu^1(x_1)  \\
            \iint \xi(\Tilde{T}_K^1(y_1),y_1) \psi(y_2)\gamma_y^{2|1}((\Tilde{T}_K^1(y_1),y_1),\mathrm{d}y_2) \mathrm{d}\nu^1(y_1) = \iint \xi(\Tilde{T}_K^1(y_1),y_1) \psi(y_2)\nu^{2|1}(y_1,\mathrm{d}y_2) \mathrm{d}\nu^1(y_1).
        \end{cases}
    \end{equation*}
    And we have indeed
    \begin{align*}
         \iint \xi(x_1,T_K^1(x_1))\phi(x_2)\gamma^{2|1}_x((x_1,T_K^1(x_1)),\mathrm{d}x_2) \mathrm{d}\mu^1(x_1) &= \iint \xi(x_1,T_K^1(x_1))\phi(x_2) \mathrm{d}\gamma^{1,2}((x_1,x_2),(y_1,y_2)) \\
         &= \iint \xi(x_1,T_K^1(x_1))\phi(x_2) \mathrm{d}\gamma_x^{1,2}(x_1,x_2) \\
         &= \iint \xi(x_1,T_K^1(x_1))\phi(x_2) \mu^{2|1}(x_1,\mathrm{d}x_2)\mathrm{d}\mu^1(x_1).
    \end{align*}
    We can do the same for the $\nu$ part by symmetry.
    
    \proofpart{3}{}
    
    Now, we can proceed the same way by induction. Let $\ell\in\{2,\dots,d\}$ and suppose that the result is true in dimension $\ell-1$ (\emph{i.e.} $\gamma^{1:\ell-1}=\pi^{1:\ell-1}_\#\gamma = \gamma_K^{1:\ell-1}$).
    
    For this part of the proof, we rely on \citep{santambrogio2015optimal}[Theorem 2.23]. We can build a measure $\gamma_K^t\in\mathcal{P}(\mathbb{R}^d\times\mathbb{R}^d)$ such that
    \begin{equation} \label{conditions}
        \begin{cases}
            \pi^x_\#\gamma_K^t = \mu \\
            \pi^y_\#\gamma_K^t = \nu \\
            \pi^{1:\ell-1}_\#\gamma_K^t = \eta_{t,\ell}
        \end{cases}
    \end{equation}
    where $\eta_{t,\ell}$ is the optimal transport plan between $\mu^\ell=\pi^{1:\ell-1}_\#\mu$ and $\nu^\ell=\pi^{1+\ell-1}_\#\nu$ for the objective
    \begin{equation*}
        \iint \sum_{k=1}^{\ell-1}\Big( \prod_{i=1}^{k-1} \lambda_t^{(i)}\Big) (x_kx_k'-y_ky_k')^2\ \mathrm{d}\gamma(x,y)\mathrm{d}\gamma(x',y').
    \end{equation*}
    By induction hypothesis, we have $\eta_{t,\ell}\xrightarrow[t\to 0]{\mathcal{D}} \pi^{1:\ell-1}_\#\gamma_K$. To build such a measure, we can first disintegrate $\mu$ and $\nu$
    \begin{equation*}
        \begin{cases}
            \mu = \mu^{1:\ell-1}\otimes \mu^{\ell:d|1:\ell-1} \\
            \nu = \nu^{1:\ell-1}\otimes \nu^{\ell:d|1:\ell-1},
        \end{cases}
    \end{equation*}
    then we pick the Knothe transport $\gamma_K^{\ell:d|1:\ell-1}$ between $\mu^{\ell:d|1:\ell-1}$ and $\nu^{\ell:d|1:\ell-1}$. Thus, by taking $\gamma_K^T = \eta_{t,\ell}\otimes \gamma_K^{\ell:d|1:\ell-1}$, $\gamma_K^T$ satisfies well the conditions (\ref{conditions}).
    
    Hence, we have,
    \begin{equation*}
        \begin{aligned}
            &\iint \sum_{k=1}^{\ell-1}\Big( \prod_{i=1}^{k-1} \lambda_t^{(i)}\Big) (x_kx_k'-y_ky_k')^2\ \mathrm{d}\gamma_K^t(x,y)\mathrm{d}\gamma_K^t(x',y') \\ &= \iint \sum_{k=1}^{\ell-1}\Big( \prod_{i=1}^{k-1} \lambda_t^{(i)}\Big) (x_kx_k'-y_ky_k')^2\ \mathrm{d}\eta_{t,\ell}(x_{1:\ell-1},y_{1:\ell-1})\mathrm{d}\eta_{t,\ell}(x'_{1:\ell-1},y'_{1:\ell-1}) \\
            &\leq \iint \sum_{k=1}^{\ell-1}\Big( \prod_{i=1}^{k-1} \lambda_t^{(i)}\Big) (x_kx_k'-y_ky_k')^2\ \mathrm{d}\gamma_t(x,y)\mathrm{d}\gamma_t(x',y'),
        \end{aligned}
    \end{equation*}
    and therefore
    \begin{equation*}
        \begin{aligned}
            &\iint\sum_{k=1}^{\ell-1}\Big( \prod_{i=1}^{k-1} \lambda_t^{(i)}\Big) (x_kx_k'-y_ky_k')^2\ \mathrm{d}\gamma_K^t(x,y)\mathrm{d}\gamma_K^t(x',y') + \iint \sum_{k=\ell}^d\Big( \prod_{i=1}^{k-1} \lambda_t^{(i)}\Big)\ (x_kx_k'-y_ky_k')^2\ \mathrm{d}\gamma_t(x,y)\mathrm{d}\gamma_t(x',y') \\
            &\le \mathcal{HW}_t^2(\mu,\nu) \\
            &\le \iint \sum_{k=1}^{\ell-1}\Big( \prod_{i=1}^{k-1} \lambda_t^{(i)}\Big) (x_kx_k'-y_ky_k')^2\ \mathrm{d}\gamma_K^t(x,y)\mathrm{d}\gamma_K^t(x',y') + \iint \sum_{k=\ell}^d\Big( \prod_{i=1}^{k-1} \lambda_t^{(i)}\Big)\ (x_kx_k'-y_ky_k')^2\ \mathrm{d}\gamma_K^t(x,y)\mathrm{d}\gamma_K^t(x',y'). \\
        \end{aligned}
    \end{equation*}
    As before, by substracting the first term, and dividing by $\prod_{i=1}^{\ell-1}\lambda_t^{(i)}$, we get
    \begin{equation*}
        \iint (x_\ell x_\ell'-y_\ell y_\ell')^2\mathrm{d}\gamma_t(x,y)\mathrm{d}\gamma_t(x',y') \leq \iint (x_\ell x_\ell'-y_\ell y_\ell')^2\mathrm{d}\gamma_K^t(x,y)\mathrm{d}\gamma_K^t(x',y').
    \end{equation*}
    For the right hand side, using that $\gamma_K^t = \eta_{t,\ell}\otimes \gamma_K^{\ell:d|1:\ell-1}$, we have
    \begin{equation*}
        \begin{aligned}
            &\iint (x_\ell x_\ell'-y_\ell y_\ell')^2\mathrm{d}\gamma_K^t(x,y)\mathrm{d}\gamma_K^t(x',y') \\ &= \iiiint (x_\ell x_\ell'-y_\ell y_\ell')^2\gamma_K^{\ell:d|1:\ell-1}((x_{1:\ell-1},y_{1:\ell-1}),(\mathrm{d}x_{\ell:d},\mathrm{d}y_{\ell:d}))\gamma_K^{\ell:d|1:\ell-1}((x_{1:\ell-1}',y_{1:\ell-1}'),(\mathrm{d}x_{\ell:d}',\mathrm{d}y_{\ell:d}'))\\&\mathrm{d}\eta_{t,\ell}(x_{1:\ell-1},y_{1:\ell-1})\mathrm{d}\eta_{t,\ell}(x_{1:\ell-1}',y_{1:\ell-1}') \\
            &= \iiiint (x_\ell x_\ell'-y_\ell y_\ell')^2\gamma_K^{\ell|1:\ell-1}((x_{1:\ell-1},y_{1:\ell-1}),(\mathrm{d}x_{\ell},\mathrm{d}y_{\ell}))\gamma_K^{\ell|1:\ell-1}((x_{1:\ell-1}',y_{1:\ell-1}'),(\mathrm{d}x_{\ell}',\mathrm{d}y_{\ell}'))\\&\mathrm{d}\eta_{t,\ell}(x_{1:\ell-1},y_{1:\ell-1})\mathrm{d}\eta_{t,\ell}(x_{1:\ell-1}',y_{1:\ell-1}').
        \end{aligned}
    \end{equation*}
    Let's note for $\eta_{t,\ell}$ almost every $(x_{1:\ell-1},y_{1:\ell-1}),(x_{1:\ell-1}',y_{1:\ell-1}')$
    \begin{equation*}
        GW(\mu^{\ell|1:\ell-1}, \nu^{\ell|1:\ell-1})=\iint (x_\ell x_\ell'-y_\ell y_\ell')^2\gamma_K^{\ell|1:\ell-1}((x_{1:\ell-1},y_{1:\ell-1}),(\mathrm{d}x_{\ell},\mathrm{d}y_{\ell}))\gamma_K^{\ell|1:\ell-1}((x_{1:\ell-1}',y_{1:\ell-1}'),(\mathrm{d}x_{\ell}',\mathrm{d}y_{\ell}')),
    \end{equation*}
    then
    \begin{equation*}
        \iint (x_\ell x_\ell'-y_\ell y_\ell')^2\mathrm{d}\gamma_K^t(x,y)\mathrm{d}\gamma_K^t(x',y') = \iint GW(\mu^{\ell|1:\ell-1}, \nu^{\ell|1:\ell-1}) \mathrm{d}\eta_{t,\ell}(x_{1:\ell-1},y_{1:\ell-1})\mathrm{d}\eta_{t,\ell}(x_{1:\ell-1}',y_{1:\ell-1}').
    \end{equation*}
    By Theorem \ref{billingsley}, we have $\eta_{t,\ell}\otimes \eta_{t,\ell}\xrightarrow[t\to 0]{\mathcal{D}} \pi^{1:\ell-1}_\#\gamma_K \otimes \pi^{1:\ell-1}_\#\gamma_K$. So, if $\eta\mapsto \iint GW(\mu^{\ell|1:\ell-1}, \nu^{\ell|1:\ell-1})\mathrm{d}\eta\mathrm{d}\eta$ is continuous over the transport plans between $\mu^{1:\ell-1}$ and $\nu^{1:\ell-1}$, we have
    \begin{equation*}
        \begin{aligned}
            &\iint (x_\ell x_\ell'-y_\ell y_\ell')^2\mathrm{d}\gamma_K^t(x,y)\mathrm{d}\gamma_K^t(x',y') \\
            \ &\xrightarrow[t\to 0]{} \iint GW(\mu^{\ell|1:\ell-1}, \nu^{\ell|1:\ell-1}) \pi^{1:\ell-1}_\#\gamma_K(\mathrm{d}x_{1:\ell-1},\mathrm{d}y_{1:\ell-1}) \pi^{1:\ell-1}_\#\gamma_K(\mathrm{d}x_{1:\ell-1}',\mathrm{d}y_{1:\ell-1}')
        \end{aligned}
    \end{equation*}
    and
    \begin{equation*}
        \begin{aligned}
            &\iint GW(\mu^{\ell|1:\ell-1}, \nu^{\ell|1:\ell-1}) \pi^{1:\ell-1}_\#\gamma_K(\mathrm{d}x_{1:\ell-1},\mathrm{d}y_{1:\ell-1}) \pi^{1:\ell-1}_\#\gamma_K(\mathrm{d}x_{1:\ell-1}',\mathrm{d}y_{1:\ell-1}') \\
            &= \iint (x_\ell x_\ell'-y_\ell y_\ell')^2 \mathrm{d}\gamma_K(x,y)\mathrm{d}\gamma_K(x',y')
        \end{aligned}
    \end{equation*}
    by replacing the true expression of $GW$ and using the disintegration $\gamma_K = (\pi_K^{1:\ell-1})_\#\gamma_K\otimes \gamma_K^{\ell|1:\ell-1}$.
    
    For the continuity, we can apply \citep{santambrogio2015optimal}[Lemma 1.8] (as in the \citep{santambrogio2015optimal}[Corollary 2.24]) with $X=Y=\mathbb{R}^{\ell-1}\times\mathbb{R}^{\ell-1}$, $\Tilde{X}=\Tilde{Y}=\mathcal{P}(\Omega)$ with $\Omega\subset \mathbb{R}^{d-\ell+1}\times \mathbb{R}^{d-\ell+1}$ and $c(a,b)=GW(a,b)$ which can be bounded on compact supports by $\max |c|$. Moreover, we use Theorem \ref{billingsley} and the fact that $\eta_t\otimes\eta_t \xrightarrow[t\to0]{\mathcal{D}} \gamma_K^{1:\ell-1}\otimes \gamma_K^{1:\ell-1}$.
    
    By taking the limit $t\to 0$, we now get
    \begin{equation*}
        \iint (x_\ell x_\ell'-y_\ell y_\ell')^2\mathrm{d}\gamma(x,y)\mathrm{d}\gamma(x',y') \leq \iint (x_\ell x_\ell'-y_\ell y_\ell')^2\mathrm{d}\gamma_K(x,y)\mathrm{d}\gamma_K(x',y').
    \end{equation*}
    
    We can now disintegrate with respect to $\gamma^{1:\ell-1}$ as before. We just need to prove that the marginals coincide which is done by taking for test  functions
    \begin{equation*}
        \begin{cases}
            \xi(x_1,...,x_{\ell-1},y_1,...,y_{\ell-1})\phi(x_\ell) \\
            \xi(x_1,...,x_{\ell-1},y_1,...,y_{\ell-1})\psi(y_\ell)
        \end{cases}
    \end{equation*}
    and using the fact that the measures are concentrated on $y_k=T_K(x_k)$.
    
    \proofpart{4}{} 
    
    Therefore, we have well $\gamma_t\xrightarrow[t\to 0]{\mathcal{D}}\gamma_K$.
    Finally, for the $L^2$ convergence, we have
    \begin{equation*}
        \int \|T_t(x)-T_K(x)\|_2^2\ \mu(\mathrm{d}x) = \int \|y-T_K(x)\|_2^2\ \mathrm{d}\gamma_t(x,y) \to \int \|y-T_K(x)\|_2^2\ \mathrm{d}\gamma_K(x,y) = 0
    \end{equation*}
    as $\gamma_t=(Id\times T_t)_\#\mu$ and $\gamma_K=(Id\times T_K)_\#\mu$. Hence, $T_t\xrightarrow[t\to 0]{L^2}T_K$.
    
\end{proof}

\section{Solving $\mathcal{HW}$ in the discrete setting} \label{appendix:HW}

In this part, we derive formulas to solve numerically $\mathcal{HW}$.

Let $x_1,\dots,x_n\in\mathbb{R}^d$, $y_1,\dots,y_m\in\mathbb{R}^d$, $\alpha\in\Sigma_n$, $\beta\in\Sigma_m$, $p=\sum_{i=1}^n \alpha_i\delta_{x_i}$ and $q=\sum_{j=1}^m\beta_j\delta_{y_j}$ two discrete measures in $\mathbb{R}^d$. The Hadamard Wasserstein problem \eqref{HadamardWasserstein} becomes in the discrete setting
\begin{equation*}
    \begin{aligned}
        \mathcal{HW}^2(p,q) &= \inf_{\gamma\in\Pi(p,q)}\ \sum_{i,j}\sum_{k,\ell} \|x_i\odot x_k - y_j\odot y_\ell\|_2^2\ \gamma_{i,j}\gamma_{k,\ell} \\
        &= \inf_{\gamma\in\Pi(p,q)}\ \mathcal{E}(\gamma)
    \end{aligned}
\end{equation*}
with $\mathcal{E}(\gamma)=\sum_{i,j}\sum_{k,\ell} \|x_i\odot x_k - y_j\odot y_\ell\|_2^2\ \gamma_{i,j}\gamma_{k,\ell}$.
As denoted in \cite{peyre2016gromov}, if we note $\mathcal{L}_{i,j,k,\ell}=\|x_i\odot x_k-y_j\odot y_\ell\|_2^2$, then we have
\begin{equation*}
     \mathcal{E}(\gamma) = \langle \mathcal{L}\otimes \gamma,\gamma\rangle,
\end{equation*}
where $\otimes$ is defined as
\begin{equation*}
    \mathcal{L}\otimes \gamma = \Big( \sum_{k,\ell} \mathcal{L}_{i,j,k,\ell}\gamma_{k,\ell}\Big)_{i,j}\in\mathbb{R}^{n\times m}.
\end{equation*}

\begin{proposition} \label{formulaHW}
    Let $\gamma\in\Pi(p,q)=\{M\in(\mathbb{R}_+)^{n\times m},\ M\mathbb{1}_m=p,\ M^T\mathbb{1}_n=q\}$, where $\mathbb{1}_n = (1,\dots,1)^T \in\mathbb{R}^n$. Let's note $X=(x_i\odot x_k)_{i,k}\in\mathbb{R}^{n\times n\times d}$, $Y=(y_j\odot y_\ell)_{j,\ell}\in\mathbb{R}^{m\times m\times d}$, $X^{(2)}=(\|X_{i,k}\|_2^2)_{i,k}\in \mathbb{R}^{n\times n}$, $Y^{(2)}=(\|Y_{j,l}\|_2^2)_{j,l}\in\mathbb{R}^{m\times m}$ and, $\forall t\in\{1,...,d\},\ X_t=(X_{i,k,t})_{i,k}\in\mathbb{R}^{n\times n}$ and $Y_t=(Y_{j,\ell,t})_{j,\ell}\in\mathbb{R}^{m\times m}$. Then,
    \begin{equation*}
        \mathcal{L}\otimes \gamma = X^{(2)}p\mathbb{1}_m^T+\mathbb{1}_n q^T (Y^{(2)})^T-2\sum_{t=1}^d X_t\gamma Y_t^T.
    \end{equation*}
\end{proposition}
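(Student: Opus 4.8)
The plan is to expand the squared Euclidean norm pointwise and then use the marginal constraints $\gamma\mathbb{1}_m=p$ and $\gamma^T\mathbb{1}_n=q$ to collapse two of the three resulting sums into rank-one outer products. Concretely, for each $(i,j,k,\ell)$ I would write
\[
    \|x_i\odot x_k - y_j\odot y_\ell\|_2^2 = \|x_i\odot x_k\|_2^2 + \|y_j\odot y_\ell\|_2^2 - 2\langle x_i\odot x_k,\, y_j\odot y_\ell\rangle = X^{(2)}_{i,k} + Y^{(2)}_{j,\ell} - 2\sum_{t=1}^d X_{i,k,t}Y_{j,\ell,t},
\]
using that the inner product of Hadamard products is the sum over coordinates $t$ of the products of the $t$-th entries. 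Plugging this into $(\mathcal{L}\otimes\gamma)_{i,j}=\sum_{k,\ell}\mathcal{L}_{i,j,k,\ell}\gamma_{k,\ell}$ splits the quantity into three sums over $(k,\ell)$.

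For the first sum, $\sum_{k,\ell} X^{(2)}_{i,k}\gamma_{k,\ell} = \sum_k X^{(2)}_{i,k}(\gamma\mathbb{1}_m)_k = (X^{(2)}p)_i$; since this is independent of $j$, as a matrix it is the rank-one term $X^{(2)}p\,\mathbb{1}_m^T$. Symmetrically, $\sum_{k,\ell} Y^{(2)}_{j,\ell}\gamma_{k,\ell} = \sum_\ell Y^{(2)}_{j,\ell}(\gamma^T\mathbb{1}_n)_\ell = (Y^{(2)}q)_j$, independent of $i$, giving $\mathbb{1}_n q^T (Y^{(2)})^T$. For the cross term I would exchange the order of summation over $t$ and $(k,\ell)$ and recognize $\sum_{k,\ell} X_{i,k,t}\gamma_{k,\ell}Y_{j,\ell,t} = (X_t\gamma Y_t^T)_{i,j}$, so that the contribution is $-2\sum_{t=1}^d X_t\gamma Y_t^T$. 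Summing the three pieces yields the stated identity.

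I do not expect a genuine obstacle here: this is the tensor-product decomposition of a Gromov-type loss in the spirit of \citet{peyre2016gromov}, specialized to the separable quadratic cost $L(x,x',y,y')=\|x\odot x'-y\odot y'\|_2^2$, which here factors as $f_1\oplus f_2 - 2\sum_t h_1^{(t)}\otimes h_2^{(t)}$ with $f_1=X^{(2)}$, $f_2=Y^{(2)}$, $h_1^{(t)}=X_t$, $h_2^{(t)}=Y_t$. The only points requiring care are bookkeeping ones: tracking which index of $\gamma$ is contracted against $p$ versus $q$ (and hence whether a transpose appears), and verifying that the two ``constant'' terms are genuinely constant along a row, respectively a column, so that they legitimately assemble into the outer products $X^{(2)}p\,\mathbb{1}_m^T$ and $\mathbb{1}_n q^T(Y^{(2)})^T$.
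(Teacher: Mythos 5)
Your proposal is correct and follows essentially the same route as the paper's proof: expand $\mathcal{L}_{i,j,k,\ell}=[X^{(2)}]_{i,k}+[Y^{(2)}]_{j,\ell}-2\langle X_{i,k},Y_{j,\ell}\rangle$, contract the first two terms against the marginals $\gamma\mathbb{1}_m=p$ and $\gamma^T\mathbb{1}_n=q$ to get the rank-one terms, and recognize the cross term coordinatewise as $-2\sum_t X_t\gamma Y_t^T$. The bookkeeping points you flag (which index is contracted, where the transposes land) are exactly the ones the paper's computation verifies explicitly.
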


\begin{proof}[Proof of Proposition \ref{formulaHW}] \label{proof_formulaHW}
    First, we can start by writing
    \begin{equation*}
        \begin{aligned}
            \mathcal{L}_{i,j,k,\ell} &= \|x_i\odot x_k-y_j\odot y_\ell\|_2^2 \\
            &= \|X_{i,k}-Y_{j,\ell}\|_2^2 \\
            &= \|X_{i,k}\|_2^2+\|Y_{j,\ell}\|_2^2-2\langle X_{i,k},Y_{j,\ell}\rangle \\
            &= [X^{(2)}]_{i,k}+[Y^{(2)}]_{j,\ell}-2\langle X_{i,k},Y_{j,\ell}\rangle.
        \end{aligned}
    \end{equation*}
    We cannot directly apply proposition 1 from \citep{peyre2016gromov} (as the third term is a scalar product), but by doing the same type of computation, we get
    \begin{equation*}
        \mathcal{L}\otimes \gamma = A+B+C 
    \end{equation*}
    with
    \begin{equation*}
        A_{i,j} = \sum_{k,\ell} [X^{(2)}]_{i,k}\gamma_{k,\ell} = \sum_k [X^{(2)}]_{i,k}\sum_\ell\gamma_{k,\ell} = \sum_k [X^{(2)}]_{i,k}[\gamma\mathbb{1}_m]_{k,1} = [X^{(2)}\gamma\mathbb{1}_m]_{i,1} = [X^{(2)}p]_{i,1}
    \end{equation*}
    \begin{equation*}
        B_{i,j} = \sum_{k,\ell} [Y^{(2)}]_{j,\ell}\gamma_{k,\ell} = \sum_\ell [Y^{(2)}]_{j,\ell}\sum_k \gamma_{k,\ell} = \sum_\ell [Y^{(2)}]_{j,\ell} [\gamma^T\mathbb{1}_n]_{\ell,1} = [Y^{(2)}\gamma^T\mathbb{1}_n]_{j,1} = [Y^{(2)}q]_{j,1}
    \end{equation*}
    and
    \begin{equation*}
        \begin{aligned}
            C_{i,j} = -2\sum_{k,\ell}\langle X_{i,k},Y_{j,\ell}\rangle \gamma_{k,\ell} &= -2\sum_{k,\ell}\sum_{t=1}^d X_{i,k,t}Y_{j,\ell,t}\gamma_{k,\ell} \\ 
            &= -2\sum_{t=1}^d \sum_k [X_t]_{i,k}\sum_\ell [Y_t]_{j,\ell}\gamma_{\ell,k}^T \\
            &= -2 \sum_{t=1}^d \sum_k [X_t]_{i,k}[Y_t\gamma^T]_{j,k} \\
            &= -2\sum_{t=1}^d [X_t(Y_t\gamma^T)^T]_{i,j}.
        \end{aligned}
    \end{equation*}
    Finally, we have
    \begin{equation*}
        \mathcal{L}\otimes \gamma = X^{(2)}p\mathbb{1}_m^T+\mathbb{1}_n q^T (Y^{(2)})^T-2\sum_{t=1}^d X_t\gamma Y_t^T.
    \end{equation*}
\end{proof}

\begin{remark}
    The complexity for computing $\mathcal{L}\otimes \gamma$ is $O(d(n^2m+m^2n))$.
\end{remark}

\begin{remark}
    For the degenerated cost function \eqref{degenerated_cost}, we just need to replace $X$ and $Y$ by $\Tilde{X_t}=A_t^{\frac12}X$ and $\Tilde{Y_t}=A_t^{\frac12}Y$ in the previous proposition.
\end{remark}

To solve this problem numerically, we can use the conditional gradient algorithm (Algorithm 2 in \citep{vayer2019optimal}). This algorithm only requires to compute the gradient 
\begin{equation*}
    \nabla\mathcal{E}(\gamma) = 2(A+B+C) = 2(\mathcal{L}\otimes \gamma)
\end{equation*}
at each step and a classical OT problem. This algorithm is more efficient than solving directly the quadratic problem. Moreover, while it is a non convex problem, it actually converges to a local stationary point \citep{lacoste2016convergence}.

\begin{figure}[H]
    \centering
    \includegraphics[scale=0.45]{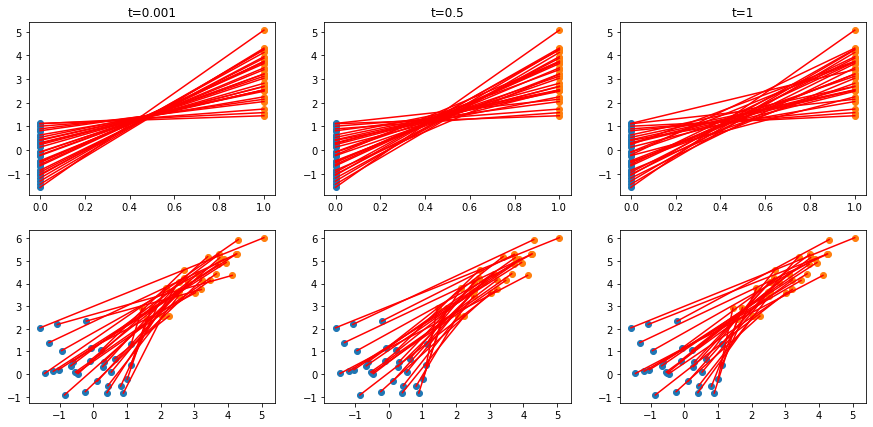}
    \caption{Degenerated Coupling}
    \label{fig:degnerated_coupling}
\end{figure}

On Figure \ref{fig:degnerated_coupling}, we generated 30 points of two gaussian distributions, and computed the optimal coupling of $\mathcal{HW}_t$ for several $t$. These points have the same uniform weight. On the first row, we projected the points on the first coordinate. Note that for discrete points, the Knothe-Rosenblatt coupling comes back to sort the point with respect to the first coordinate if there is no ambiguity (\emph{i.e.} $x_1^{(1)}<\dots<x_n^{(1)}$) as it comes back to perform the optimal transport in one dimension \citep{peyre2019computational}[Remark 2.28]. For our cost, the optimal coupling in 1D can either be the increasing or the decreasing one. We observe well on the first row of figure (\ref{fig:degnerated_coupling}) that the optimal coupling when $t$ is close to 0 corresponds to the ``anti-cdf''.

\end{document}